\documentclass{article} 
\usepackage{iclr2026_conference,times}
\usepackage{adjustbox}

\usepackage{amsmath,amsfonts,bm}

\def\eqref#1{equation~\ref{#1}}

\def\floor#1{\lfloor #1 \rfloor}
\def\1{\bm{1}}

\DeclareMathAlphabet{\mathsfit}{\encodingdefault}{\sfdefault}{m}{sl}
\SetMathAlphabet{\mathsfit}{bold}{\encodingdefault}{\sfdefault}{bx}{n}

\newcommand{\R}{\mathbb{R}}

\DeclareMathOperator*{\argmax}{arg\,max}
\DeclareMathOperator*{\argmin}{arg\,min}

\usepackage{amsthm}
\usepackage{hyperref}
\usepackage{url}

\usepackage{algorithm}
\usepackage{paralist}

\usepackage{algpseudocode}

\newcommand{\nnPertRadius}{\epsilon_p}
\newtheorem{claim}{Claim}

\usepackage{mathtools}
\usepackage{multirow} 

\newtheorem{theorem}{Theorem}
\newtheorem{definition}{Definition}

\newcommand{\mySuff}[2]{\ensuremath{\text{\normalfont suff}(#1,\,\x,\,#2,\,\nnPertRadius)}}

\newtheorem{proposition}{Proposition} 
\newtheorem{lemma}{Lemma}

\usepackage{amssymb} 
\usepackage{amsmath} 

\usepackage[T1]{fontenc}
\usepackage{multirow}

\newcommand{\variable}[1]{\ensuremath{\textbf{#1}}}
\newcommand{\z}{\textbf{z}}
\newcommand{\x}{\variable{x}}
\newcommand{\h}{\variable{h}}

\newcommand{\xx}{\tilde{\variable{x}}}
\newcommand{\y}{\variable{y}}
\newcommand{\nn}{f}

\newcommand{\numProcessors}[0]{\ensuremath{\rho}}

\usepackage{hyperref}

\usepackage{algorithm}
\usepackage{algpseudocode}

\usepackage{thmtools, thm-restate}

\providecommand{\R}{\ensuremath{\mathbb{R}}}
\providecommand{\N}{\ensuremath{\mathbb{N}}}

\usepackage{booktabs}

\usepackage{wrapfig}

\newcommand{\explanation}[0]{\mathcal{S}}
\newcommand{\explanationNot}[0]{\Bar{\explanation}}
\newcommand{\explanationQuery}[2][\explanation]{\ensuremath{\langle #2,\,\x,\,#1,\,\nnPertRadius\rangle}}
\newcommand{\pertBall}[0]{\ensuremath{B_p^{\nnPertRadius}(\x)}}

\usepackage{cleveref}
\usepackage{enumitem}

\crefname{section}{Sec.}{Sec.}
\crefname{subsection}{Sec.}{Sec.}
\crefname{figure}{Fig.}{Fig.}
\crefname{algorithm}{Alg.}{Alg.}
\crefname{table}{Tab.}{Tab.}
\crefname{example}{Example}{Example}
\crefname{definition}{Def.}{Def.}
\crefname{proposition}{Prop.}{Prop.}
\crefname{theorem}{Thm.}{Thm.}
\crefname{lemma}{Lemma}{Lemmas}
\crefname{corollary}{Cor.}{Cor.}
\crefname{assumption}{Assumption}{Assumptions}
\crefname{appendix}{Appendix}{Appendix}
\crefname{equation}{Eq.}{Eq.}

\Crefname{section}{Sec.}{Sec.}
\Crefname{subsection}{Sec.}{Sec.}
\Crefname{figure}{Fig.}{Fig.}
\Crefname{algorithm}{Alg.}{Alg.}
\Crefname{table}{Tab.}{Tab.}
\crefname{example}{Example}{Example}
\Crefname{definition}{Def.}{Def.}
\Crefname{proposition}{Prop.}{Prop.}
\Crefname{theorem}{Thm.}{Thm.}
\Crefname{lemma}{Lemma}{Lemmas}
\Crefname{corollary}{Cor.}{Cor.}
\Crefname{assumption}{Assumption}{Assumptions}
\Crefname{appendix}{Appendix}{Appendix}
\Crefname{equation}{Eq.}{Eq.}

\usepackage[colors,tikz]{cora-macs}

\tikzset{
  plotbox/.style = {inner sep=2pt, draw=CORAcolorBlue, rounded corners=2pt, fill=white, line width=0.5pt},
  plotboxSmall/.style = {plotbox, inner sep = 4pt},
  featlabel/.style = {font=\small, inner sep=1pt},
  outnode/.style  = {circle, draw, fill=white, thick, inner sep=1pt, minimum size=8mm},
  arrow/.style    = {->, thick},
  wlabel/.style   = {midway, fill=white, inner sep=1pt, font=\scriptsize},
  neuron/.style={circle,fill=CORAcolorBlue!75,draw=CORAcolorBlue,thick,inner sep={0.15cm}},
  merge/.style    = {neuron, minimum size=9mm, font=\small,fill=white},
}
\usetikzlibrary{positioning,calc}

\usepackage{booktabs} 
\usepackage{amstext} 
\usepackage{array} 
\newcolumntype{L}{>{$}l<{$}} 
\newcolumntype{C}{>{$}c<{$}} 
\newcolumntype{R}{>{$}r<{$}} 

\title{Provably Explaining Neural Additive Models}

\newcommand{\authorspace}[0]{$\qquad$}

\author{
Shahaf Bassan$^{1}$\thanks{Equal contribution.}\authorspace Yizhak Yisrael Elboher$^{1*}$\authorspace Tobias Ladner$^{2*}$ \\ 
\textbf{\ Volkan Şahin$^{2}$\authorspace Jan Křetínský$^{3}$\authorspace Matthias Althoff$^{2}$\authorspace Guy Katz$^{1}$} \\
$^{1}$ Hebrew University of Jerusalem, Israel, \\ 
$^{2}$ Technical University of Munich, Germany, \\
$^{3}$ Masaryk University, Czech Republic \\
\hphantom{$^{3}$ }Contact: \texttt{shahaf.bassan@mail.huji.ac.il,} \\
\ \ \ \texttt{yizhak.elboher@mail.huji.ac.il, tobias.ladner@tum.de}
}

\usepackage{amssymb} 
\usepackage{amsmath} 
\usepackage[T1]{fontenc}

\usepackage{placeins} 

\iclrfinalcopy 

\begin{document}

\maketitle

\begin{abstract}
Despite significant progress in post-hoc explanation methods for neural networks, many remain heuristic and lack provable guarantees. A key approach for obtaining explanations with provable guarantees is by identifying a \emph{cardinally-minimal} subset of input features which by itself is \emph{provably sufficient} to determine the prediction. However, for standard neural networks, this task is often computationally infeasible, as it demands a worst-case \emph{exponential} number of verification queries in the number of input features, each of which is NP-hard.
 In this work, we show that for Neural Additive Models (NAMs), a recent and more interpretable neural network family, we can \emph{efficiently} generate explanations with such guarantees. We present a new model-specific algorithm for NAMs that generates provably cardinally-minimal explanations using only a \emph{logarithmic} number of verification queries 
 in the number of input features, after a parallelized preprocessing step with logarithmic runtime in the required precision is applied to each small univariate NAM component.
 Our algorithm not only makes the task of obtaining cardinally-minimal explanations feasible, but even outperforms existing algorithms designed to find the relaxed variant of \emph{subset-minimal} explanations --- which may be larger and less informative but easier to compute --- despite our algorithm solving a much more difficult task.
 Our experiments demonstrate that, compared to previous algorithms, our approach provides provably smaller explanations than existing works and substantially reduces the computation time. Moreover, we show that our generated provable explanations offer benefits that are unattainable by standard sampling-based techniques typically used to interpret NAMs.
\end{abstract}

\section{Introduction}

Various methods have been proposed to explain neural network predictions. Classic additive feature attribution approaches --- such as LIME~\citep{ribeiro2016should}, SHAP~\citep{lundberg2017unified}, and IG~\citep{sundararajan2017axiomatic} --- assume near-linear behavior in a local region around the instance. Other methods, like Anchors~\citep{ribeiro2018anchors} and SIS~\citep{carter2019made}, aim to identify a (nearly) sufficient subset of input features --- referred to here as an \emph{explanation} --- that determines the prediction. While Anchors and SIS rely on probabilistic sampling and lack provable sufficiency guarantees, recent work has shown that neural network verification tools can serve as a backbone for generating provably sufficient explanations~\citep{wu2024verix, bassan2023towards, la2021guaranteed, izza2024distance}, making them particularly valuable in safety-critical domains~\citep{marques2022delivering}. In this context, \emph{smaller} sufficient explanations are typically preferred, as \emph{minimality} is considered an additional key interpretability property~\citep{ignatiev2019abduction, carter2019made, darwiche2020reasons, ribeiro2018anchors, barcelo2020model}.

However, while such explanations are highly desirable, generating them for standard neural networks is notoriously computationally challenging~\citep{barcelo2020model}. In particular, obtaining \emph{cardinally-minimal} explanations, requires, in the worst case, an \emph{exponential} number of neural network verification queries~\citep{barcelo2020model, ignatiev2019abduction, bassan2023towards}, each being NP-hard~\citep{katz2017reluplex, salzer2021reachability}, rendering the task infeasible even for toy examples~\citep{ignatiev2019abduction}. Consequently, existing methods focus on \emph{subset-minimal} explanations~\citep{wu2024verix, bassan2023towards, bassan2025explaining}, which are typically suboptimal in size, potentially large, and thus less informative than their cardinally minimal counterparts. Moreover, even these approaches remain limited to relatively small models, as they still require a linear number of verification queries~\citep{wu2024verix, bassan2025explain}. 

\textbf{Our contributions.} Since computing cardinally-minimal sufficient explanations is provably intractable for general neural networks, a natural question arises: \emph{Can certain neural architectures with more interpretable structures enable efficient computation of such explanations?} Although this task remains challenging even for simplified models such as \emph{binarized} neural networks or those with a \emph{single} hidden layer~\citep{adolfi2025the, BaMoPeSu20, salzer2021reachability}, we show in this work that it becomes tractable for a different class of neural architectures previously unexplored in this context: \emph{Neural Additive Models (NAMs)}~\citep{agarwal2021neural}. NAMs are a widely adopted architecture that has received significant attention in recent years~\citep{agarwal2021neural, radenovic2022neural, bechler2024intelligible, kim2024generalizing, zhang2024gaussian}. By enforcing an additive structure over input features, NAMs support interpretable, per-feature contributions while maintaining the expressive capabilities of neural networks.

\textbf{Our NAM-specific algorithm vs. previous algorithms.} Unlike existing algorithms that require a linear number of verification queries in the number of input features for subset-minimal explanations --- or an exponential number for cardinally-minimal ones --- our approach exploits the additive structure of NAMs to compute provably cardinally minimal explanation subsets using only a \emph{logarithmic} number of verification queries. 
This is realized by introducing a highly parallelized preprocessing step --- each operating independently on a small univariate component of the NAM --- enabling a substantial overall efficiency gain. As a result, our method yields explanations that are both provably sufficient and cardinally-minimal, far more efficiently than standard algorithms typically applied to neural networks. Experiments using state-of-the-art verifiers confirm that our algorithm generates explanations substantially faster and produces notably smaller subsets than prior approaches.

\textbf{Our provable NAM explanations vs. standard sampling interpretations.} NAMs are typically interpreted by sampling input points and visualizing the behavior of each univariate function~\citep{agarwal2021neural,radenovic2022neural}. Thanks to their additive structure, these models allow per-feature contributions to be examined individually. We show that purely sampling-based methods can yield misleading interpretations, whereas our provably sufficient explanations avoid this by design, underscoring their importance in safety-critical domains.

Overall, our work advances explanations with provable guarantees in two ways:
\begin{inparaenum}[(i)]
\item it introduces the first method for certifiable explanations in NAMs, boosting their trustworthiness in safety-critical settings, and
\item by \emph{efficiently} generating provable explanations, NAMs --- unlike general neural networks --- open a path toward interpretable architectures where such guarantees can be derived at scale. A central challenge ahead is to design models that balance high expressivity and accuracy with efficient provable explanations, and we view our work as a significant first step in that direction.
\end{inparaenum}


\section{Preliminaries}\label{sec:preliminaries}

\subsection{Notation}
We denote scalars with lower-case letters, vectors with bold lower-case letters, 
and sets in calligraphic font.
The $i$-th entry of a vector $\x$ is denoted by $\x_{i}$.
For $n\in\N$, let $[n]\coloneqq\{1,\ldots,n\}$.

\subsection{Neural Network Verification} \label{sec:neural-network-verification}
 Neural network verification aims to verify certain input-output relationships of neural networks. For a neural network $f\colon\mathbb{R}^n\to\mathbb{R}^c$, 
 a neural network verifier \emph{formally proves} that there does not exist an input $\x\in\mathbb{R}^n$ where both an input specification $\psi_\text{in}(\x)$, and an \emph{unsafe} output specification $\psi_\text{out}(f(\x))$ hold at the same time. Although this problem is NP-hard \citep{katz2017reluplex,salzer2021reachability}, these tools have seen rapid scalability improvements in recent years~\citep{kaulen20256th}. 

\subsection{Neural Additive Models (NAMs).}
\label{sec:nams}

A neural additive model (NAM) $f$ for a \emph{regression} task, where $f\colon\mathbb{R}^n\to\mathbb{R}$, is defined as:
\begin{equation}
f(\x):= \beta_0+ \sum_{i=1}^k f_i(\x_{i}),
\end{equation}
where each $f_i\colon\mathbb{R}\to\mathbb{R}$ is a univariate neural network, $\beta_0 \in \mathbb{R}$ is the intercept, and $f$ denotes the full NAM. For \emph{binary classification}, we assume that an additional step function is applied:
$f(\x) := \text{step}(\beta_0 + \sum_{i=1}^k f_i(\x_{i}))$, where $\text{step}(z) = 1$ if $z \geq 0$ and $\text{step}(z)=0$ otherwise.
In the \emph{multi-class} setting with $c$ classes, the logit for class $j \in [c]$ is given by $f_j(\x) := \beta_{j,0} + \sum_{i=1}^k f_{j,i}(\x_{i})$, and the model predicts $f(\x) := \argmax_{j\in[c]} f_{j}(\x)$.

In this work, we develop algorithms for all three settings --- \begin{inparaenum}[(i)] \item regression, \item binary classification, and \item multi-class classification \end{inparaenum} --- but for clarity, we focus the main presentation on the binary classification case, with extensions for the other settings provided in Appendix~\ref{app:additional-settings}.

\section{Provably Sufficient Explanations for Neural Networks}

We begin by reviewing standard algorithms developed for \emph{general} neural networks that identify provably minimal sufficient explanations. We note that we focus on \emph{post-hoc} sufficient explanations for a specific input $\x \in \mathbb{R}^n$, i.e., for the output $f(\x)$, and post-hoc indicates that the explanation is generated after the model has been trained.

\textbf{Sufficient Explanations.} A common method for interpreting the decisions of classifiers involves identifying subsets of input features $\explanation\subseteq[n]$ such that fixing these features to their specific values guarantees the prediction remains unchanged. 
Specifically, these techniques guarantee that the classification result remains consistent across \emph{any} potential assignment within the complementary set $\explanationNot\coloneqq [n]\setminus\explanation$.
While in the classic setting features in the complementary set $\explanationNot$ are allowed to take on any possible feature values~\citep{ignatiev2019abduction, darwiche2020reasons, bassan2023towards}, a more feasible and generalizable version restricts the possible assignments for $\explanationNot$ to a bounded $\nnPertRadius$-region~\citep{wu2024verix, la2021guaranteed, izza2024distance}. We use $(\x_\explanation;\xx_{\explanationNot})\in\mathbb{R}^n$ to denote an assignment where the features of $\explanation$ are set to the values of the vector $\x\in\mathbb{R}^n$ and the features of $\explanationNot$ are set to the values of another vector $\xx\in\mathbb{R}^n$ within the $\nnPertRadius$-region. 

\begin{definition}[Sufficient Explanation]
\label{def:suff-explanation}
Given a neural network $f$, an input $\x\in\mathbb{R}^n$, a perturbation radius $\nnPertRadius\in\R_+$, and a subset $\explanation\subseteq [n]$, we say that $\explanation$ is a sufficient explanation concerning the query $\explanationQuery{\nn}$ on an $\ell_p$-norm ball $B_p^{\nnPertRadius}$ of radius $\nnPertRadius\in\R_+$ around $\x$ iff it holds that:
\begin{align*}
\forall \xx\in\pertBall\colon \quad \nn(\x_{\explanation};\xx_{\explanationNot})= \nn(\x)\text{,}
\qquad \text{with } \pertBall\coloneqq \{\xx\in \mathbb{R}^{n}\mid \|\x-\xx\|_p\leq \nnPertRadius\}\text{.}
\end{align*}
We define $\mySuff{\nn}{\explanation}=1$ iff $\explanation$ constitutes a sufficient explanation with respect to the query $\explanationQuery{\nn}$, and $\mySuff{\nn}{\explanation}=0$ otherwise.
\end{definition}

\Cref{def:suff-explanation} can be formulated as a neural network verification query (\cref{sec:neural-network-verification}).
This method has been proposed by prior studies, which employed these techniques to validate the sufficiency of specific subsets~\citep{wu2024verix, bassan2023towards, la2021guaranteed, izza2024distance}.

\textbf{Minimal Explanations.} Evidently, selecting the entire input set as the subset $\explanation$, that is, setting $\explanation \coloneqq [n]$, yields a sufficient explanation. Nonetheless, the prevailing consensus in the literature is that smaller subsets tend to be more informative or meaningful~\citep{ribeiro2018anchors, carter2019made, barcelo2020model, ignatiev2019abduction}. Consequently, there is considerable interest in identifying subsets that are not only sufficient but also satisfy some notion of minimality. We focus on two specific minimality criteria: cardinality minimality and subset minimality.

\begin{definition}[Minimal Sufficient Explanations] 
	\label{def:min-suff-explanation}
Given a neural network $\nn$, an input $\x\in\mathbb{R}^n$, and a subset $\explanation\subseteq [n]$ that is a sufficient explanation concerning $\explanationQuery{\nn}$ on $B_p^{\nnPertRadius}$ of radius $\nnPertRadius$, then:
    \begin{enumerate}
        \item We say that $\explanation$ is a \underline{cardinally-minimal} sufficient explanation~\citep{BaMoPeSu20, bassan2024local} concerning $\explanationQuery{\nn}$ iff there does not exist a sufficient explanation $\explanation'$ concerning $\explanationQuery[\explanation']{\nn}$ with $|\explanation'|<|\explanation|$).
        \item We say that $\explanation$ is a \underline{subset-minimal} sufficient explanation~\citep{arenas2022computing, ignatiev2019abduction} concerning $\explanationQuery{\nn}$ iff any $\explanation'\subset \explanation$ is not a sufficient explanation concerning $\explanationQuery[\explanation']{\nn}$.
    \end{enumerate}
\end{definition}

Minimal sufficient explanations can also be determined using neural network verifiers. This process requires executing multiple verification queries to ensure the minimality of the subset. \Cref{alg:original_algorithm} outlines such a procedure~\citep{ignatiev2019abduction, wu2024verix, bassan2023towards}. The algorithm begins with an explanation $\explanation$ encompassing the entire feature set $[n]$ and iteratively tries to exclude a feature $i$ from $\explanation$, each time checking whether $\explanation\setminus \{i\}$ remains sufficient. If $\explanation\setminus \{i\}$ is still sufficient, feature $i$ is removed; otherwise, it is retained in the explanation. This process is repeated until a subset-minimal sufficient explanation is obtained. Such a subset-minimal explanation of a NAM is visualized in \cref{fig:explanation-local-vs-global} along with a cardinally-minimal explanation. 
\begin{algorithm}
	\renewcommand{\algorithmicfor}{\textbf{for each}}
	\textbf{Input:} Neural network $\nn\colon\R^n\to\R^c$, input $\x\in\R^n$, perturbation radius $\nnPertRadius\in\R_+$
	\caption{Greedy Subset Minimal Explanation Search}
 	\label{alg:original_algorithm}
	\begin{algorithmic}[1]
		\State $\explanation \gets [n]$ 
		\For {feature $i \in [n]$} 
            \Comment{$\mySuff{\nn}{\explanation}$ holds}
			\If{$\mySuff{\nn}{\explanation\setminus\{i\}}$} 
				\State $\explanation \gets \explanation\setminus \{i\}$
			\EndIf
		\renewcommand{\algorithmicfor}{\textbf{for}}
		\EndFor
		\State \Return $\explanation$ \Comment{$\explanation$ is a \emph{subset-minimal} explanation concerning $\explanationQuery{\nn}$}
	\end{algorithmic}
    \label{alg:greedy-min-explanation}
\end{algorithm}

\begin{figure}
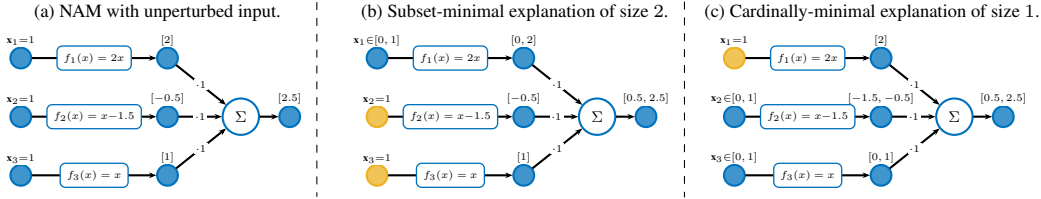

    \centering
    \includetikz[noexport]{./figures/running-example/running-example}
    \caption{Comparison of a subset-minimal explanation and a cardinally-minimal explanation of a NAM. Both explanations (in yellow) in (b) and (c) are minimal, as perturbing any additional feature can lead the overall output to become negative.}
    \label{fig:explanation-local-vs-global}
\end{figure}

\section{Provably Cardinally-Minimal Sufficient Explanations for NAMs}

While generating cardinally-minimal sufficient explanations is computationally expensive
for general neural networks, we present a highly efficient algorithm for NAMs in this section. NAMs
are generally considered very interpretable due to the univariate functions, but sufficient guarantees
can only be obtained through formal verification to avoid misleading conclusions (\cref{fig:comparison-sampling}). Our
algorithm consists of two main stages: \begin{inparaenum}[(i)] \item As a preprocessing step, we compute an “importance”
interval for each feature $i\in[n]$ based on the univariate functions $f_i$
to obtain a total ordering.
\item This allows us to perform a binary search over the sorted intervals to identify the cardinally-minimal sufficient explanation.\end{inparaenum}


\begin{figure}[!h]
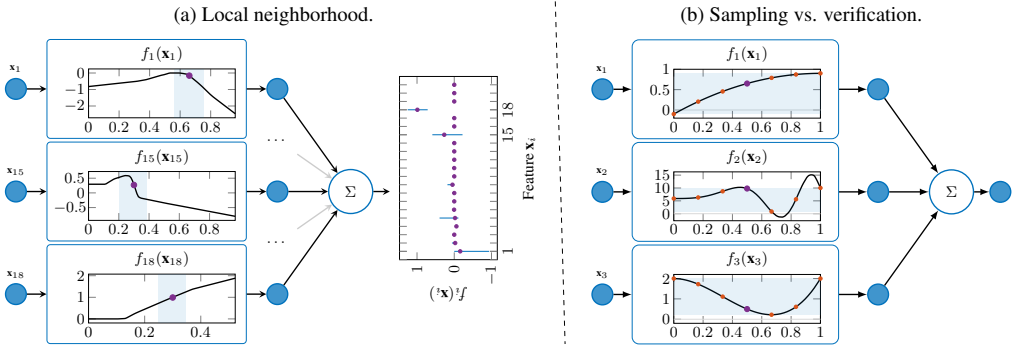

    \hspace{-0.6cm}
    \scalebox{0.65}{\includetikz[noexport]{./figures/qualitative/running-example-contribution}}%
    \hspace{-0.6cm} 
    \scalebox{0.65}{\includetikz[noexport]{./figures/qualitative/running-example-range}}%
    \caption{Sufficient explanations in NAMs: (a) Users must examine the neighborhood of an input for proper interpretation; e.g, users might wrongly conclude that feature $1$ from the FICO HELOC dataset alone determines a positive output, but small changes in features $15$ or $18$ can flip the classification. (b) Outputs of continuous neighborhoods can be misleading if not verified, since sampling may miss extrema. For example, users might wrongly believe based on sampling that all features but feature $1$ only have positive outputs and thus fixing feature $1$ will always lead to a positive output of the entire NAM; however, the output of feature $2$ can become so negative to flip the classification.}
    \label{fig:comparison-sampling}
\end{figure}


    


All full proofs are provided in Appendix~\ref{app:proofs}. 

\subsection{Stage 1 --- Parallel Interval Importance Sorting}
\label{sec:importance-sorting}

This subsection outlines the first stage of our algorithm, which involves determining an ordering of the feature importance to be used in the subsequent phase. We adopt a notion of “importance’’ similar to that of~\citep{BaMoPeSu20}, originally used for binary linear threshold models. Specifically, we say that feature $i\in[n]$ is more ``important'' than feature $j\neq i$ if perturbing feature $i$ causes a larger change in the final prediction $f(\x)$ than perturbing feature $j$. In particular, we measure the derivation towards the decision boundary to flip the classification. Thanks to the additive structure of the NAM, this analysis can be conducted independently for each univariate component $f_i\colon \R\rightarrow\R$, allowing for a direct comparison of their individual importance.

Without loss of generality, let us assume that our binary classifier predicts $f(\x) = 1$, meaning that $\beta_0 + \sum_{i=1}^k f_i(\x_{i}) \geq 0$ (the case $f(\x) = 0$ follows symmetrically). In this setting, we perturb the input to each univariate component $f_i$ individually and measure how much the overall prediction decreases, i.e., towards the decision boundary. Therefore, for each $i \in [n]$, we want to find
\begin{equation}
    \x_{i}^* = \argmin_{\tilde{\x}_{i} \in \mathcal{B}_p^{\epsilon_p}(\x_{i})} f_i(\tilde{\x}_{i}).
\end{equation}
However, in contrast to the binary linear threshold models studied in~\citep{barcelo2020model}, \emph{exactly} determining these minimal values is usually computationally infeasible~\cite{katz2017reluplex}.
Fortunately, we do not need to find the exact minimum but only bounds $[l_i, u_i] \subset \R$ such that $l_i\leq f_i(\x_{i}^*)\leq u_i$, with sufficient precision to establish a total order over the input features.
The procedure is outlined in  \cref{alg:parallel_sorting_part_1}.

\begin{algorithm}[h!]
	\renewcommand{\algorithmicfor}{\textbf{for each}}
	\textbf{Input:} NAM $\nn$, input $\x\in\R^n$, perturbation radius $\nnPertRadius\in\R_+$
	\caption{Parallel Interval Importance Sorting}
 	\label{alg:parallel_sorting_part_1}
	\begin{algorithmic}[1]
		\For {feature $i \in [n]$ \emph{in parallel}} 
\State Extract initial bounds $\alpha_i,\beta_i$ for $f_i(\tilde{\x}_i)$ such that $\tilde{\x}_{i} \in  \mathcal{B}_p^{\epsilon_p}(\x_{i})$
\State $l_i\gets \alpha_i$, $u_i\gets \beta_i$
    \While{True}
        \State $m_i \gets \frac{l_i+u_i}{2}$
        \If{verify( $\forall \tilde{\x}_{i} \in  \mathcal{B}_p^{\epsilon_p}(\x_{i}), \ f_i(\tilde{\x}_{i})\geq m_i$ )}
        \State $l_i \gets m_i$
        \Else
        \State $u_i \gets m_i$
        \EndIf
        \State $\Delta l_i\gets f_i(\x_{i})-l_i$ \ ; \ $\Delta u_i\gets f_i(\x_{i})-u_i$
        \If{for all $j\neq i$ it holds that: $\Delta u_i\geq \Delta l_j$ or $\Delta u_j\geq \Delta l_i$}
        \State \textbf{break} 
    \EndIf
    \EndWhile		\renewcommand{\algorithmicfor}{\textbf{for}}
\EndFor \\
\Return $\arg\text{sort}([({\Delta l}_1,{\Delta u}_1),\, \ldots,\, ({\Delta l}_n,{\Delta u}_n)])$ in ascending order
\end{algorithmic}
    \label{alg:greedy-min-explanation}
\end{algorithm}

 \cref{alg:parallel_sorting_part_1} operates in parallel across each univariate component of the NAM. For each component, the algorithm begins by issuing an incomplete verification query to obtain an initial lower bound $f_i(\tilde{\x}_{i})$, denoted by $l_i$ and $u_i$, evaluated over the domain $\tilde{\x}_{i} \in \mathcal{B}_p^{\epsilon_p}(\x_{i})$. Subsequently, each thread independently conducts a binary search using verification queries to iteratively refine $[l_i,u_i]$, narrowing in on the true lower bound $f_i(\x_{i}^*)$. 
To quantify the deviation of the unperturbed output $f_i(\x_{i})$ from these bounds, we define the relative differences $\Delta l_i$ and $\Delta u_i$. After each iteration within every parallel thread, the algorithm evaluates whether a full, non-overlapping sorting of all pairs $(\Delta l_1, \Delta u_1), \ldots, (\Delta l_n, \Delta u_n)$ is possible. If such an ordering cannot yet be achieved, the bounds get iteratively refined. Finally, the ordering according to the determined importance is returned.

While the initially computed bounds $\alpha_i,\beta_i$ enclose the entire output domain of each component $i\in[n]$, the binary search narrows the bounds down around the minimum $f_i(\x_{i}^*)$ (or maximum for $f(\x) = 0$); thus, no longer covering the entire domain. 
As $f_i(\x_{i}^*)$ is just a scalar, this total order can be determined using a complete verifier.

This non-overlapping ordering provides a rigorous measure for the impact of perturbing a single feature through its corresponding component function $f_i(\x_{i})$. This sorting forms the foundation for the next phase of the algorithm, which identifies a provably cardinally-minimal sufficient subset. The following proposition formalizes this first step of the argument:


\begin{proposition}
    Given a NAM $f$, an input $\x\in\R^n$ and a perturbation radius $\epsilon_p\in\R_+$, let \cref{alg:parallel_sorting_part_1} return a total list order over the input features according to their importance.
    Then, the following holds: For any sufficient explanation $\explanation$ that includes feature $i$, and for any feature $j \not\in \explanation$ such that $i \prec j$ in the list ordering, the set $\explanation \setminus \{i\} \cup \{j\}$ is also a sufficient explanation.
\end{proposition}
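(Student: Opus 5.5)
We reduce sufficiency to a single scalar inequality by exploiting the additive structure of the NAM, and then compare the two sets via a per-feature exchange argument. We treat the case $f(\x)=1$; the case $f(\x)=0$ is symmetric, using maxima instead of minima.

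\textbf{Step 1: sufficiency as a scalar condition.} For a perturbation ball that decomposes coordinatewise (e.g.\ $p=\infty$, under which each $\mathcal{B}_p^{\epsilon_p}(\x_i)$ is the interval of radius $\epsilon_p$ around $\x_i$), the features in $\explanationNot$ can be perturbed independently. Each $f_i$ depends only on $\x_i$, so the worst case is obtained by minimizing each free component separately. Writing $\delta_i \coloneqq f_i(\x_i) - f_i(\x_i^*) \ge 0$, we obtain
\begin{equation*}
\mySuff{\nn}{\explanation}=1 \iff \beta_0+\sum_{i=1}^n f_i(\x_i) - \sum_{i\in\explanationNot}\delta_i \ge 0 .
\end{equation*}
The "if" direction is immediate. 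For "only if", we choose $\xx_i=\x_i^*$ for every $i\in\explanationNot$; this attains the minimum. Hence sufficiency of $\explanation$ holds exactly when the total drop $\sum_{i\in\explanationNot}\delta_i$ is at most the margin $\beta_0+\sum_i f_i(\x_i)$.

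\textbf{Step 2: the ordering from \cref{alg:parallel_sorting_part_1} respects the $\delta$'s.} The binary search keeps the invariant $l_i\le f_i(\x_i^*)\le u_i$. The verified branch sets $l_i\gets m_i$ only when $f_i\ge m_i$ holds on the whole ball. The other branch sets $u_i\gets m_i$ when the check fails, which means $\min f_i<m_i$; for a complete verifier this is exact. Consequently $\Delta u_i\le \delta_i\le \Delta l_i$. The loop terminates only when the intervals $[\Delta u_i,\Delta l_i]$ are pairwise non-overlapping (up to shared endpoints). So $i\prec j$ in the ascending sort means $\Delta l_i\le \Delta u_j$, and therefore $\delta_i\le\delta_j$. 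Here we read "importance" in the direction the statement needs: the feature that is kept ($i$) is swapped for a feature $j$ later in the list, and $j$ must be at least as important.

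\textbf{Step 3: exchange argument.} Set $\explanation'=\explanation\setminus\{i\}\cup\{j\}$, so that $\explanationNot'=\explanationNot\setminus\{j\}\cup\{i\}$. Then
\begin{equation*}
\sum_{k\in\explanationNot'}\delta_k=\sum_{k\in\explanationNot}\delta_k-\delta_j+\delta_i\le\sum_{k\in\explanationNot}\delta_k .
\end{equation*}
By Step 1, the margin condition for $\explanation$ therefore implies the margin condition for $\explanation'$, so $\explanation'$ is sufficient.

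\textbf{Main obstacle.} The delicate part is Step 2, namely matching the direction of $\prec$ with the ascending sort of the $(\Delta l,\Delta u)$ pairs. Two further points need care. First, the verifier answers must be exact, so that the invariant $l_i\le f_i(\x_i^*)\le u_i$ really holds. Second, ties or touching intervals must still give $\delta_i\le\delta_j$ rather than a strict order; the non-strict inequality is all that Step 3 requires. Step 1 also relies on the coordinatewise decomposition of the ball. For a general $\ell_p$ norm with $p<\infty$, the per-feature minima cannot all be attained simultaneously. In that case we would assume, as the algorithm implicitly does, that the perturbation domain is a product of the per-feature intervals $\mathcal{B}_p^{\epsilon_p}(\x_i)$.
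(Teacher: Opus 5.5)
Your proof is correct and takes essentially the paper's route: additivity reduces sufficiency to comparing the margin with the sum of the worst-case per-feature drops over $\explanationNot$, and the swap changes that sum by $\delta_i-\delta_j\le 0$. Your version is tighter than the paper's write-up in two ways: the paper manipulates sums of the endpoints $\hat{\Delta l}_t,\hat{\Delta u}_t$ (with sign slips) and infers a comparison of true values from shifted bounds, whereas you read off $\delta_i\le\delta_j$ directly from the separated intervals; and you make explicit the product-ball ($\ell_\infty$) assumption behind $\min\sum_t=\sum_t\min$, which the paper uses silently and which is genuinely needed, since for $\ell_1$ balls the swap property can fail.
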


\textbf{Complexity.} The complexity of  \cref{alg:parallel_sorting_part_1} is governed by the use of $\numProcessors$ parallel processors, where each processor independently carries out a binary search. This binary search iteratively partitions based on the $\Delta u_{i}$ and $\Delta l_{i}$ bounds and terminates once the bounds of two distinct univariate components no longer overlap. Given the initial gap between the upper and lower bounds, $\alpha_i - f_i(\x_{i})$, for each component $f_i$, and the precision for component $f_i$ defined by its minimal separation from the adjacent features in the sorted ordering --- namely, $\xi_i:=\min\{\lvert\hat{\Delta l}_{i+1}-\hat{\Delta u}_{i}\rvert,\lvert\hat{\Delta l}_{i}-\hat{\Delta u}_{i-1}\rvert\}$, with $\hat{\Delta l}_{\square}, \hat{\Delta u}_{\square}$ denoting the bounds in the last iteration.
We can now prove that the number of neural network verifier calls is bounded by a (parallelized) \emph{logarithmic} term, as formalized in the following proposition.
Limitations and optimizations are further discussed in Appendix~\ref{app:importance-sorting}.

\begin{proposition}
\label{prop:part1-complexity}
Given $\numProcessors$ parallelized processors,  \cref{alg:parallel_sorting_part_1}, performs an overall number of $T_p(n)=\mathcal{O}\Big(\big(\frac{n}{p}\big)\emph{log}\big(\max_{i \in [n]}(\frac{\beta_i-\alpha_i}{\xi_i}\big)\Big)\xrightarrow[p \to n]{}\mathcal{O}\Big(\emph{log}\big(\max_{i \in [n]}(\frac{\beta_i-\alpha_i}{\xi_i}\big)\Big)$ calls to the neural network verifier, each on a univariate component $f_i(\cdot)$, where $\xi_i:=\min\{\lvert\hat{\Delta l}_{i+1}-\hat{\Delta u}_{i}\rvert,\lvert\hat{\Delta l}_{i}-\hat{\Delta u}_{i-1}\rvert\}$.
\end{proposition}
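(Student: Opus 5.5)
The plan is to bound the number of loop iterations of a single thread of \cref{alg:parallel_sorting_part_1}, and then combine this per-thread bound with the scheduling of the $n$ independent threads on $\numProcessors$ processors. Every loop iteration issues exactly one verification query, namely line 6, and this query involves only the univariate network $f_i$ over the one-dimensional domain $\mathcal{B}_p^{\epsilon_p}(\x_i)$. Hence counting iterations is the same as counting verifier calls, each on a univariate component, as the statement claims.

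\textbf{Step 1 (invariant and interval halving).} I will show by induction that, throughout the loop of thread $i$, we have $l_i\le f_i(\x_i^*)\le u_i$. Initially this holds because $[\alpha_i,\beta_i]$ encloses the range of $f_i$ on the ball. Afterwards, a successful query certifies $\min f_i\ge m_i$, and a failing query (with a complete verifier) exhibits a point with $f_i<m_i$. Each iteration replaces $[l_i,u_i]$ by one of its halves, so after $t$ iterations
\[
u_i-l_i=(\beta_i-\alpha_i)\,2^{-t}.
\]
Since $\Delta l_i-\Delta u_i=u_i-l_i$, the width of the interval $[\Delta u_i,\Delta l_i]$ also halves each round.

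\textbf{Step 2 (termination once the width is below the separation).} I will argue that once thread $i$ reaches a width smaller than $\xi_i$, its break condition (line 11) holds against all $j\neq i$. The quantity $\xi_i$ is defined from the final bounds as the gap to the neighbouring features in the sorted order. I will therefore use the following fact: the final intervals are pairwise disjoint and are separated by at least $\xi_i$ on each side of $i$. Because the nested intervals of thread $i$ all contain the final interval $[\hat{\Delta u}_i,\hat{\Delta l}_i]$, an interval of width $<\xi_i$ cannot reach past the neighbours' final intervals. Hence it is disjoint from every other feature's interval, and the break fires. This gives at most $t_i\le\lceil\log_2((\beta_i-\alpha_i)/\xi_i)\rceil+1$ iterations for thread $i$. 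I would handle ties (features with equal $f_i(\x_i^*)$, i.e. $\xi_i=0$) by assuming, as the statement implicitly does, that $\xi_i>0$.

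\textbf{Step 3 (parallel scheduling).} Distribute the $n$ threads over $\numProcessors$ processors, giving $\lceil n/\numProcessors\rceil$ threads per processor. Each processor then issues at most
\[
\lceil n/\numProcessors\rceil\cdot\max_i t_i=\mathcal{O}\big((n/\numProcessors)\log\max_i((\beta_i-\alpha_i)/\xi_i)\big)
\]
sequential calls. When $\numProcessors\to n$, each processor holds one thread, and the bound collapses to $\mathcal{O}(\log\max_i((\beta_i-\alpha_i)/\xi_i))$.

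\textbf{Main obstacle.} I expect Step 2 to be the hard part. The break condition of thread $i$ depends on the current, still-shrinking intervals of the other threads, which may be much wider than their final ones. The argument that width below $\xi_i$ suffices therefore needs care. I would first try to resolve this by interpreting $\xi_i$ as measured at termination, when all threads have stopped, together with a synchronous-rounds view in which the slowest thread dictates the count anyway. This view makes the bound in terms of $\max_i$ natural, since all threads run until the last pair separates.
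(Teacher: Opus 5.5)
Your outline is the paper's own proof: count halvings per feature until the width drops below $\xi_i$, take the maximum over features, and spread the $n\cdot k_{\max}$ queries over $\numProcessors$ processors. The paper simply asserts $(\beta_i-\alpha_i)/2^{k_i}\le\xi_i$ per feature and never addresses the cross-thread dependence you flag. Your Step 2 does not resolve it either, and the per-thread bound $t_i\le\lceil\log_2((\beta_i-\alpha_i)/\xi_i)\rceil+1$ you aim for is in fact false. The test on line 11 compares thread $i$'s interval with the other threads' \emph{current} intervals, so knowing that it misses their \emph{final} intervals proves nothing. Suppose $[\alpha_i,\beta_i]$ is already much narrower than $\xi_i$, while a neighbour $j$ starts with a huge $[\alpha_j,\beta_j]$ whose interval $[\Delta u_j,\Delta l_j]$ covers $i$'s. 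Then thread $i$ must keep halving until $j$'s interval has shrunk to roughly the distance between the two true values. That can take $\Theta(\log((\beta_j-\alpha_j)/\xi_i))$ rounds, however small $\beta_i-\alpha_i$ is. In a genuinely asynchronous run, its number of calls is not bounded by the quantities in the statement at all, since it can keep querying while $j$ is stuck on one expensive call. So only a bound on the maximum can hold, it needs an argument coupling the threads, and the synchronous-rounds model is a necessity rather than a convenience. The same applies to Step 3: the $\lceil n/\numProcessors\rceil$ threads on one processor must be interleaved round-robin. A thread run to completion before the others start cannot break while they still hold their initial intervals.

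The missing argument is short. Write $w_k:=\beta_k-\alpha_k$, let $I_k:=[\hat{\Delta u}_k,\hat{\Delta l}_k]$ be the final interval of feature $k$, and let $R$ be the number of rounds; assume $R\ge 2$, since otherwise there is nothing to prove.
(a) When a thread breaks, its interval is separated from every current interval in the sense of line 11. All intervals are nested from then on, so it stays separated. Hence any overlap is between two threads that are both still active.
(b) Some thread $i$ failed its test after round $R-1$, so its interval then met that of an active thread $j$. These two intervals have widths $w_i\,2^{-(R-1)}$ and $w_j\,2^{-(R-1)}$. Since $I_i$ and $I_j$ lie inside them, the gap between $I_i$ and $I_j$ is at most $(w_i+w_j)\,2^{-(R-1)}$.
(c) The final intervals are disjoint and sorted, so the sorted neighbour of $i$ on $j$'s side is no farther from $I_i$ than $I_j$ is. Thus $\xi_i$ is at most this gap, and symmetrically so is $\xi_j$.
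Hence $2^{R-1}\le w_i/\xi_i+w_j/\xi_j\le 2\max_k w_k/\xi_k$, i.e.\ $R\le\log_2\max_k(w_k/\xi_k)+2$. Round-robin scheduling then gives $\lceil n/\numProcessors\rceil\cdot R$ calls per processor, which is the claimed bound. Your Step 1 is fine as it stands.
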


\textbf{The edge case of 
$\xi_i\to 0$.} While~\cref{alg:parallel_sorting_part_1} scales logarithmically with respect to the precision gap between optimal values of the univariate components, this gap could be arbitrarily small in theory. 
We address this edge case in Appendix~\ref{app:importance-sorting-limitations} and show that --- even when $\xi_i$ approaches zero --- the algorithm is still guaranteed to terminate after a \emph{linear} number of steps in the size of the model encoding, given a model with ReLU activations. This remains a substantial improvement over the exponential worst-case behavior known for general neural networks.
Crucially, our empirical results demonstrate that, in practice, this parameter is never close to zero, enabling an efficient sorting phase. A detailed ablation of this parameter is provided in Appendix~\ref{app:Near-Identical-Features}. 


\subsection{Stage 2 --- Feature Selection Based on the Derived Feature Intervals}

In this subsection, we will describe the second part of our algorithm that can obtain a provably cardinally-minimal sufficient explanation,
given the derived interval orderings that were obtained from  \cref{alg:parallel_sorting_part_1}. In contrast to binary linear models~\citep{barcelo2020model}, where the exact maximum and minimum contribution of each feature is known and one can simply select the top-$k$ features, NAMs do not offer this convenience. For NAMs, we only have access to bounds on each univariate component, and thus we must explicitly certify that fixing certain features is indeed sufficient. However, to the total order, we can apply a \emph{binary search} to obtain the explanation, resulting in a logarithmic number of verification queries in the number of input features.
However, to simplify the presentation of this algorithm, we will start by presenting a naive greedy approach that runs in a linear number of steps similar to the one proposed by~\citep{barcelo2020model}, and then move on to presenting the binary-search approach. The naive approach is depicted in \cref{alg:naive_nam_algorithm}.

\begin{algorithm}
	\renewcommand{\algorithmicfor}{\textbf{for each}}
	\textbf{Input:} NAM $\nn$, input $\x\in\R^n$, perturbation radius $\nnPertRadius\in\R_+$
	\caption{Greedy Cardinally-Minimal Linear Explanation Search}
 	\label{alg:naive_nam_algorithm}
	\begin{algorithmic}[1]
		\State $\explanation \gets [n]$
		\For{feature $i \in [n]$, ordered by \cref{alg:parallel_sorting_part_1}}
            \Comment{$\mySuff{\nn}{\explanation}$ holds}
			\If{$\mySuff{\nn}{\explanation\setminus\{i\}}$} 
				\State $\explanation \gets \explanation\setminus \{i\}$ \label{lst:line:msr_addS}
			\EndIf
		\renewcommand{\algorithmicfor}{\textbf{for}}
		\EndFor
		\State \Return $\explanation$ \Comment{$\explanation$ is a \emph{cardinally-minimal} explanation concerning $\explanationQuery{\nn}$}
	\end{algorithmic}
\end{algorithm}

 \cref{alg:naive_nam_algorithm} closely mirrors the operation of  \cref{alg:original_algorithm}: It begins by initializing the explanation $\explanation$ to the full feature set $[n]$, and then iteratively removes features, updating $\explanation \gets \explanation \setminus \{i\}$, until reaching a minimal explanation. However, unlike  \cref{alg:original_algorithm}, which is only guaranteed to converge to a \emph{subset-minimal} explanation,  \cref{alg:naive_nam_algorithm} converges to the more challenging objective of finding a \emph{cardinally-minimal} sufficient explanation. This stronger guarantee is enabled by the total ordering ${(\hat{\Delta l}_i, \hat{\Delta u}_i)}_{i=1}^n$ computed by \cref{alg:parallel_sorting_part_1}, which ranks the features by their importance. This leads to the following proposition:

\begin{proposition}
Given a NAM $f$, an input $\x \in \mathbb{R}^n$, and a perturbation radius $\epsilon_p \in \mathbb{R}_+$,  \cref{alg:naive_nam_algorithm} performs $\mathcal{O}(n)$ queries and returns a \emph{cardinally-minimal} sufficient explanation. This stands in contrast to  \cref{alg:original_algorithm}, which is only guaranteed to return a \emph{subset-minimal} sufficient explanation.
\end{proposition}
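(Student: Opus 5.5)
The plan splits into a query count and a correctness argument, and the only tool needed is the exchange property of the first proposition above. We use it in the following form: if $\explanation$ is sufficient, $i\in\explanation$, $j\notin\explanation$ and $i\prec j$ in the order returned by \cref{alg:parallel_sorting_part_1}, then $\explanation\setminus\{i\}\cup\{j\}$ is sufficient.

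The query count is immediate. The loop of \cref{alg:naive_nam_algorithm} iterates over $[n]$ once and issues exactly one sufficiency query per feature, so it makes $n=\mathcal{O}(n)$ verifier calls. Sufficiency of the output follows from the invariant $\mySuff{\nn}{\explanation}=1$. It holds initially because $\explanation=[n]$ is trivially sufficient, and a feature is removed only when the resulting set has been certified sufficient.

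For cardinal minimality, index the features in the order processed, $1\prec 2\prec\dots\prec n$. Here $\prec$ must point so that features earlier in the ascending order are less important, meaning their worst-case perturbation moves the logit less toward the boundary. I read the exchange proposition in this convention: swapping a fixed earlier, less important feature $i$ for a fixed later, more important $j$ preserves sufficiency. Let $\explanation^{A}$ be the output and $\explanation^{*}$ a cardinally-minimal sufficient set. The main step is to show that $\explanation^{*}$ can be pushed, without changing its size, into the canonical form $\{m+1,\dots,n\}$, a suffix of the order. We do this by repeated exchanges: whenever $\explanation^{*}$ contains some $i$ and misses some $j$ with $i\prec j$, the exchange proposition replaces $i$ by $j$ while keeping sufficiency and cardinality. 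Each swap strictly increases $\sum_{k\in\explanation^*}k$, so the process terminates, and it ends at a suffix $\mathcal{T}$ with $|\mathcal{T}|=|\explanation^{*}|$ that is sufficient.

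It remains to show that the greedy output is a suffix of size at most $|\mathcal{T}|$. Removing a feature only enlarges the set of allowed perturbations, so sufficiency is monotone under supersets. Using this, I claim that the greedy removes every feature $k\le m$, where $\mathcal{T}=\{m+1,\dots,n\}$. We argue by induction over the loop. When feature $k\le m$ is processed, the current set is $\explanation_k\supseteq\{k+1,\dots,n\}\supseteq\mathcal{T}$, which holds as long as the features processed before $k$ could only have been removed. Then $\explanation_k\setminus\{k\}\supseteq\mathcal{T}$ is sufficient by monotonicity, so $k$ is removed. Hence $\explanation^{A}\subseteq\mathcal{T}$, so $|\explanation^{A}|\le|\explanation^{*}|$, and since $\explanation^{A}$ is sufficient, equality holds and $\explanation^{A}$ is cardinally-minimal.

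The contrast with \cref{alg:original_algorithm} needs a small example, for instance the one in \cref{fig:explanation-local-vs-global}. An arbitrary processing order can remove an important feature early and then be stuck at a subset-minimal but larger set.

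I expect the main obstacle to be getting the orientation of $\prec$ and the ascending argsort consistent with the exchange proposition. The correct reading is that one should drop the least important features first, those with the smallest $\Delta u_i$. If the ordering conventions are off, the exchange direction flips, and the suffix argument must be restated as a prefix argument in the reversed order.
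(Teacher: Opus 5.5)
Your proof is correct. The query count and the sufficiency invariant match the paper, but the minimality argument takes a somewhat different route.

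The paper argues from the point where the greedy loop stops. It takes the feature $\ell$ whose removal was rejected and asserts that $\mathcal{S}\setminus\{\ell\}$ is exactly the set of the top $|\mathcal{S}|-1$ features in the order. It then invokes an exchange lemma (if the top-$k$ set is insufficient, no $k$-set is sufficient), which is the contrapositive of your normalization step, to rule out every explanation of size $|\mathcal{S}|-1$. Finally, downward monotonicity (subsets of insufficient sets are insufficient) excludes all smaller sizes.

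You instead start from an optimal $\mathcal{S}^*$ and push it into a sufficient suffix $\mathcal{T}$ of the same size. Upward monotonicity then shows the greedy loop deletes every feature outside $\mathcal{T}$. Both proofs use the same two ingredients, exchange toward the top-$k$ set and monotonicity, but they differ in what they must assume about the greedy output:
\begin{itemize}
\item Your domination argument never needs to know the shape of the greedy output.
\item The paper's claim about $\mathcal{S}\setminus\{\ell\}$ silently presupposes that the greedy returns a contiguous block of the most important features and removes nothing after the first rejected feature. Establishing that requires one more application of the exchange property, which the paper does not supply.
\end{itemize}
Your argument recovers this fact as a corollary: the output lies inside $\mathcal{T}$ and has the same size, so it equals $\mathcal{T}$.

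What the paper's framing offers in return is a single-query certificate of optimality, namely the rejected query at $\ell$. That is also the viewpoint it reuses for the binary-search variant, where a unique insufficient-to-sufficient transition along the order is what makes bisection valid.
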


\cref{alg:naive_nam_algorithm} can be significantly enhanced by replacing the linear ordering with a binary search strategy (\cref{alg:binary_nam_algorithm}).
Crucially, this step is not possible with the naive, unsorted approach (\cref{alg:original_algorithm}), as it does not guarantee convergence to a cardinally-minimal explanation, and may not even yield a \emph{subset-minimal} explanation. This is because, in an arbitrary feature ordering, there may be multiple points at which a non-sufficient subset becomes sufficient, making the binary search unreliable. In contrast, the preprocessing step in  \cref{alg:parallel_sorting_part_1} imposes a structured sorting of features, which allows  \cref{alg:binary_nam_algorithm} to reliably converge to a \emph{cardinally-minimal} sufficient explanation using only a \emph{logarithmic} number of queries.

\begin{algorithm}
	\renewcommand{\algorithmicfor}{\textbf{for each}}
	\textbf{Input:} NAM $\nn$, input $\x\in\R^n$, perturbation radius $\nnPertRadius\in\R_+$
	\caption{Greedy Cardinally-Minimal Logarithmic Explanation Search}
 	\label{alg:binary_nam_algorithm}
	\begin{algorithmic}[1]
		\State $F \gets$ total order of features (\cref{alg:parallel_sorting_part_1})
        \State {$l\gets 1$ \ ; \ $u\gets n$}
        \While{$l\neq u$}
        \State $m \gets \floor{\frac{l+u}{2}}$        
			\If{$\mySuff{\nn}{\{F[1],\ldots, F[m]\}}$}
                    \State $l\gets m$
                \Else
                \State $u\gets m-1$
			\EndIf
		\EndWhile
		\State $\explanation \gets\{F[1],\ldots F[m]\}$
        \State \Return $\explanation$ \Comment{$\explanation$ is a \emph{cardinally-minimal} explanation concerning $\explanationQuery{\nn}$}
	\end{algorithmic}
\end{algorithm}

\begin{proposition}
Given a NAM $f$, an input $\x \in \mathbb{R}^n$, and a perturbation radius $\epsilon_p \in \mathbb{R}_+$,  \cref{alg:binary_nam_algorithm} performs $\mathcal{O}(\emph{log}(n))$ queries and returns a \emph{cardinally-minimal} sufficient explanation. 
\end{proposition}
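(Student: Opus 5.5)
The plan is to show that, under the total order $F$ returned by \cref{alg:parallel_sorting_part_1}, the prefix predicate $P(m) := \mySuff{\nn}{\{F[1],\ldots,F[m]\}}$ is monotone in $m$, and that its smallest true value $m^*$ equals the size of a cardinally-minimal sufficient explanation. With these two facts, \cref{alg:binary_nam_algorithm} becomes a standard binary search over a monotone Boolean predicate on $[n]$. Such a search uses $\mathcal{O}(\log n)$ calls to $\text{suff}$, and it is correct provided its loop invariant is set up properly. Throughout I assume $F[1]$ is the most important feature, i.e.\ the feature whose perturbation moves the output furthest toward the decision boundary. Otherwise "prefix" should read "the $m$ most important features."

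\textbf{Step 1 (monotonicity).} If a set $\explanation$ is sufficient, then every superset $\explanation'\supseteq\explanation$ is also sufficient. Fixing more features to their values in $\x$ only shrinks the set of admissible assignments $(\x_{\explanation'};\xx_{\explanationNot'})$, so that set is contained in the set for $\explanation$. The one subtlety is the $\ell_p$-ball: the assignment $(\x_{\explanation'};\xx_{\explanationNot'})$ must be realized by some vector in the ball. It is realized by $\xx$ with its coordinates in $\explanation'\setminus\explanation$ reset to $\x$, which is still in $\pertBall$ because that reset only lowers $\|\x-\xx\|_p$. Since prefixes are nested, $P(m)\Rightarrow P(m')$ for all $m'\ge m$. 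I also note that $P(n)$ holds trivially.

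\textbf{Step 2 (the minimal prefix is cardinally minimal).} Let $\explanation^*$ be any cardinally-minimal sufficient explanation, with $|\explanation^*|=k$. I would apply the earlier exchange proposition repeatedly. While $\explanation^*$ contains some feature $i$ outside the first $k$ positions, pigeonhole gives a feature $j$ among the first $k$ with $j\notin\explanation^*$, and $j\prec i$ in importance. The exchange proposition then says $\explanation^*\setminus\{i\}\cup\{j\}$ is still sufficient. It has the same size and strictly more overlap with the prefix. Here I must be careful with the direction of the order: the exchange proposition is phrased as swapping $i$ for a $j$ with $i\prec j$, so I would read $\prec$ as "less important than." The swap then keeps a more important feature fixed, which is exactly the intended direction. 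After at most $k$ swaps we obtain $\{F[1],\ldots,F[k]\}$ as a sufficient set, so $P(k)$ holds. No sufficient set is smaller than $k$, so $P(m)$ fails for $m<k$, and therefore $m^*=k$.

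\textbf{Step 3 (the search).} This is the main obstacle, because the pseudocode's update rules ($l\gets m$ on success, $u\gets m-1$ on failure, with $m=\floor{(l+u)/2}$) are oriented for finding the largest true index, not the smallest, and can loop forever when $u=l+1$. I would therefore prove correctness for the intended invariant: $P(u)$ holds and $P(l-1)$ fails, with initial values $l=1$, $u=n$. At each step I query $m=\floor{(l+u)/2}$. On success I set $u\gets m$; on failure I set $l\gets m+1$. Monotonicity from Step 1 preserves the invariant, the interval $[l,u]$ shrinks by roughly half each iteration, and at termination $l=u=m^*$. The returned set is the prefix of length $m^*$, which Step 2 shows is cardinally minimal. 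The number of iterations is $\lceil\log_2 n\rceil$, each making one verification query, which gives $\mathcal{O}(\log n)$ queries, on top of the preprocessing cost already accounted for in \cref{prop:part1-complexity}.
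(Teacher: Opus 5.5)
Your proof is correct. It uses the same two ingredients as the paper, superset-monotonicity of sufficiency and the exchange proposition, but organizes them differently.

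\textbf{How the routes differ.} The paper proves monotonicity (\cref{monotonicity_lemma_2}) and concludes that the prefixes have a single insufficient-to-sufficient transition. It then identifies the binary-search output with the output of \cref{alg:naive_nam_algorithm} and imports cardinal minimality from \cref{naive_algorithm_cardinality_proof}. That identification needs the fact that greedy removal in ascending order leaves exactly a top-$k$ set, which the paper asserts without proof. Your Step 2 avoids this detour. Applying the exchange proposition directly to an optimal $\explanation^*$ is the contrapositive of \cref{first_helpful_lemma}, proved by the same swapping argument. It ties the threshold prefix length to the minimum cardinality without any reference to the linear algorithm. Your Step 1 also shows explicitly that the reset vector stays in $\pertBall$, which the paper's \cref{monotonicity_lemma} leaves implicit.

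\textbf{Your Step 3 diagnosis.} This is right. The written updates ($l\gets m$ on success, $u\gets m-1$ on failure, floor midpoint) are oriented toward the largest true index, and they stall when $u=l+1$ and the query succeeds. Also, \cref{alg:parallel_sorting_part_1} returns features in ascending importance, so prefixes must be taken from the most-important end. The paper's proof argues about the intended search (``the first feature $i$ where $[i]$ is sufficient but $[i-1]$ is not''), not the literal pseudocode, so your corrected search is the one its argument actually supports. What the paper's route gives in return is the extra statement that the linear and logarithmic algorithms return the same explanation.

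\textbf{Two small repairs.} First, in Step 2 you write $j\prec i$, but under your own reading of $\prec$ as ``less important than'' the exchange proposition needs $i\prec j$. The substance, that $j$ is more important than $i$, is right; only the symbol is flipped. Second, your initial invariant requires $P(0)$ to fail. If $\emptyset$ is already sufficient, the minimum cardinality is $0$, yet a search started at $l=1$ returns a one-element set. Initialize $l=0$, or spend one extra query on $\emptyset$; the bound stays $\mathcal{O}(\log n)$. The paper's pseudocode has the same omission.
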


\textbf{Overall complexity results.} By combining  \cref{alg:parallel_sorting_part_1} with  \cref{alg:binary_nam_algorithm}, we obtain a cardinally-minimal explanation for $\langle f,\x,\epsilon_p \rangle$. This unified algorithm yields a substantial efficiency gain, reducing the worst-case requirement of an \emph{exponential} number of verification queries to only a \emph{logarithmic} number of (parallelized) queries. The first segment of these queries operate by running verification queries on \emph{univariate components} $f_i$ of the model, which are far smaller, and hence more efficient to verify than direct queries to $f$. The resulting complexity bound is formalized in the following theorem:

\begin{theorem}
\label{thm:overall-complexity}
Running \cref{alg:parallel_sorting_part_1} and  \cref{alg:binary_nam_algorithm} obtains a \emph{cardinally-minimal} sufficient explanation with $T_p(n)=\mathcal{O}\Big(\big(\frac{n}{p}\big)\emph{log}\big(\max_{i \in [n]}(\frac{\beta_i-\alpha_i}{\xi_i}\big)\Big)\xrightarrow[p \to n]{}\mathcal{O}\Big(\emph{log}\big(\max_{i \in [n]}(\frac{\beta_i-\alpha_i}{\xi_i}\big)\Big)$ queries to $f_i(\cdot)$ components, plus $\mathcal{O}(\log n)$ queries to $f(\cdot)$. 
In contrast, standard algorithms require $\mathcal{O}(2^n)$ verification queries to $f(\cdot)$ for a cardinally-minimal explanation, or $\mathcal{O}(n)$ verification queries to $f(\cdot)$ for only a subset-minimal explanation.
\end{theorem}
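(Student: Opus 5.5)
The theorem combines the earlier results, so the plan is to chain them. First, invoke \cref{prop:part1-complexity}, which already bounds the parallel cost of \cref{alg:parallel_sorting_part_1}. Each thread runs a bisection on an interval of initial width $\beta_i-\alpha_i$. Each iteration halves this width, and the loop halts once the width is below the separation $\xi_i$ to the neighbouring intervals. So thread $i$ performs $\mathcal{O}(\log((\beta_i-\alpha_i)/\xi_i))$ univariate queries. Distributing $n$ threads over $p$ processors multiplies the maximum per-thread cost by $\lceil n/p\rceil$, which gives $T_p(n)$ and its limit as $p\to n$. All of these queries concern the components $f_i(\cdot)$ only.

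Second, establish correctness and the $\mathcal{O}(\log n)$ bound for \cref{alg:binary_nam_algorithm}. The key structural fact is monotonicity of sufficiency along the sorted order. Let $P_m=\{F[1],\ldots,F[m]\}$ be the prefix of the $m$ most important features. I will show two things. If $P_m$ is sufficient, then $P_{m'}$ is sufficient for every $m'\ge m$. If some sufficient set of size $m$ exists, then $P_m$ is sufficient.

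The first claim holds because, for a NAM, $(\x_{\explanation};\xx_{\explanationNot})$ only changes summands outside $\explanation$. Enlarging $\explanation$ therefore shrinks the set of reachable perturbations, so sufficiency is upward closed. This uses the fact that, for a general $\ell_p$ ball, restricting coordinates still leaves a feasible set contained in the original one.

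The second claim follows from the exchange proposition proved after \cref{alg:parallel_sorting_part_1}. Take a sufficient $\explanation$ with $|\explanation|=m$. If $\explanation\neq P_m$, there is some $j\in P_m\setminus\explanation$ and some $i\in\explanation\setminus P_m$ with $j\prec i$. Read in this direction, the exchange lets us swap in the more important feature while preserving sufficiency. Iterating this swap at most $m$ times turns $\explanation$ into $P_m$. Here I need to be careful about the orientation of $\prec$: the prefix must consist of the features with the largest worst-case drop $\Delta l$. I would restate the exchange proposition so that replacing a feature by one whose fixing removes a larger guaranteed drop preserves sufficiency.

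Together, these two claims show that the predicate $m\mapsto \mySuff{\nn}{P_m}$ is a monotone step function on $[n]$. Its threshold $m^*$ equals the minimum cardinality of a sufficient explanation, so $P_{m^*}$ is cardinally minimal. Binary search on a monotone predicate finds this threshold with $\mathcal{O}(\log n)$ evaluations, each a single query to $f$.

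I expect the main obstacle to be the loop invariant of the search as written, since it mixes the indices $l,u,m$. I would restate the algorithm as maintaining the invariant that $P_u$ is sufficient and $P_{l-1}$ is not. A query at the midpoint then halves $u-l$, and the search returns $P_u$ when $l=u$. Terminating with $l=u$ after $\lceil\log_2 n\rceil$ steps then yields the threshold.

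Finally, the contrasting claims follow from standard facts. Exhaustive search over subsets for cardinal minimality needs $\mathcal{O}(2^n)$ queries. \cref{alg:original_algorithm} issues exactly $n$ queries and guarantees only subset minimality. Adding the two stage costs gives the stated total.
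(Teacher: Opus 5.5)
Your proposal is correct and follows essentially the paper's route: the stage-1 count is just \cref{prop:part1-complexity}, and your two claims are the paper's upward-closure lemma (\cref{monotonicity_lemma_2}) and its prefix lemma (\cref{first_helpful_lemma}, obtained from \cref{the_replacement_lemma} by repeated swaps), the only difference being that you read off the minimum cardinality directly as the threshold of the prefix predicate instead of routing through the correctness of \cref{alg:naive_nam_algorithm}. Your two explicit repairs, namely taking prefixes in descending importance and keeping the invariant ``$P_u$ sufficient, $P_{l-1}$ not'', are what the paper's proof tacitly assumes: as written, \cref{alg:binary_nam_algorithm} uses the ascending order returned by \cref{alg:parallel_sorting_part_1}, and its update $l\gets m$ with a floored midpoint can stall when $u=l+1$; just initialize with $l=0$ (or test $\emptyset$ once) so that an empty sufficient explanation is also handled.
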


As discussed earlier, our complexity analysis is parameterized by the separation between the minima of the univariate components $\xi_i$, which is typically non-negligible in practice, though some $\xi_i$ may be small. To address this, we show that running Alg.~\ref{alg:parallel_sorting_part_1} followed by Alg.~\ref{alg:binary_nam_algorithm} takes time \emph{linear} in the encoding size of the model. We further prove matching query bounds: the task can be solved with at most a linear number of queries, and any algorithm requires at least a logarithmic number of queries in the encoding size of $f$.



\begin{theorem}
    (Informally.) Let $f$ be a NAM, $\x\in\mathbb{R}^n$ an input, and $\epsilon_p\in\mathbb{R}$. Computing the size of a cardinally-minimal sufficient explanation for $\langle f,\x,\epsilon_p\rangle$ requires, in the worst case, at most $\mathcal{O}(m)$ queries to an NP oracle (Claim~\ref{claim_to_ref}) and at least $\mathcal{O}(\log(m))$ such queries (Lemma~\ref{lemm_to_ref}), where $m$ is the encoding size of $f$. Moreover, running Alg~\ref{alg:parallel_sorting_part_1} followed by Alg~\ref{alg:binary_nam_algorithm} terminates after at most $\mathcal{O}(m)$ queries (Proposition~\ref{prop:parallel_sorting_terminates}).
\end{theorem}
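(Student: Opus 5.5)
We prove the three components of the informal statement separately; the lower bound is the main difficulty.

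\textbf{Upper bound of $\mathcal{O}(m)$ NP-oracle queries.} For a ReLU NAM with rational weights of total encoding size $m$, each univariate component $f_i$ is piecewise linear on the interval $\mathcal{B}_p^{\epsilon_p}(\x_i)$. Its minimum $f_i(\x_i^*)$ is therefore attained at a breakpoint or at an endpoint, and is a rational number whose numerator and denominator have bit-length polynomially bounded in the encoding size $m_i$ of $f_i$; more precisely, a standard linear-programming vertex argument gives bit-length $\mathcal{O}(m_i)$. Consequently, two distinct minima differ by at least $2^{-\mathcal{O}(m)}$. A question of the form ``is $\min f_i \ge q$?'' is the complement of an NP question, since a witness $\tilde{\x}_i$ is guessed and then evaluated. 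Binary search over this grid therefore pins down every $f_i(\x_i^*)$ exactly, using $\mathcal{O}(m_i)$ oracle calls per component and $\sum_i \mathcal{O}(m_i)=\mathcal{O}(m)$ calls in total.

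Once all minima are known exactly, the sort is fixed, and ties can be broken arbitrarily because tied features are interchangeable by the exchange proposition of Stage~1. By the additive structure, sufficiency of a prefix reduces to the single arithmetic check $\beta_0+\sum_{i\in\explanation} f_i(\x_i)+\sum_{i\notin \explanation} f_i(\x_i^*)\ge 0$ (with the symmetric check for class $0$). This check needs no further oracle calls, which gives Claim~\ref{claim_to_ref}.

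\textbf{Termination of Alg.~\ref{alg:parallel_sorting_part_1} followed by Alg.~\ref{alg:binary_nam_algorithm} (Proposition~\ref{prop:parallel_sorting_terminates}).} The same separation argument applies. The initial width $\beta_i-\alpha_i$ is at most $2^{\mathcal{O}(m_i)}$, and distinct minima are at least $2^{-\mathcal{O}(m)}$ apart, so $\log((\beta_i-\alpha_i)/\xi_i)=\mathcal{O}(m)$. Proposition~\ref{prop:part1-complexity} then yields $\mathcal{O}(m)$ queries.

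One caveat must be handled first. When two minima coincide exactly, the break condition of Alg.~\ref{alg:parallel_sorting_part_1} might never fire. I would handle this by stopping once the interval width falls below the grid resolution $2^{-\mathcal{O}(m)}$. At that point overlapping intervals certify equal minima, and the tie is again resolved arbitrarily. Adding the $\mathcal{O}(\log n)\subseteq\mathcal{O}(m)$ queries of Alg.~\ref{alg:binary_nam_algorithm} gives $\mathcal{O}(m)$ overall.

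\textbf{Lower bound of $\Omega(\log m)$ queries (Lemma~\ref{lemm_to_ref}).} This is the main obstacle. I would use an information-theoretic or reduction argument. First, build a family of NAMs of encoding size $\Theta(m)$ whose cardinally-minimal explanation size ranges over $\Theta(m)$ distinct values. To do this, take $n=\Theta(m)$ features, each of whose components has a minimum that is easy to describe but hard to determine, for instance by embedding a small NP-hard reachability gadget in each $f_i$. Second, argue that under standard complexity assumptions the answer cannot be computed in polynomial time with $o(\log m)$ NP queries. The natural route is to show that the size problem is hard for $\mathrm{FP}^{\mathrm{NP}[\log]}$ by reducing a problem such as computing the maximum number of satisfiable formulas among a list (which is $\mathrm{P}^{\mathrm{NP}}_{\|}$-complete) to it. In that reduction, each formula becomes one component whose minimum dips below a threshold exactly when the formula is satisfiable, so the explanation size counts satisfiable instances. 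The delicate part is the gadget design. The dips must be large enough to force a feature into the explanation, yet arranged so that the threshold through $\beta_0$ makes the count readable from the explanation size. Verifying the equivalence between the two sides of the reduction is where I expect most of the work to lie.
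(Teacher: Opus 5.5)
\paragraph{Upper bound and termination.} These two parts are sound and follow the paper's route. The paper places the extremum computation in $\text{OptP}[\mathcal{O}(N)]$: it guesses the ReLU activation pattern, solves an LP, and bounds the bit-length linearly. It then invokes Krentel's inclusion $\text{OptP}[z(n)]\subseteq\text{FP}^{\text{NP}}[z(n)]$, which is itself proved by binary search on the value. You unroll that binary search per component, with rational rounding to recover each $\min f_i$ exactly, and you note explicitly that once the minima are exact, sufficiency is a pure arithmetic check.

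For termination you use the same bit-length bounds as Proposition~\ref{prop:parallel_sorting_terminates}. Your caveat about exact ties is justified and corrects the paper, which claims the break test fires ``including the case where the true minimal values are equal.'' For a non-constant component, $u_i>\min f_i$ at every stage: after a failed check by definition, and before any failed check because $\beta_i\ge\max f_i$. Hence $\Delta u_i< f_i(\x_i)-\min f_i\le\Delta l_i$. If two components have the same gap $f_i(\x_i)-\min f_i$, neither disjunct of the break test can ever hold. Your resolution-based stopping rule is therefore a needed repair, although strictly it proves termination of a modified \cref{alg:parallel_sorting_part_1}.

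\paragraph{The gap: the lower bound.} This part is only a plan. Its whole difficulty is a \emph{univariate} ReLU component whose extremum over a \emph{continuous} interval encodes an NP-hard quantity, and you do not construct one. ``Embedding a small NP-hard reachability gadget in each $f_i$'' does not work off the shelf, because the known gadgets are multivariate. The obvious way to make them univariate is to decode a scaled scalar into bits, but this yields fractional bits at non-integer inputs. There all clause terms $\max(0,1-\sum_j \ell_j)$ can vanish even though the formula is unsatisfiable; for example, take the four 2-clauses over $x_1,x_2$ at $x_1=x_2=\tfrac12$. Your component would then dip for an unsatisfiable $\phi_j$, and the explanation size would no longer count satisfiable formulas.

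The paper supplies exactly this missing piece in three steps:
\begin{itemize}
\item a ReLU binary-decomposition network whose image over the interval contains $\{0,1\}^n$ and stays inside $[0,1]^n$ (Lemma~\ref{binary_decomposition_lemma}, Claim~\ref{claim_helper_a});
\item composition with a MaxSAT network whose Bool$^*$ regularizer is weighted by $C=|\phi|+1$ (Lemma~\ref{weighted_maxsat_second_lemma}), so that rounding any fractional coordinate strictly increases the output;
\item the conclusion that the continuous maximum of the univariate network equals the Boolean MaxSAT value (Lemma~\ref{weighted_maxsat_third_lemma}).
\end{itemize}
It then reads that value off the explanation size using $m^2$ negated copies and $\beta_0=m^2$. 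The condition $(m^2-k)\max f_i\le m^2$ traps the optimum in an interval of width below $1$.

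\paragraph{How your readout compares.} Given such a gadget, your route would be a valid and arguably cleaner alternative. The number of satisfiable formulas in a list is $\text{OptP}[\mathcal{O}(\log n)]$-complete, since MaxSAT metric-reduces to it via the formulas ``at least $j$ clauses are satisfiable.'' You would clip each component to a $0/1$ indicator such as $\min(1,\max(0,h_j-m_j+1))$. The base point $\x_j$ must sit where this indicator is forced to $0$, because an arbitrary base point might decode to an assignment that satisfies $\phi_j$. With $\beta_0$ chosen so that the output at $\x$ lies in $(0,1)$, every unit dip forces its feature into the explanation, and the cardinally-minimal size equals the count directly.

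Without the continuous univariate gadget, however, no hardness has been established. Also note that counting the possible answers information-theoretically gives no lower bound on NP-oracle queries by itself; the $\log m$ bound only has meaning as $\text{OptP}[\log]$-hardness combined with Krentel's separation under $\mathrm{P}\neq\mathrm{NP}$.
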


\section{Evaluation}\label{sec:evaluation}

\textbf{Experimental Setup.} We implemented our main algorithmic approach (\cref{alg:parallel_sorting_part_1} followed by  \cref{alg:binary_nam_algorithm}) using $\alpha$-$\beta$-CROWN as the backend verifier, the current state-of-the-art in neural network verification~\citep{wang2021beta, zhouscalable, kotha2023provably, kaulen20256th, chiusdp}. We conducted extensive experiments on four widely used tabular-data benchmarks in the context of NAMs~\citep{agarwal2021neural, radenovic2022neural}: \begin{inparaenum}[(i)] \item Breast Cancer, \item CREDIT, \item FICO HELOC\end{inparaenum}, all of which are prominent in safety-critical domains. We adopted the same model architectures as prior work in the NAM literature~\citep{agarwal2021neural, radenovic2022neural}.
Evaluation details, additional experiments, and ablation studies are in Appendix~\ref{app:additional-experiments}. 

\subsection{Our Algorithm vs. Previous Algorithms}

We begin by comparing our results with prior algorithms proposed in the literature for obtaining provably minimal sufficient explanations. Since our method targets the much stronger notion of cardinally-minimal sufficient explanations for the first time, any naive baseline --- that computes such explanations by exhaustively enumerating all $2^n$ input subsets,
verifies their sufficiency, and selects the one with the smallest cardinality --- would not finish with reasonable timeouts.
Thus, we compare our approach to the more scalable task of finding subset-minimal explanations, a weaker notion of minimality, using the standard greedy algorithm employed by previous works~\citep{wu2024verix, bassan2025explaining, izza2024distance, ignatiev2019abduction, la2021guaranteed} (\cref{alg:original_algorithm}). Because subset-minimal explanations depend on feature orderings, we consider two setups: \begin{inparaenum}[(i)] \item a basic lexicographic ordering of features, and \item a more sophisticated reverse-sensitivity ordering, following prior approaches~\citep{wu2024verix, bassan2025explaining, izza2024distance, wu2024better}.\end{inparaenum} %

\begin{table}
	\centering
	\def\arraystretch{1.1}%
	\caption{Comparison of average explanation size and computation time.}
 \resizebox{\linewidth}{!}{
	\begin{tabular}{l R@{$\pm$}L R@{$\pm$}L R@{$\pm$}L R@{$\pm$}L R@{$\pm$}L R@{$\pm$}L}
		\\
		\toprule
		       &
            \multicolumn{4}{c}{Breast Cancer} & \multicolumn{4}{c}{CREDIT} &  \multicolumn{4}{c}{FICO HELOC} \\
           \cmidrule(lr){2-5}\cmidrule(lr){6-9}\cmidrule(lr){10-13}   
            Method & \multicolumn{2}{c}{Size ($\downarrow$)} & \multicolumn{2}{c}{Time [s] ($\downarrow$)} & 
            \multicolumn{2}{c}{Size ($\downarrow$)} & \multicolumn{2}{c}{Time [s] ($\downarrow$)} & \multicolumn{2}{c}{Size ($\downarrow$)} & \multicolumn{2}{c}{Time [s] ($\downarrow$)} \\
		\midrule
		 \textbf{Ours}  &  \mathbf{4.00} &   \mathbf{4.24}  &  \mathbf{35.60} &   \mathbf{1.34}   &   \mathbf{3.76} &   2.62   &  \mathbf{132.67} &  \mathbf{36.76} &    \mathbf{5.59} &   1.80  & 317.92 & 222.07   \\
		  Lexicographic  &  16.58 &   5.44   &  634.92 &  77.23   &  12.42 &   6.45   &  473.63 & 128.38   & 15.60 &   7.53 &  \mathbf{146.16} & 188.07  \\
		 Sensitivity  & 16.27 &   5.57    &  636.79 &  87.44   &   3.82 &   1.84   &  407.93 & 126.63  &  9.45 &   5.90 &  250.09 & 148.44    \\
		\bottomrule
	\end{tabular}
}
	\label{table:general_comparison}
\end{table}

The results in \cref{table:general_comparison} demonstrate that our proposed algorithm achieves substantial improvements in both computation time and explanation size over previous algorithms. 
Beyond reducing explanation size compared to standard subset-minimal explanation algorithms (which follows by necessity, since we enforce a stronger notion of minimality), our method also achieves substantial runtime gains, despite solving a much harder task. This advantage stems from our NAM-specific algorithm, which requires only a \emph{logarithmic} number of parallelized queries --- rather than a linear number --- executed over univariate components $f_i$, which are much faster to verify.

\begin{figure}
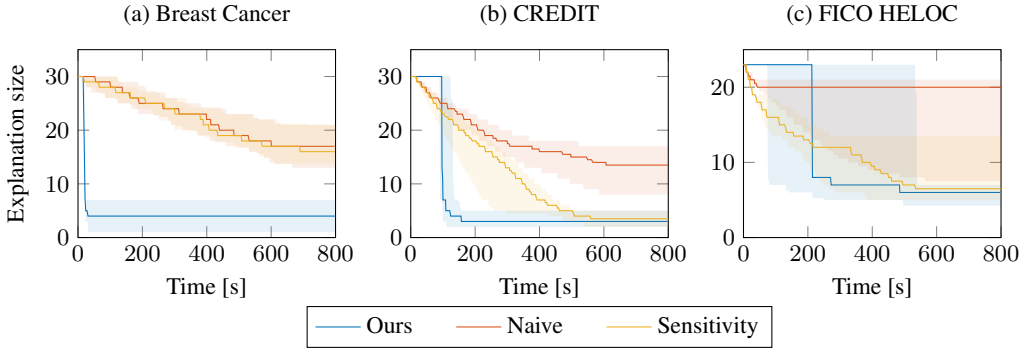

  \centering
    \includetikz[noexport]{./figures/quantitative/nam_explain_over_time}
\caption{Explanation size over time for all datasets.}
\label{fig:nam_explain_over_time}
\end{figure}

\subsection{Explanation Progression in Time}

To further assess the advantages of our algorithm over prior methods, we analyze the explanation sizes produced by our approach in comparison to subset-minimal methods and track their evolution over time. This analysis is illustrated in \cref{fig:nam_explain_over_time}. The results show that while subset-minimal approaches converge slowly and often stagnate in local minima. 
Our method --- though it begins later due to the preprocessing step in \cref{alg:parallel_sorting_part_1}, which sorts features with only a logarithmic number of parallelized queries --- quickly outpaces them once sorting is complete. At this point, it requires significantly fewer queries, relying only on a binary search over the sorted features as in  \cref{alg:binary_nam_algorithm}. This second phase is not only substantially faster but also provably attains the cardinally-minimal explanation, i.e., the global optimum, unlike subset-minimal approaches.

\subsection{Comparison to Purely Sampling-Based Methods}

NAMs are generally viewed as very interpretable as their univariate functions $f_i$ for each feature allow for simple visualizations (such as in \cref{fig:comparison-sampling}).
Most commonly, these visualizations are obtained through sampling over the respective feature domain to get a good approximation of each univariate function. 
However, we show in this experiment that this discretization through sampling and the resulting interpretations can be misleading.
Peaks and other extrema that are missed through sampling can lead to insufficient explanations, which can be fatal in safety-critical domains.
An extreme case is abstractly depicted in \cref{fig:comparison-sampling}b,
but we have also observed insufficient explanations in practice.
To demonstrate this, we evaluate $1,000$ evenly-spaced samples instead of each verification query. After the explanations are generated, we test their sufficiency using $\alpha,\beta$-CROWN (\cref{tab:comparison-sampling}):
On the CREDIT and FICO HELOC datasets, more than half of the explanations obtained through sampling could not be verified.
In contrast, all our explanations are sufficient by construction.

\begin{table}
  \centering
  \caption{Comparison against a purely sampling-based approach.}
  \label{tab:comparison-sampling}
 \resizebox{\linewidth}{!}{
	\begin{tabular}{l R@{$\pm$}L R@{$\pm$}L C R@{$\pm$}L R@{$\pm$}L C}
		\\
		\toprule
		       &
            \multicolumn{5}{c}{CREDIT} &  \multicolumn{5}{c}{FICO HELOC} \\
           \cmidrule(lr){2-6}\cmidrule(lr){7-11}   
            Method & \multicolumn{2}{c}{Size ($\downarrow$)} & \multicolumn{2}{c}{Time [s] ($\downarrow$)} & \text{Sufficiency [\%] ($\uparrow$)} &  
            \multicolumn{2}{c}{Size ($\downarrow$)} & \multicolumn{2}{c}{Time [s] ($\downarrow$)} & \text{Sufficiency [\%] ($\uparrow$)}  \\
		\midrule
		 \textbf{Ours}  &   3.76 &   2.62   & 132.67 &  36.76 & \mathbf{100.00} &    5.59 &   1.80  & 317.92 & 222.07 & \mathbf{100.00}   \\
 
		  \textbf{Sampling}      &  \mathbf{2.67} &   3.10   &             \mathbf{5.10} &   \mathbf{0.26}  & \hphantom{1}31.37    &  \mathbf{3.04} &   3.05         &             \mathbf{6.89} &   \mathbf{2.57} & \hphantom{1}25.49  \\
		\bottomrule
	\end{tabular}
}
\end{table}

\section{Related Work}
\label{sec:related-work}

\textbf{Formal XAI.} Our work relates to the field of \emph{formal XAI}~\citep{marques2023logic}, which seeks explanations with provable guarantees. Prior efforts have developed sufficient explanations for models such as decision trees~\citep{huang2021efficiently}, linear models~\citep{marques2020explaining, subercaseaux2025probabilistic}, and tree ensembles~\citep{izzaexplaining, ignatiev2022using, audemard2022trading, audemard2023computing}. Other relevant works are on obtaining minimal sufficient explanations for neural networks~\citep{la2021guaranteed, wu2024verix, izza2024distance, bassan2023formally, bassan2025explaining}, which rely on neural network verification queries. While such verifiers have become more scalable in recent years, computing such explanations is still costly, often requiring many (linear or exponential) verification queries. Our method takes a step toward reducing this cost by focusing on neural network families with more interpretable structure, and particularly look into NAMs. 

Closer to our setting is the work of~\cite{harzli2022cardinality}, which studies cardinally minimal sufficient explanations for \emph{monotonic} neural networks, another interpretable model class. There, monotonicity simplifies certifying \emph{sufficiency}, while \emph{minimality} remains challenging. In contrast, for NAMs, certifying \emph{sufficiency} is substantially harder, but \emph{minimality} becomes easier due to the additive structure.


Also related to our work is the study of~\cite{barcelo2020model}, which shows, among many results, that cardinally minimal sufficient explanations can be computed in polynomial time for \emph{linear (Perceptron) models} over \emph{binary} inputs, using ideas such as sorting and selecting features. However, obtaining the same type of explanation for \emph{neural} additive models over \emph{continuous} inputs, which are far more expressive, introduces new algorithmic challenges: \begin{inparaenum}[(i)] \item computing and refining bounds for each univariate component using scalable neural network verification methods~\citep{wang2021beta}, (ii) parallelizing the certification of \emph{univariate} components rather than the full network, and (iii) employing a more efficient selection phase that leverages these outer bounds, sorts the components, and applies a \emph{logarithmic} search to obtain a cardinally minimal sufficient explanation more efficiently.\end{inparaenum} 


\textbf{Neural Additive Models (NAMs).} NAMs extend \emph{Generalized Additive Models (GAMs)}~\citep{hastie2017generalized, nelder1972generalized}, a classic interpretable family of ML models~\citep{caruana2015intelligible, zhong2023exploring, liu2022fast, bordt2023shapley, enouen2025instashap, chen2020sparse}, by replacing each univariate component with a neural network, thereby combining interpretability with expressivity. First introduced by~\citep{agarwal2021neural}, NAMs achieved competitive accuracy on tabular tasks and were applied in healthcare and COVID-19 modeling. Subsequent works suggested potential refinements of their training and architecture~\citep{radenovic2022neural, changnode, bouchiat2024improving, xu2023sparse} and proposed additional variants~\citep{bechler2024intelligible, jiao2024naisr} and applications~\citep{thielmann2024neural}.


\section{Limitations}
Like all methods that obtain provably minimal and sufficient explanations, our approach depends on invoking neural network verification queries, which do not yet scale to state-of-the-art models. Still, neural network verification has advanced rapidly in recent years~\citep{kaulen20256th, chiusdp, wu2024marabou}, and the scalability of our approach will improve alongside it. Importantly, our method offers two substantially critical improvements: \begin{inparaenum}[(i)]\item it reduces the number of queries from exponential (or linear, in relaxed tasks) to \emph{logarithmic}, and \item it operates on \emph{univariate} components $f_i$, where verification is far cheaper since the certified models are small and more interpretable by design compared to the entire large model $f$\end{inparaenum}. Together, these make our algorithm far more practical for NAMs, and we show that it indeed efficiently produces explanations on standard benchmarks where prior algorithms fail.
Our approach also relies on a complete verifier, which often come with numeric tolerances and timeout restrictions. A more detailed discussion on this can be found in \cref{app:importance-sorting}.

\section{Conclusion}
Provably minimal and sufficient explanations represent a highly desirable goal in explainability, as they offer certifiable guarantees on both faithfulness and conciseness. For standard neural networks, however, this task is computationally prohibitive, requiring an exponential number of verification queries. We present a NAM-specific algorithm that reduces the complexity from \emph{exponential} to \emph{logarithmic} parallelized queries, achieving dramatic gains in both speed and explanation size. 
Moreover, we show that these explanations reveal insights into NAMs that sampling-based methods cannot capture.
Our work thus makes provable explanations feasible in practice and opens the door to extending them across other interpretable neural network families.

\section*{Acknowledgements}
The work of Elboher, Bassan, and Katz was partially funded by the European Union
(ERC, VeriDeL, 101112713). Views and opinions expressed
are however those of the author(s) only and do not necessarily reflect those of the European Union or the European
Research Council Executive Agency. Neither the European
Union nor the granting authority can be held responsible for
them. The work of Elboher, Bassan, and Katz was additionally supported by a grant
from the Israeli Science Foundation (grant number 558/24). The work of Ladner, Şahin, and Althoff was supported by the German Research Foundation (Deutsche Forschungsgemeinschaft, DFG) under grant number AL 1185/33-1. Views and opinions expressed
are however those of the author(s) only and do not necessarily reflect those of the European Union, or any other granting authority.

\bibliography{iclr2026_conference}
\bibliographystyle{iclr2026_conference}

\clearpage
\appendix

\setcounter{definition}{0}
\setcounter{proposition}{0}
\setcounter{theorem}{0}
\setcounter{lemma}{0}
\begin{center}\begin{huge} Appendix\end{huge}\end{center}    
\noindent{The appendix contains all proofs, optimizations, additional settings, and additional experiments that were mentioned throughout the paper:}

\newlist{MyIndentedList}{itemize}{4}
\setlist[MyIndentedList,1]{%
	label={},
	noitemsep,
	leftmargin=0pt,
}

\begin{MyIndentedList}

    \item \textbf{Appendix~\ref{extended_background}} contains extended background on neural network verification and formal XAI.
    \item \textbf{Appendix~\ref{app:formalizations}} contains formalizations that will be used throughout the proofs.
    \item \textbf{Appendix~\ref{app:proofs}} contains the proofs of \cref{the_replacement_lemma,naive_algorithm_cardinality_proof,runtime_proof_appendix,second_runtime_appendix}.
    \item \textbf{Appendix~\ref{app:importance-sorting}} contains a theoretical discussion and practical optimizations on the importance sorting.
    \item \textbf{Appendix~\ref{app:additional-settings}} contains extensions to multi-class classification and regression tasks.
    \item \textbf{Appendix~\ref{app:additional-experiments}} contains all experimental details and ablation studies.
    \item \textbf{Appendix~\ref{app:llm-disclosure}} contains an LLM usage disclosure. 
\end{MyIndentedList}

\section{Extended Background on Neural Network Verification and Formal XAI}
\label{extended_background}

\paragraph{Neural Network Verification.} Neural network verification provides provable guarantees about the behavior of a neural network over a continuous input region. Classical SMT- and MILP-based approaches~\citep{katz2017reluplex, wu2024marabou, katz2019marabou, tjengevaluating, ehlers2017formal} encode ReLU networks and specifications as logical or mixed-integer constraints, yielding exact guarantees but scaling only to small–to–medium models. Abstract interpretation methods~\citep{singh2019abstract,gehr2018ai2, ferraricomplete, muller2022prima} instead propagate layer-wise over-approximations, producing fast yet incomplete robustness certificates.
A major breakthrough came with linear-relaxation–based bound propagation, notably CROWN~\citep{zhang2018efficient} and its extensions~\citep{wang2021beta, chiusdp, zhouscalable, shi2025neural}, which compute tight dual linear bounds and function either as scalable incomplete verifiers or as strong relaxations within exact search procedures. Modern branch-and-bound (BaB) frameworks build on these relaxations to achieve \emph{complete} verification at scale. In particular, $\alpha$-$\beta$-CROWN and related variants now dominate VNN-COMP~\citep{brix2024fifth}, handling models with millions of parameters. Recent advances include tighter relaxations (e.g., SDP hybrids~\citep{chiusdp}), cutting-plane enhancements~\citep{zhouscalable}, and support for non-ReLU nonlinearities~\citep{shi2025neural}. Today, verification methods can routinely certify robustness for moderately large CNNs and ResNets, although significant challenges remain for transformers, highly complex architectures, and richer temporal or relational specifications.

\textbf{Formal XAI.} Our work is part of a line of research known as \emph{formal explainable AI} (formal XAI)~\citep{marques2022delivering}, which investigates explanations equipped with provable guarantees~\citep{yu2023eliminating, darwiche2022computation, darwiche2023logic, shih2018symbolic, azzolinbeyond, audemard2021computational}. Because producing such guaranteed explanations is often computationally intractable~\citep{BaMoPeSu20, bassan2026unifying, marzoukbassanshap, marzouk2025on, marzouk2024tractability, amir2024hard, blanc2021provably, blanc2022query}, much of the literature has concentrated on more restricted model classes~\citep{marques2023no}, including decision trees~\citep{bouniausing, arenas2022computing, bounia2024enhancing, bounia2023approximating}, monotonic models~\citep{marques2021explanations, harzli2022cardinality}, and tree ensembles~\citep{audemard2023computing, audemard2022preferred, ignatiev2022using, bassanmakes, boumazouza2021asteryx}. Related to our works, are frameworks that have obtained such explanations for neural networks~\citep{Laagmirhpanikwma21, bassan2023formally,hadad2026, wu2024verix, labbaf2025space, de2025faster, fel2023don, soria2025formal}. Due to the high computational complexity of this problem, several approaches aim to alleviate it through abstractions~\citep{bassan2025explaining, boumazouza2026fame}, smoothing techniques~\citep{xue2023stability, jin2025probabilistic, anani2025pixel}, or self-explaining frameworks~\citep{alvarez2018towards, bassan2025explain, bassan2025self, you2025sum}. Another related line of work studies explanations with provable guarantees for linear models~\citep{marques2020explaining, subercaseaux2025probabilistic}, which form a restricted subclass of additive models, as well as the theoretical analysis of~\citep{bassan2025additive}, which investigates the computational complexity of generating such explanations for these models.

\section{Formalizations} \label{app:formalizations}

\subsection{Computational Complexity Classes}
\label{the_replacement_lemma_sec}

We assume readers are familiar with the standard complexity classes polynomial time (PTIME) and nondeterministic polynomial time (NP, coNP); see~\cite{arora2009computational} for an introduction. NP and coNP capture the complexity of decision problems --- problems whose output is simply ``yes'' or ``no''. In contrast, FP is the \emph{functional} analogue of PTIME, describing the class of function problems whose outputs may be arbitrary values and can be computed in polynomial time.

We assume that the readers are familiar with the basic complexity classes of polynomial-time (PTIME), and non-deterministic polynomial-time (NP, coNP) (see an introduction in~\cite{arora2009computational}). NP and coNP are problems that address \emph{decision} problems, meaning they determine the complexity of problems with a yes/no answer. The complexity class FP is the \emph{functional} variant of PTIME (meaning it returns a function as an output rather than a yes/no answer), and describes the class of function problems that can be solved in polynomial time.


The complexity class \emph{OptP} of functions computable by taking the maximum (or minimum) of the output values over all accepting paths of an NP machine. Alternatively, a function $F\colon\Sigma^*\to\mathbb{Z}$ is in OptP if there exists a polynomial $p(\cdot)$ and a polynomial-time computable function $G\colon\Sigma^*\times \{0,1\}^{p(n)}\to\mathbb{Z}$ which we call the \emph{value function} such that for every input $X\in\Sigma^*$:~
\begin{equation}
    F(X)=\max_{Y\in\{0,1\}^{p(\lvert X\rvert)}}G(X,Y),
\end{equation}
where $Y$ is a witness of polynomial length, the function $G$ can be computed in deterministic polynomial time, and the integer output $G(X,Y)$ is polynomially bounded in $n=\lvert X\rvert$. We say that a problem is in $\text{OptP}[z(n)]$ if the solution $F(X)$ is bounded by $z(n)$.

The class $\text{FP}^{\text{NP}}[z(n)]$ consists of all function problems computable in polynomial time using at most $z(n)$ queries to an NP oracle. The seminal work of~\citep{krentel1986complexity} established that $\text{OptP}[z(n)] \subseteq \text{FP}^{\text{NP}}[z(n)]$, and moreover that every function in $\text{FP}^{\text{NP}}[z(n)]$ can be computed by solving an appropriate $\text{OptP}[z(n)]$ optimization problem followed by a polynomial-time postprocessing step, provided that $z(n)$ satisfies the ``smoothness'' condition defined in~\citep{krentel1986complexity}.


In functional or optimization problems, a common convention is to work with \emph{metric reductions}~\citep{krentel1986complexity}. Formally, given some $f,g: \Sigma^*\to \mathbb{N}$, a metric reduction
from $f$ to $g$ is a pair of polynomial-time computable
functions $T_1:\Sigma^*\times \mathbb{N}\to\mathbb{N}$ such that for all $\x\in\Sigma^*$ we have $f(\x) =
T_2(\x,g(T_1(\x)))$.


\subsection{Univariate Neural Network Formalization}
\label{the_replacement_lemma_sec}

In this subsection, we will formalize the univariate neural network components $f_i\colon\mathbb{R}\to\mathbb{R}$ under our analysis. Let there be a univariate neural network component $f$ with $\kappa$ layers, and an input $\mathbf{x} \in \mathbb{R}^{n_0}$.  
The output of $\mathbf{y} := f_i(\mathbf{x}) \in \mathbb{R}$ is obtained as follows:
\[
\mathbf{h}^0 := \x, \qquad 
\mathbf{h}^k := L_k(\mathbf{h}^{k-1}), \qquad 
\y = \mathbf{h}^\kappa, \quad k \in [\kappa],
\]
where we use $L_k : \mathbb{R}^{n_{k-1}} \to \mathbb{R}^{n_k}$ to denote the operation of layer $k$ and is defined as
\[
L_k(\mathbf{h}^{k-1}) := \sigma(W_k \mathbf{h}^{k-1} + \mathbf{b}^k),
\]
with the weight matrix denoted as $W_k \in \mathbb{Q}^{n_k \times n_{k-1}}$, bias vector $\mathbf{b}^k \in \mathbb{Q}^{n_k}$, and activation function $\sigma\colon \mathbb{R}^{n_k} \to \mathbb{R}^{n_k}$, and number of neurons $n_k \in \mathbb{N}$. Following standard conventions, we assume that all weights and biases are rational numbers encoded in binary (as they correspond to trained parameters stored with fixed machine precision), and can be written in the form $\frac{p}{q}$ for some $p, q \in \mathbb{N}$. We additionally assume that $\sigma$ is the standard ReLU activation $\text{ReLU}(\x):=\max(0,\x)$ function. A NAM is a regression or classification model whose core computation takes the form $f_1(\x_{1}) + \cdots + f_n(\x_{n}) + \beta_0$, where each $f_i$ is a univariate neural network and $\beta_0 \in \mathbb{Q}$. We define the \emph{size} of a univariate component $f_i$ as the total bit-length of all its weights and biases, and the size of the full model $f$ as the sum of the sizes of all univariate components $f_i$ together with the bit-length of the global bias term $\beta_0$.


Although we present the formulation in the standard feedforward setting, this entails no loss of generality: any model with arbitrary connections can be converted into an equivalent feedforward network by inserting intermediate layers and adding zero-weight connections so that the computational graph becomes acyclic without changing its function. Moreover, ReLU networks can trivially express basic algebraic operations such as addition, scalar multiplication, and the rectifier $\max(0,\x)$, and can therefore represent any composition of these operations. Consequently, restricting attention to feedforward ReLU architectures is fully general for our purposes.


\section{Proofs} \label{app:proofs}








\subsection{Proof of Proposition~\ref{the_replacement_lemma}}
\label{the_replacement_lemma_sec}

\begin{proposition} 
\label{the_replacement_lemma}
       Given a NAM $f$, an input $\x\in\R^n$ and a perturbation radius $\epsilon_p\in\R_+$, let \cref{alg:parallel_sorting_part_1} return a total list order over the input features according to their importance.
    Then, the following holds: for any sufficient explanation $\explanation$ that includes feature $i$, and for any feature $j \not\in \explanation$ such that $i \prec j$ in the list ordering, the set $\explanation \setminus \{i\} \cup \{j\}$ is also a sufficient explanation.
\end{proposition}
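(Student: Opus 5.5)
The plan is to reduce sufficiency for a NAM to a scalar inequality in per-feature ``worst-case drops''. Then I will show that the ordering returned by \cref{alg:parallel_sorting_part_1} is consistent with the exact drops, so the swap can only make the inequality easier to satisfy. As in the main text, assume w.l.o.g.\ that $f(\x)=1$, i.e.\ $\beta_0+\sum_k f_k(\x_k)\ge 0$; the case $f(\x)=0$ is symmetric, with $\max$ in place of $\min$.

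\textbf{Step 1 (decoupling).} The per-component treatment in \cref{alg:parallel_sorting_part_1} assumes that the perturbation region factorizes coordinatewise into the sets $\mathcal{B}_p^{\epsilon_p}(\x_k)$. This holds exactly for $p=\infty$, which I assume here. Under this assumption the minimum of the logit over all $\xx$ with $\xx_{\explanation}=\x_{\explanation}$ separates into independent univariate minimizations. Define
\begin{equation*}
\Delta_k \coloneqq f_k(\x_k)-f_k(\x_k^*)\ \ge 0 .
\end{equation*}
Then $\mySuff{\nn}{\explanation}=1$ if and only if
\begin{equation*}
\beta_0+\sum_{k} f_k(\x_k)-\sum_{k\in\explanationNot}\Delta_k \ \ge\ 0 .
\end{equation*}
Because the minimizers $\x_k^*$ can be chosen jointly, the worst-case assignment is attained, so this is an equivalence and not merely a sufficient condition.

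\textbf{Step 2 (ordering consistency).} By construction of the bisection, $l_i\le f_i(\x_i^*)\le u_i$, so $\Delta u_i\le\Delta_i\le\Delta l_i$. When \cref{alg:parallel_sorting_part_1} stops, the intervals $[\Delta u_k,\Delta l_k]$ are pairwise non-overlapping. The returned ascending argsort therefore has the following property: if $i\prec j$, then $\Delta l_i\le\Delta u_j$. This gives the chain
\begin{equation*}
\Delta_i\le\Delta l_i\le\Delta u_j\le\Delta_j .
\end{equation*}

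\textbf{Step 3 (swap).} Let $\explanation'=\explanation\setminus\{i\}\cup\{j\}$. Its complement is $\explanationNot\setminus\{j\}\cup\{i\}$. The left-hand side of the sufficiency inequality for $\explanation'$ equals that for $\explanation$ plus $\Delta_j-\Delta_i\ge 0$. Since $\explanation$ is sufficient, $\explanation'$ is sufficient as well.

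\textbf{Main obstacle.} The step needing the most care is Step 1. I need to justify that the joint worst case over the ball equals the sum of per-coordinate worst cases, which is exactly why the $\ell_\infty$ (product-set) interpretation of $\mathcal{B}_p^{\epsilon_p}(\x_i)$ is needed. I would state this explicitly and note that for other $p$ the algorithm's per-feature balls are to be read as the coordinate projections. A secondary obstacle is fixing the direction of ``$\prec$'' so that it matches the ascending sort; the argument above uses $i\prec j\Rightarrow\Delta_i\le\Delta_j$, meaning $i$ is the less important feature.
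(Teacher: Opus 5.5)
Your proof is correct and follows the same route as the paper's: additivity reduces sufficiency to the margin $\beta_0+\sum_k f_k(\x_k)$ being at least the summed worst-case drops of the freed features, and the non-overlapping bisection intervals show that the swap increases this slack by $\Delta_j-\Delta_i\ge 0$. Your version is actually tighter than the paper's in two places. First, the paper compares shifted interval-bound sums for $\explanation$ and $\explanation'$, a step that really rests on your direct chain $\Delta_i\le\Delta l_i\le\Delta u_j\le\Delta_j$. Second, the paper uses the coordinatewise factorization silently, whereas you rightly restrict to $p=\infty$; this restriction is necessary, because for $p<\infty$ the freed coordinates share one perturbation budget and the swap property can fail, so your closing remark about reading the per-feature balls as coordinate projections (which they already are) does not extend the argument.
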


\emph{Proof.} We recall that we have assumed in  \cref{alg:parallel_sorting_part_1} that $f(\x)$ yields a positive prediction, i.e., it is classified as $1$. Accordingly, the final list of bounds $[(\hat{\Delta l}_1,\hat{\Delta u}_1), (\hat{\Delta l}_2,\hat{\Delta u}_2), \ldots, (\hat{\Delta l}_n,\hat{\Delta u}_n)]$ is derived by taking the minimum possible value of each $f_i(\xx_{i})$ and computing lower and upper bounds for $f_i(\x_{i}) - f_i(\xx_{i})$. The proof we present applies symmetrically to the case where the prediction is negative: in that case, we instead take the maximum value of $f_i(\xx_{i})$ and bound $f_i(\xx_{i}) - f_i(\x_{i})$. We defer a detailed discussion of that case to later. Since we are assuming $f(\x) \geq 0$, the condition that $\explanation$ is a sufficient explanation with respect to $\langle f,\x,\epsilon_p\rangle$ means that:

\begin{equation}\label{main_assumption_equation}
\begin{aligned}
    \forall \xx \in \pertBall. \quad f(\x_{\explanation};\xx_{\Bar{\explanation}})\geq 0 \iff \\
\min_{\xx \in \pertBall} f(\x_{\explanation};\xx_{\Bar{\explanation}}) \geq 0. \quad\quad \ \ 
    \end{aligned}
\end{equation}

The value of $f(\x_{\explanation};\xx_{\Bar{\explanation}})$ is obtained by fixing the features in 
$\explanation$ to $\x$ and perturbing the complementary features $\Bar{\explanation}$ to values from $\xx$. Owing to this construction, and to the additive form of the NAM $f$, which can be expressed as $f(\x):=\sum_{t \in [n]} f_t(\x_{t})$, we can establish the following statements:

\begin{equation}
\begin{aligned}
    \sum_{t\in \Bar{\explanation}} \hat{\Delta l}_t \leq 
     (f_t(\x_{t})-\min_{\xx \in \pertBall}\sum_{t\in \Bar{\explanation}}f_t(\xx_{t}))\leq 
    \sum_{t\in \Bar{\explanation}} \hat{\Delta u}_t \iff \\
\sum_{t\in \Bar{\explanation}} \hat{\Delta l}_t \leq 
    \sum_{t\in \Bar{\explanation}} (f_t(\x_{t})-\min_{\xx \in \pertBall}f_t(\xx_{t}))+\sum_{t\in\explanation} (f_t(\x_{t})-f_t(\x_{t}))\leq 
    \sum_{t\in \Bar{\explanation}} \hat{\Delta u}_t \iff \\ 
     \sum_{t\in \Bar{\explanation}} \hat{\Delta l}_t \leq \sum_{t\in [n]} f(\x)- \sum_{t\in \explanation} f_t(\x_{t}) - \sum_{t\in \Bar{\explanation}} \min_{\xx \in \pertBall}f(\xx_{t})\leq 
    \sum_{t\in \Bar{\explanation}} \hat{\Delta u}_t \iff \\ 
    \sum_{t\in \Bar{\explanation}} \hat{\Delta l}_t \leq  (f(\x) -\min_{\xx \in \pertBall}f(\x_{\explanation};\xx_{\Bar{\explanation}}))\leq \sum_{t\in \Bar{\explanation}} \hat{\Delta u}_t. \quad \quad\quad
\end{aligned}
\end{equation}

Given our earlier assumption in Equation~\ref{main_assumption_equation}, we know that $\sum_{t\in \Bar{\explanation}} \hat{\Delta u}_t \geq 0$. Now define $\explanation' := \explanation \cup \{j\} \setminus \{i\}$. By applying the same line of reasoning as before, we obtain:

\begin{equation}
\begin{aligned}
        \sum_{t\in \Bar{\explanation'}} \hat{\Delta l}_t \leq (f(\x) -\min_{\xx \in \pertBall}f(\x_{\explanation'};\xx_{\Bar{\explanation'}}))\leq \sum_{t\in \Bar{\explanation'}} \hat{\Delta u}_t \iff \\
        \sum_{t\in \Bar{\explanation}} \hat{\Delta l}_t+\hat{\Delta l}_j-\hat{\Delta l}_i \leq (f(\x) -\min_{\xx \in \pertBall}f(\x_{\explanation'};\xx_{\Bar{\explanation'}}))\leq \sum_{t\in \Bar{\explanation}} \hat{\Delta u}_t+\hat{\Delta u}_j-\hat{\Delta u}_i.
\end{aligned}
\end{equation}

Since we assume that $i \prec j$ in the ordering of $[(\hat{\Delta l}_1,\hat{\Delta u}_1), (\hat{\Delta l}_2,\hat{\Delta u}_2), \ldots, (\hat{\Delta l}_n,\hat{\Delta u}_n)]$ and that the bounds are \emph{non-intersecting}, it follows that $\hat{\Delta l}_j- \hat{\Delta l}_i\geq 0$ and $\hat{\Delta u}_j- \hat{\Delta u}_i\geq 0$. Consequently, we obtain that $(f(\x) - \min_{\xx \in \pertBall} f(\x_{\explanation'};\xx_{\Bar{\explanation'}}))$ is bounded both above and below by smaller values than $ (f(\x) -\min_{\xx \in \pertBall} f(\x_{\explanation};\xx_{\Bar{\explanation}}))$. This in turn implies that:

\begin{equation}
\begin{aligned}
         (f(\x) -\min_{\xx \in \pertBall}f(\x_{\explanation'};\xx_{\Bar{\explanation'}})) -  (f(\x) -\min_{\xx \in \pertBall} f(\x_{\explanation};\xx_{\Bar{\explanation}})) \leq 0 \iff \\
\min_{\xx \in \pertBall}f(\x_{\explanation};\xx_{\Bar{\explanation}})-\min_{\xx \in \pertBall}f(\x_{\explanation'};\xx_{\Bar{\explanation'}})\leq 0.\quad\quad\quad\quad\quad\quad
     \end{aligned}
\end{equation}

Moreover, since $\min_{\xx \in \pertBall} f(\x_{\explanation};\xx_{\Bar{\explanation}})$ is non-negative by Equation~\ref{main_assumption_equation}, it follows that:

\begin{equation}
\begin{aligned}
    \min_{\xx \in \pertBall} f(\x_{\explanation'};\xx_{\Bar{\explanation'}}) \geq  \min_{\xx \in \pertBall} f(\x_{\explanation};\xx_{\Bar{\explanation}}) \geq 0 \implies \\ \forall \xx \in \pertBall. \quad f(\x_{\explanation'};\xx_{\Bar{\explanation'}})\geq 0. \quad\quad \quad
\end{aligned}
\end{equation}

which establishes that $\explanation'$ constitutes a sufficient explanation for $\langle f,\x,\epsilon_p \rangle$, thereby concluding this part of the proof.

We now turn to the symmetric case, where $f(\x)<0$. In this setting,  \cref{alg:parallel_sorting_part_1} is applied symmetrically by taking the \emph{maximum} admissible value of each $f_i(\xx_{i})$ and deriving corresponding upper and lower bounds for $f_i(\xx_{i})-f_i(\x_{i})$. $[(\hat{\Delta l}_1,\hat{\Delta u}_1), (\hat{\Delta l}_2,\hat{\Delta u}_2), \ldots, (\hat{\Delta l}_n,\hat{\Delta u}_n)]$ is now sorted in descending importance values, instead of ascending. Given the assumption that $f(\x) < 0$, the requirement that 
$\explanation$ constitutes a sufficient explanation with respect to $\langle f,\x,\epsilon_p\rangle$ can be expressed as:

\begin{equation}\label{main_assumption_equation_2}
\begin{aligned}
    \forall \xx \in \pertBall. \quad f(\x_{\explanation};\xx_{\Bar{\explanation}})< 0 \iff \\
\max_{\xx \in \pertBall} f(\x_{\explanation};\xx_{\Bar{\explanation}}) < 0. \quad\quad \ \
    \end{aligned}
\end{equation}

Analogous to the earlier case, leveraging the additive structure of the NAM $f$, which can be written as $f(\x) := \sum_{t \in [n]} f_t(\x_{t})$, together with the definitions of $f(\x_{\explanation};\xx_{\Bar{\explanation}})$ and of the bounds $\hat{\Delta l}_i$ and $\hat{\Delta u}_i$, we can derive the following chain of statements:

\begin{equation}
\begin{aligned}
    \sum_{t\in \Bar{\explanation}} \hat{\Delta l}_t \leq 
    \sum_{t\in \Bar{\explanation}} (\max_{\xx \in \pertBall} f_t(\xx_{t})-f_t(\x_{t}))\leq 
    \sum_{t\in \Bar{\explanation}} \hat{\Delta u}_t \iff \\
\sum_{t\in \Bar{\explanation}} \hat{\Delta l}_t \leq 
    \sum_{t\in \Bar{\explanation}} (\max_{\xx \in \pertBall} f_t(\xx_{t})-f_t(\x_{t}))+\sum_{t\in\explanation} (f_t(\x_{t})-f_t(\x_{t}))\leq 
    \sum_{t\in \Bar{\explanation}} \hat{\Delta u}_t \iff \\ 
     \sum_{t\in \Bar{\explanation}} \hat{\Delta l}_t \leq \sum_{t\in \Bar{\explanation}} \max_{\xx \in \pertBall} f_t(\xx_{t})+ \sum_{t\in \explanation} f_t(\x_{t})  - \sum_{t\in [n]}f_t(\x_{t})  \leq 
    \sum_{t\in \Bar{\explanation}} \hat{\Delta u}_t \iff \\ 
    \sum_{t\in \Bar{\explanation}} \hat{\Delta l}_t \leq  (\max_{\xx \in \pertBall} f(\x_{\explanation};\xx_{\Bar{\explanation}})-f(\x))\leq \sum_{t\in \Bar{\explanation}} \hat{\Delta u}_t. \quad \quad\quad
\end{aligned}
\end{equation}

Since we know that $\sum_{t \in \Bar{\explanation}} \hat{\Delta u}_t \geq 0$, and by defining $\explanation' := \explanation \cup \{j\} \setminus \{i\}$ as before, we can now derive that:

\begin{equation}
\begin{aligned}
        \sum_{t\in \Bar{\explanation'}} \hat{\Delta l}_t \leq  (\max_{\xx \in \pertBall} f(\x_{\explanation'};\xx_{\Bar{\explanation'}})-f(\x))\leq \sum_{t\in \Bar{\explanation'}} \hat{\Delta u}_t \iff \\
        \sum_{t\in \Bar{\explanation}} \hat{\Delta l}_t+\hat{\Delta l}_i-\hat{\Delta l}_j \leq  (\max_{\xx \in \pertBall} f(\x_{\explanation'};\xx_{\Bar{\explanation'}})-f(\x))\leq \sum_{t\in \Bar{\explanation}} \hat{\Delta u}_t+\hat{\Delta u}_i-\hat{\Delta u}_j.
\end{aligned}
\end{equation}

As before, since we assume $i \prec j$ in the ordering of $[(\hat{\Delta l}_1,\hat{\Delta u}_1), (\hat{\Delta l}_2,\hat{\Delta u}_2), \ldots, (\hat{\Delta l}_n,\hat{\Delta u}_n)]$, and given that the bounds are non-intersecting, together with our assumption that this list is sorted by \emph{decreasing} values, it follows that $\hat{\Delta l}_i - \hat{\Delta l}_j \geq 0$ and $\hat{\Delta u}_i - \hat{\Delta u}_j \geq 0$. Consequently, we obtain a different outcome: namely, $\max_{\xx \in \pertBall} \big(f(\x_{\explanation'};\xx_{\Bar{\explanation'}}) - f(\x)\big)$ is bounded above and below by strictly \emph{larger} values than $\max_{\xx \in \pertBall} \big(f(\x_{\explanation};\xx_{\Bar{\explanation}}) - f(\x)\big)$. This in turn implies that:

\begin{equation}
\begin{aligned}
         (\max_{\xx \in \pertBall}f(\x_{\explanation'};\xx_{\Bar{\explanation'}})-f(\x)) -  (\max_{\xx \in \pertBall}f(\x_{\explanation};\xx_{\Bar{\explanation}})-f(\x)) \leq 0 \iff \\
         \max_{\xx \in \pertBall}f(\x_{\explanation'};\xx_{\Bar{\explanation'}}) - \max_{\xx \in \pertBall}f(\x_{\explanation};\xx_{\Bar{\explanation}})\leq 0. \quad\quad\quad\quad\quad \\
\end{aligned}
\end{equation}

Moreover, since Equation~\ref{main_assumption_equation_2} ensures that $\max_{\xx \in \pertBall}f(\x_{\explanation};\xx_{\Bar{\explanation}})$ is negative, we obtain:

\begin{equation}
\begin{aligned}
    \max_{\xx \in \pertBall} f(\x_{\explanation'};\xx_{\Bar{\explanation'}}) \leq  \max_{\xx \in \pertBall} f(\x_{\explanation};\xx_{\Bar{\explanation}}) < 0 \implies \\ \forall \xx \in \pertBall. \quad f(\x_{\explanation'};\xx_{\Bar{\explanation'}}) < 0. \quad\quad \quad \quad
\end{aligned}
\end{equation}

This establishes that $\explanation'$ is a sufficient explanation for $\langle f,\x,\epsilon_p \rangle$. With this, the negative case for $f(\x)$ is resolved, and together with the positive case, the proof is complete.

\hfill $\qedsymbol{}$

\subsection{Proof of Proposition~\ref{runtime_proof_appendix}}
\label{runtime_proof_appendix_sec}

\begin{proposition}
\label{runtime_proof_appendix}
Given $\numProcessors$ parallelized processors,  \cref{alg:parallel_sorting_part_1}, performs an overall number of $T_p(n)=\mathcal{O}\Big(\big(\frac{n}{p}\big)\emph{log}\big(\max_{i \in [n]}(\frac{\beta_i-\alpha_i}{\xi_i}\big)\Big)\xrightarrow[p \to n]{}\mathcal{O}\Big(\emph{log}\big(\max_{i \in [n]}(\frac{\beta_i-\alpha_i}{\xi_i}\big)\Big)$ calls to the verifier, each on a $f_i(\cdot)$ component, where $\xi_i:=\min\{\lvert\hat{\Delta l}_{i+1}-\hat{\Delta u}_{i}\rvert,\lvert\hat{\Delta l}_{i}-\hat{\Delta u}_{i-1}\rvert\}$.
\end{proposition}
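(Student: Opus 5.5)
The plan is to analyze each parallel thread of \cref{alg:parallel_sorting_part_1} separately, bound the number of bisection iterations it needs before its stopping condition holds, and then combine these per-thread bounds using the $\numProcessors$ processors. Each loop iteration issues exactly one verification query, on a single univariate component $f_i$, so counting queries amounts to counting iterations. The overall claim follows from two facts: each thread halts after $\mathcal{O}(\log((\beta_i-\alpha_i)/\xi_i))$ iterations, and the $n$ threads distributed over $\numProcessors$ processors give a factor of $\lceil n/\numProcessors\rceil$.

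\textbf{Step 1 (width invariant).} I would first show by induction that after $t$ iterations of thread $i$ we have $u_i-l_i=(\beta_i-\alpha_i)/2^t$. This holds because $m_i$ is the midpoint and exactly one of $l_i,u_i$ is replaced by it. The interval $[\Delta u_i,\Delta l_i]=[f_i(\x_i)-u_i,\,f_i(\x_i)-l_i]$ therefore also has width $(\beta_i-\alpha_i)/2^t$. Soundness also needs a short argument: each interval keeps containing $f_i(\x_i)-f_i(\x_i^*)$. The verified branch certifies $m_i$ as a lower bound, and in the other branch a counterexample shows that the minimum lies below $m_i$.

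\textbf{Step 2 (stopping time, the main obstacle).} Here I would use the definition of $\xi_i$ as the gap, in the final sorted order, between the interval of feature $i$ and its neighbours. Consider the final configuration of bounds $\hat{\Delta l},\hat{\Delta u}$, and suppose the width of thread $i$ has dropped below $\xi_i$. Then interval $i$ fits strictly within the gap between its sorted neighbours, so the break condition "for all $j\neq i$: $\Delta u_i\ge\Delta l_j$ or $\Delta u_j\ge\Delta l_i$" holds, since non-neighbours are separated even further by the sorting. This is the delicate point, because $\xi_i$ is defined in terms of the final bounds while the threads run concurrently. I would handle it by observing that intervals only shrink and stay nested, so the separation available to thread $i$ at any time is at least the final separation $\xi_i$. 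Consequently thread $i$ must have stopped no later than the first $t$ with $(\beta_i-\alpha_i)/2^t\le\xi_i$, that is, after $t_i\le\lceil\log_2((\beta_i-\alpha_i)/\xi_i)\rceil$ iterations. Taking the maximum over $i$ gives $\max_i t_i=\mathcal{O}(\log\max_i\frac{\beta_i-\alpha_i}{\xi_i})$.

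\textbf{Step 3 (parallel accounting).} With $\numProcessors$ processors, I would schedule the $n$ independent threads in $\lceil n/\numProcessors\rceil$ batches, each taking at most $\max_i t_i$ sequential queries. This gives $T_p(n)=\mathcal{O}\big(\frac{n}{p}\log\max_i\frac{\beta_i-\alpha_i}{\xi_i}\big)$. When $\numProcessors\to n$ there is a single batch, so the bound collapses to $\mathcal{O}(\log\max_i\frac{\beta_i-\alpha_i}{\xi_i})$. A final remark would note that each query involves only the univariate $f_i$, and that the initial incomplete bound extraction adds just $\mathcal{O}(1)$ per thread.
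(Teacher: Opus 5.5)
Your decomposition matches the paper's: halve the width with each query, bound the halvings per feature by $\log((\beta_i-\alpha_i)/\xi_i)$, then spread the work over $\numProcessors$ processors. You rightly flag Step~2 as the delicate point, but your resolution of it is wrong, and the per-thread claim it supports is false.

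Nestedness runs the other way. Since intervals only shrink, every interval at an earlier time is a \emph{superset} of its final one. So the separation between interval $i$ and its neighbours at time $t$ is at most, not at least, the final gap. Thread $i$'s break test compares against the other threads' \emph{current} intervals, so its own width falling below $\xi_i$ does not make it stop.

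Concretely, write $v_k:=f_k(\x_{k})-f_k(\x_{k}^{*})$ and take two threads with $\beta_1-\alpha_1=1$, $v_2=v_1+10$, and thread~2's initial interval $[v_1-1,\,v_1-1+2^{100}]$. Thread~2's interval keeps containing $v_1$ for $96$ rounds, so thread~1 issues $97$ queries. Yet the final gap is $\xi_1\approx 7$, and your bound gives $t_1=\mathcal{O}(1)$.

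The paper's proof asserts the same per-feature count $k_i=\mathcal{O}(\log((\beta_i-\alpha_i)/\xi_i))$ without argument, so it does not supply this step either. What is actually true is only a global bound on the number of rounds, with the maximum over features.

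The missing idea is a pairwise argument about the last round rather than a per-thread one. Run the threads in rounds and suppose thread $i$ is still active in round $R$. Its test just before failed against some $j$. That $j$ had not yet broken, since a thread that breaks is disjoint from all current intervals and stays so as intervals shrink. So the two current widths are at most $(\beta_i-\alpha_i)/2^{R-2}$ and $(\beta_j-\alpha_j)/2^{R-2}$.

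The final intervals of $i$ and $j$ lie inside these two overlapping intervals, so their distance is at most the sum of the two widths. By sortedness, that distance is at least both $\xi_i$ and $\xi_j$, because any other feature lies at least as far away as the neighbour on its side. Taking $k\in\{i,j\}$ with the larger current width gives $\xi_k\le 2^{3-R}(\beta_k-\alpha_k)$. Hence $R\le\log_2\max_k\frac{\beta_k-\alpha_k}{\xi_k}+3$, and no thread makes more than $R$ queries.

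Step~3 has a related flaw: the threads are not independent, so you cannot run them as batches to completion. A thread whose $v_i$ lies inside the initial interval of a not-yet-started thread would never pass its test. You must interleave instead, simulating each round in $\lceil n/\numProcessors\rceil$ steps, which gives $T_{\numProcessors}(n)\le\lceil n/\numProcessors\rceil\cdot R$. This interleaving is also what the paper's ``total work $n\cdot k_{\max}$ divided over $\numProcessors$ processors'' implicitly assumes.
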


\emph{Proof.} The algorithm terminates once no two univariate functions $f_i$ and $f_j$ have overlapping bounds. Because the procedure relies on binary search, each phase divides the current interval into two. Initially, the gap between the upper and lower bounds for a feature $i$ is exactly $\alpha_i - f_i(\x_{i})$. The precision achieved for feature $i$ is limited by the smaller of the two distances: either the distance to the bound of the feature directly above it in the ordering ($i+1$) or the one directly below it ($i-1$). Accordingly, we denote the overall precision for feature $i$ by $\xi_i$ as:

\begin{equation}
\xi_i:=\min\{\lvert\hat{\Delta l}_{i+1}-\hat{\Delta u}_{i}\rvert,\lvert\hat{\Delta l}_{i}-\hat{\Delta u}_{i-1}\rvert\}.
\end{equation}

Overall, given the binary-search procedure, where the interval is split at each iteration, we define the number of splits $k_i$ performed for a single feature $i$ as:

\begin{equation}
\begin{aligned}
    \frac{\beta_i-\alpha_i}{2^{k_i}}\leq \xi_i \iff \\
    k_i\leq \mathcal{O}\big(\log(\frac{\beta_i-\alpha_i}{\xi_i})\big).
\end{aligned}
\end{equation}

Consequently, the feature on which the maximum number of splits is carried out, denoted by $k_{\max}$, is:

\begin{equation}
\begin{aligned}
    k_{max}\leq \mathcal{O}\Big(\max_{i\in[n]}\big(\log(\frac{\beta_i-\alpha_i}{\xi_i})\big)\Big).
\end{aligned}
\end{equation}

Each feature $i \in [n]$ is therefore bounded by at most $k_{max}$ verification queries. Consequently, the total workload is upper bounded by $n \cdot k_{max}$, and when distributed across $\numProcessors$ threads, this yields the following parallelized complexity result $T_p(n)$:

\begin{equation}
T_\numProcessors(n)\leq \mathcal{O}\big((\frac{n}{\numProcessors})\cdot k_{max})\leq \mathcal{O}\Big(\big(\frac{n}{\numProcessors}\big)\log\big(\max_{i \in [n]}(\frac{\beta_i-\alpha_i}{\xi_i}\big)\Big).\end{equation}

This completes the proof.

\hfill $\qedsymbol{}$

\begin{equation}
T_\numProcessors(n)\leq \mathcal{O}\big((\frac{n}{\numProcessors})\cdot k_{max})\leq \mathcal{O}\Big(\big(\frac{n}{\numProcessors}\big)\max\big(\log\big(\max_{i \in [n]}(\frac{\beta_i-\alpha_i}{\xi_i})\big),\xi'\big)\Big).\end{equation}


\subsection{Proof of Proposition~\ref{naive_algorithm_cardinality_proof}}
\label{naive_algorithm_cardinality_proof_sec}

\begin{proposition}
\label{naive_algorithm_cardinality_proof}
Given a NAM $f$, an input $\x \in \mathbb{R}^n$, and a perturbation radius $\epsilon_p \in \mathbb{R}_+$,  \cref{alg:naive_nam_algorithm} performs $\mathcal{O}(n)$ queries and returns a \emph{cardinally-minimal} sufficient explanation. This stands in contrast to  \cref{alg:original_algorithm}, which is only guaranteed to return a \emph{subset minimal} sufficient explanation.
\end{proposition}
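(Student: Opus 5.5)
The query count is immediate: \cref{alg:naive_nam_algorithm} performs exactly one sufficiency check per iteration of its loop over the $n$ features, so it issues $n=\mathcal{O}(n)$ queries. Sufficiency of the output follows from the loop invariant marked in the pseudocode. Initially $\explanation=[n]$ is trivially sufficient, and a feature is removed only when $\mySuff{\nn}{\explanation\setminus\{i\}}$ holds. The real content is therefore cardinal minimality, which I would derive from the exchange property of \cref{the_replacement_lemma}.

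\textbf{Step 1: the returned set is a suffix of the order.} Let $F[1]\prec\cdots\prec F[n]$ be the order from \cref{alg:parallel_sorting_part_1}, and let the algorithm scan features in this order. I would show by induction that at every stage the current set is $\{F[t],\dots,F[n]\}$ minus the features already removed, and that once some feature $F[k]$ is kept, every later feature is also kept. Suppose $F[k]$ is retained, so $\explanation_k\setminus\{F[k]\}$ is not sufficient, where $\explanation_k$ is the current set at that moment. Consider a later step $t>k$, where the current set $\explanation_t\subseteq\explanation_k$ still contains $F[k]$. If $\explanation_t\setminus\{F[t]\}$ were sufficient, I would apply \cref{the_replacement_lemma} with $i=F[t]$? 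This does not directly work, since the lemma moves toward \emph{later} features.

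\textbf{Step 2: the cleaner route is a comparison argument.} Let $\explanation^*$ be any sufficient set with $|\explanation^*|=k^*$. By repeatedly applying \cref{the_replacement_lemma}, I would swap each feature of $\explanation^*$ for a later feature outside it. This transforms $\explanation^*$ into the last $k^*$ features $T_{k^*}:=\{F[n-k^*+1],\dots,F[n]\}$, with each intermediate set sufficient and the cardinality unchanged. So $T_{k^*}$ is sufficient. Note also that \cref{the_replacement_lemma} is only about ordering and not about monotonicity: supersets of sufficient sets are trivially sufficient, because fixing more features shrinks the set of perturbations.

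\textbf{Step 3: the greedy output has size at most $k^*$.} I would argue that when the greedy scan reaches $F[t]$ with $t\le n-k^*$, the current set always contains $T_{k^*}$, and removing $F[t]$ leaves a superset of $T_{k^*}$, which is sufficient by monotonicity. Hence every $F[t]$ with $t\le n-k^*$ is removed, and the output is a subset of $T_{k^*}$ of size at most $k^*$. Since the output is sufficient and $k^*$ is optimal, its size equals $k^*$, so it is cardinally minimal.

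The main obstacle is making this scan-direction consistent with the orientation of $\prec$. \cref{the_replacement_lemma} says that sufficiency is preserved when a feature is swapped toward later positions, so the algorithm must discard the earliest features first. I would state explicitly that \cref{alg:naive_nam_algorithm} iterates in the ascending order returned by \cref{alg:parallel_sorting_part_1}, which is exactly this direction. The negative-prediction case needs the same care, since there the list is sorted descending as noted in the proof of \cref{the_replacement_lemma}, and it follows symmetrically. Finally, the contrast with \cref{alg:original_algorithm} would be illustrated by a small NAM, such as the one in \cref{fig:explanation-local-vs-global}, where a lexicographic scan gets stuck at a larger subset-minimal set.
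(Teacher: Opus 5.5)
Your proof is correct. It uses the same two ingredients as the paper, the exchange property of \cref{the_replacement_lemma} and monotonicity of sufficiency, but assembles them in the opposite direction.

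The paper argues from the greedy's failure point. It takes the feature whose removal was rejected and asserts, without proof, that the remaining set consists of the $|\explanation|-1$ highest-ranked features. It then uses the exchange lemma in contrapositive form: if that top block is insufficient, no set of size $|\explanation|-1$ is sufficient. Finally it applies downward monotonicity to rule out all smaller sets.

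You start instead from an arbitrary optimum $\explanation^*$ and push it forward by exchanges to the top block $T_{k^*}$. Upward monotonicity then forces the scan to delete every $F[t]$ with $t\le n-k^*$, so the output is a sufficient subset of $T_{k^*}$.

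What your route buys is completeness. It proves directly the structural fact about the scan that the paper only asserts. That fact holds for the \emph{first} rejected feature, the least-ranked member of the output; the paper instead names the last feature attempted. Your route also covers $k^*=0$, where no removal ever fails and the paper's argument has nothing to start from. What the paper's route buys is a certificate: the greedy's own rejected query witnesses that no smaller explanation exists.

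Your Step 1 is an abandoned false start that the final argument never uses, so delete it.
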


\emph{Proof.} We begin by noting that the algorithm proceeds iteratively, making $|n|$ calls to the query $\mySuff{\nn}{\explanation \setminus \{i\}}$. Each such query can be encoded using a neural network verifier, which implies that the algorithm requires $\mathcal{O}(n)$ invocations in total. We now turn to proving that  \cref{alg:naive_nam_algorithm} indeed produces a cardinally-minimal sufficient explanation with respect to $\langle f, \x, \epsilon_p \rangle$. First, let us prove that  \cref{alg:naive_nam_algorithm} provides a valid sufficient explanation. This result is straightforward since the last condition that is checked is that: $\mySuff{\nn}{\explanation\setminus\{i\}}$, and after this condition is met $\explanation$ is updated to be $\explanation\setminus \{i\}$ and is returned. Hence, by definition, the sufficiency of the returned subset is satisfied.

We will now demonstrate that the generated set $\explanation$ is a cardinally-minimal sufficient explanation with respect to $\langle f,\x,\epsilon_p\rangle$. Let $1 \leq \ell \leq n$ represent the last feature that was attempted to be removed from $\explanation$ in  \cref{alg:naive_nam_algorithm}. Then, for $\explanation' := \explanation \setminus \{\ell\}$, it follows that: $\mySuff{\nn}{\explanation'}$ does not hold true, implying that $\explanation'$ is \emph{not} a sufficient explanation for $\langle f,\x,\epsilon_p\rangle$. We begin by proving a first lemma that will help us proving our proposition:

\begin{lemma}
\label{first_helpful_lemma}
    Given a NAM $f$, let  \cref{alg:parallel_sorting_part_1} return the sorted, non-intersecting list of pairs: $[(\hat{\Delta l}_1,\hat{\Delta u}_1), (\hat{\Delta l}_2,\hat{\Delta u}_2), \ldots, (\hat{\Delta l}_n,\hat{\Delta u}_n)]$. Then, the following holds: if $\explanation$ that denotes the top $|\explanation|$ features ordered by $[(\hat{\Delta l}_1,\hat{\Delta u}_1), (\hat{\Delta l}_2,\hat{\Delta u}_2), \ldots, (\hat{\Delta l}_n,\hat{\Delta u}_n)]$ is not a sufficient explanation concerning $\langle f,\x,\epsilon_p\rangle$, then any subset $\explanation'\subseteq [n]$ of size $|\explanation|$ is also not a sufficient explanation concerning $\langle f,\x,\epsilon_p\rangle$.
\end{lemma}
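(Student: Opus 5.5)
We prove the contrapositive: if some $\explanation'\subseteq[n]$ with $|\explanation'|=|\explanation|$ is sufficient, then the top-$|\explanation|$ set $\explanation$ is sufficient too. The idea is to transform $\explanation'$ into $\explanation$ by a sequence of single-element swaps, each of which preserves sufficiency by Proposition~\ref{the_replacement_lemma}.

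Here ``top'' means the $|\explanation|$ most important features under the ordering returned by \cref{alg:parallel_sorting_part_1}, i.e., the features with the largest $\hat{\Delta}$ values, whose perturbation can push $f$ furthest toward the decision boundary. In the precedence relation of Proposition~\ref{the_replacement_lemma}, these are the features $j$ that one swaps \emph{into} the explanation. So ``top'' is read consistently with that proposition's orientation: any feature in $\explanation$ dominates any feature outside $\explanation$.

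The argument is an induction on $d:=|\explanation'\setminus\explanation|$, which equals $|\explanation\setminus\explanation'|$ because the two sets have the same size.
\begin{itemize}
\item If $d=0$, then $\explanation'=\explanation$ and there is nothing to prove.
\item If $d>0$, pick $i\in\explanation'\setminus\explanation$ and $j\in\explanation\setminus\explanation'$. Then $j\notin\explanation'$, and $i$ precedes $j$ in the sense required by Proposition~\ref{the_replacement_lemma}, since $j$ is a top feature and $i$ is not.
\item Because the intervals are pairwise non-intersecting, this comparison is strict and unambiguous. The proposition then shows that $\explanation'\setminus\{i\}\cup\{j\}$ is sufficient.
\item This new set has the same cardinality as $\explanation'$ and differs from $\explanation$ in $d-1$ elements, so the induction closes.
\end{itemize}
The conclusion is that $\explanation$ is sufficient, which contradicts the hypothesis of the lemma.

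The main obstacle is not the swapping argument, which is routine. It is making sure the direction of $\prec$ matches ``top'' in both sign cases, $f(\x)\ge 0$ (ascending sort) and $f(\x)<0$ (descending sort). To settle this, I will check it directly against the bound chain in the proof of Proposition~\ref{the_replacement_lemma}. Swapping out $i$ and in $j$ replaces $\hat{\Delta}_j$ by $\hat{\Delta}_i$ in $\sum_{t\in\bar{\explanation}}\hat{\Delta}_t$. If this sum never increases, the worst-case margin $\min f(\x_{\explanation};\xx_{\bar{\explanation}})$ never decreases, and that is exactly what the swap needs. For this step I also need that the ordering makes the true per-feature drops $f_t(\x_t)-\min f_t$ comparable, which the disjointness of the intervals $[\hat{\Delta l}_t,\hat{\Delta u}_t]$ guarantees. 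I will state this orientation convention explicitly at the start of the proof.
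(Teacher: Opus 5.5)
Your proposal is correct and follows the paper's argument: assume a same-size sufficient $\explanation'$ exists and turn it into the top set $\explanation$ by single swaps, each justified by Proposition~\ref{the_replacement_lemma}, which gives a contradiction. Your induction on $|\explanation'\setminus\explanation|$ swaps only elements of the symmetric difference, so each swap visibly meets the proposition's requirement that the incoming feature lie outside the current set. That makes it slightly cleaner bookkeeping than the paper's rank-matching map.
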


\emph{Proof.} We begin by noting that $\explanation\subseteq [n]$ is \emph{not} a sufficient explanation with respect to $\langle f,\x,\epsilon_p\rangle$. Assume, for contradiction, that there exists some $\explanation'\neq \explanation$ of the same cardinality as $\explanation$ that \emph{is} a sufficient explanation with respect to $\langle f,\x,\epsilon_p\rangle$. Since both $\explanation$ and $\explanation'$ have equal size, we can map each feature in $\explanation'$ with one in $\explanation$ according to their position in the ordering $[(\hat{\Delta l}_1,\hat{\Delta u}_1), (\hat{\Delta l}_2,\hat{\Delta u}_2), \ldots, (\hat{\Delta l}_n,\hat{\Delta u}_n)]$.

By definition, $\explanation$ consists of the top $|\explanation|$ features in this ordering. Consequently, under the mapping, each feature in $\explanation'$ is mapped to a feature of strictly higher or equal rank in $\explanation$. Now consider a sequence of replacements: at each step, replace a feature of $\explanation'$ with its corresponding equivalent or higher-ranked feature from $\explanation$. \cref{the_replacement_lemma} ensures that each such replacement preserves sufficiency, since a ``lower-ranked'' feature is being swapped for a ``higher-ranked'' one. Iterating this process eventually transforms $\explanation'$ into $\explanation$, while preserving sufficiency throughout. Thus, $\explanation$ must itself be a sufficient explanation with respect to $\langle f,\x,\epsilon_p\rangle$ --- contradicting the initial assumption that it is not. This completes the proof.

\hfill $\qedsymbol{}$

From \cref{first_helpful_lemma}, since the features in $\explanation'\setminus \{\ell\}$ are the features with the highest $|\explanation'|=|\explanation|-1$ orderings, it holds that any subset $\explanation''\subseteq [n]$ of size $|\explanation'|$ is not a sufficient explanation. To conclude the remaining parts of our proof, we now will make use of another lemma:

\begin{lemma}
\label{monotonicity_lemma}
    Let there be some $f$, $\x$, and $\epsilon_p$. Then, if $\explanation\subseteq[n]$ is not a sufficient explanation concerning $\langle f,\x,\epsilon_p\rangle$, then any $\explanation'\subseteq \explanation$ is not a sufficient explanation w.r.t $\langle f,\x,\epsilon_p\rangle$.
\end{lemma}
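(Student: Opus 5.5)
The plan is to prove the contrapositive: if some $\explanation'\subseteq\explanation$ is sufficient, then $\explanation$ is sufficient. This is a direct monotonicity property of \cref{def:suff-explanation}. It needs nothing specific to NAMs, so it holds for any model $f$.

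Assume $\explanation'\subseteq\explanation$ satisfies $\mySuff{\nn}{\explanation'}=1$, that is, $f(\x_{\explanation'};\xx_{\explanationNot'})=f(\x)$ for every $\xx\in\pertBall$. Take an arbitrary $\xx\in\pertBall$. Define the hybrid vector $\z\coloneqq(\x_{\explanation};\xx_{\explanationNot})$, which agrees with $\x$ on $\explanation$ and with $\xx$ on $\explanationNot$.

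The first step is to check that $\z\in\pertBall$. Its coordinates satisfy $|\z_t-\x_t|=0$ for $t\in\explanation$ and $|\z_t-\x_t|=|\xx_t-\x_t|$ for $t\in\explanationNot$. Hence $|\z_t-\x_t|\le|\xx_t-\x_t|$ coordinatewise, and by monotonicity of $\ell_p$-norms in the absolute values of the coordinates, $\|\z-\x\|_p\le\|\xx-\x\|_p\le\nnPertRadius$.

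The second step is to observe that $(\x_{\explanation};\xx_{\explanationNot})=(\x_{\explanation'};\z_{\explanationNot'})$. On $\explanation'$ both vectors equal $\x$. On $\explanation\setminus\explanation'$ we have $\z=\x$, so both equal $\x$. On $\explanationNot$ both equal $\xx$, since $\explanationNot\subseteq\explanationNot'$. Applying the sufficiency of $\explanation'$ to the admissible point $\z$ then gives $f(\x_{\explanation};\xx_{\explanationNot})=f(\x)$. Since $\xx$ was arbitrary, $\explanation$ is sufficient, which is the contrapositive of the lemma.

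The only point requiring care is the membership $\z\in\pertBall$. In particular, we must read the definition as quantifying over full vectors $\xx$ in the ball, so that the hybrid point is itself a legitimate perturbation. With the coordinatewise norm comparison above, this goes through for every $p\in[1,\infty]$.
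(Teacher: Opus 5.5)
Your proof is correct and is essentially the paper's argument: both build the hybrid point that agrees with $\x$ on $\explanation$ and with the perturbation outside it, then apply the sufficiency of $\explanation'$ to that point. You phrase it as a contrapositive where the paper argues by contradiction, and your coordinatewise $\ell_p$ check that the hybrid point lies in $\pertBall$ supplies a step the paper leaves implicit.
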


\emph{Proof.} If $\explanation \subseteq [n]$ is not a sufficient explanation with respect to $\langle f,\x,\epsilon_p\rangle$, then:

\begin{equation}
\label{equation_sufficient_reference}
 \exists \z\in \pertBall. \quad \nn(\x_{\explanation};\z_{\explanationNot})\neq \nn(\x).
\end{equation}

Assume, towards contradiction, that there exists some $\explanation' \subseteq \explanation$ which is a sufficient explanation. In other words:

\begin{equation}
\label{second_equation_Refference}
    \forall \xx \in \pertBall. \quad \nn(\x_{\explanation'};\xx_{\Bar{\mathcal{S}'}})= \nn(\x).
\end{equation}

However, consider a vector $\z'$ obtained by fixing the features in $\explanation'$ to $\x$, the features in $\explanation \setminus \explanation'$ also to $\x$, and setting all remaining coordinates according to $\z$. By the earlier implication from Equation~\ref{second_equation_Refference}, we must then have that: $\nn(\x_{\explanation'};\z'_{\Bar{\mathcal{S}'}})\neq \nn(\x)$ and this contradicts the assumption that $\explanation'$ is sufficient (Equation~\ref{equation_sufficient_reference}).

\hfill $\qedsymbol{}$

We now proceed with the remaining part of proving our proposition. Since we know that there is no explanation of size $|\explanation'|=|\explanation|-1$ concerning $\langle f,\x,\epsilon_p\rangle$ from the previous part of the proof, we now can use the result in \cref{monotonicity_lemma} to conclude that none of the subsets of these subsets of size $|\explanation'|$ is not a sufficient explanation too, which implies that there does not exist any explanation of size \emph{lower or equal} to  $|\explanation'|-1$ which is a sufficient explanation of $\langle f,\x,\epsilon_p\rangle$, which proves that $\explanation$ is a cardinally-minimal sufficient explanation, hence concluding the proof.

\hfill $\qedsymbol{}$

\subsection{Proof of Proposition~\ref{second_runtime_appendix}}
\label{second_runtime_appendix_sec}

\begin{proposition}
\label{second_runtime_appendix}
Given a NAM $f$, an input $\x \in \mathbb{R}^n$, and a perturbation radius $\epsilon_p \in \mathbb{R}_+$,  \cref{alg:naive_nam_algorithm} performs $\mathcal{O}(\emph{log}(n))$ queries and returns a \emph{cardinally-minimal} sufficient explanation. 
\end{proposition}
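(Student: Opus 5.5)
The proposition refers to \cref{alg:binary_nam_algorithm}; the reference to \cref{alg:naive_nam_algorithm} in the statement appears to be a typo, since the linear algorithm was already handled in \cref{naive_algorithm_cardinality_proof}. The plan is to show that the predicate $P(m) := \mySuff{\nn}{\{F[1],\ldots,F[m]\}}$ is \emph{monotone} in $m$ along the total order $F$ returned by \cref{alg:parallel_sorting_part_1}. Binary search over a monotone predicate finds the threshold with $\mathcal{O}(\log n)$ evaluations. Correctness then follows by identifying that threshold with the cardinally-minimal size.

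\textbf{Step 1 (monotonicity).} Let $m \le m'$ and suppose $P(m)$ holds. The prefix $\{F[1],\ldots,F[m]\}$ is contained in $\{F[1],\ldots,F[m']\}$. The contrapositive of \cref{monotonicity_lemma} says that supersets of sufficient sets are sufficient, so $P(m')$ holds. Hence $P$ is false on an initial segment $\{1,\ldots,k^*-1\}$ and true from some $k^*$ onward. Note that $P(n)$ holds trivially, since $\explanation=[n]$ is sufficient.

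\textbf{Step 2 (the threshold is the optimum).} Let $k^*$ be the smallest $m$ with $P(m)$ true. Suppose some sufficient $\explanation'$ had $|\explanation'| < k^*$. Then the prefix of size $|\explanation'|$ is not sufficient, and \cref{first_helpful_lemma} shows that no set of that size is sufficient, a contradiction. Therefore $\{F[1],\ldots,F[k^*]\}$ is sufficient and of minimum cardinality.

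\textbf{Step 3 (the search finds $k^*$, in $\mathcal{O}(\log n)$ queries).} I would state the loop invariant that $k^*$ lies in $[l,u]$ and establish it by induction using Step 1. The main obstacle is that the pseudocode as written searches for the \emph{largest} true index: it sets $l\gets m$ on success, and with the floor midpoint this can even fail to shrink the interval. The update must instead be taken in the intended form, namely $u\gets m$ when $P(m)$ holds and $l\gets m+1$ otherwise, returning $\{F[1],\ldots,F[l]\}$. Under that reading the invariant is immediate. Each iteration roughly halves $u-l$, so there are at most $\lceil\log_2 n\rceil$ iterations, each making exactly one sufficiency query to $f$. This gives $\mathcal{O}(\log n)$ queries in total, and at termination $l=u=k^*$, so Step 2 yields cardinal minimality.
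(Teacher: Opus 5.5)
Your proof is correct under the intended reading of the algorithm, and you are right that the statement should refer to \cref{alg:binary_nam_algorithm}. It uses the same two ingredients as the paper but organizes them differently.

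Your Step~1 is the paper's \cref{monotonicity_lemma_2}. After that, the paper does not identify the threshold with the optimum directly. It asserts that the binary search returns the same set as the linear greedy \cref{alg:naive_nam_algorithm}, and inherits cardinal minimality from \cref{naive_algorithm_cardinality_proof}; that proof is where \cref{first_helpful_lemma} enters. You apply \cref{first_helpful_lemma} at the threshold $k^*$ yourself. This is shorter, and it avoids the paper's unproved claim that the one-at-a-time greedy removal leaves exactly a block of top-ranked features (establishing that claim would need a further use of \cref{the_replacement_lemma}).

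Your invariant $k^*\in[l,u]$ also gives the termination and correctness argument for the search, which the paper only describes in words. Your diagnosis of the printed update rule is accurate: it moves in the wrong direction, and it makes no progress when $u=l+1$ under the floor midpoint. In exchange, the paper's route presents the logarithmic search as a drop-in acceleration that returns the very set the linear algorithm returns.

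Two conventions, both inherited from the paper's pseudocode, must be made explicit for your Step~2 to hold.

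First, \cref{first_helpful_lemma} concerns the \emph{top}-ranked features, i.e.\ those with the largest drops $f_i(\x_i)-\min f_i$ over the perturbation interval. But \cref{alg:parallel_sorting_part_1} returns an \emph{ascending} argsort, so $F$ must be that list reversed. Taken literally, your Step~2 fails. Take two features with drops $0.1$ and $10$ and $\beta_0+\sum_i f_i(\x_i)=1$. The prefix $\{F[1]\}$ frees the high-drop feature and is insufficient, so $k^*=2$. Yet fixing the high-drop feature alone is sufficient.

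Second, you index $P$ from $m=1$. If $\emptyset$ is already sufficient, Step~2 breaks (the prefix of size $0$ is sufficient) and the search returns a singleton. Start at $m=0$, i.e.\ initialize $l\gets 0$, to cover this case.
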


\emph{Proof.} We will show that, given the ordering of features $[(\hat{\Delta l}_1,\hat{\Delta u}_1), (\hat{\Delta l}_2,\hat{\Delta u}_2), \ldots, (\hat{\Delta l}_n,\hat{\Delta u}_n)]$ there exists exactly one index $i\in[n]$ such that $\explanation=[i+1]$ is a sufficient explanation while $\explanation'=[i]$ is not. Moreover, this statement holds for any ordering. Our proof follows as a consequence of a lemma closely related to \cref{monotonicity_lemma}, which we restate and establish below:

\begin{lemma}
\label{monotonicity_lemma_2}
    Let there be some $f$, $\x$, and $\epsilon_p$. Then, if $\explanation\in[n]$ is a sufficient explanation concerning $\langle f,\x,\epsilon_p\rangle$, then any $\explanation'$ for which $\explanation\subseteq \explanation'$ is also a sufficient explanation w.r.t $\langle f,\x,\epsilon_p\rangle$.
\end{lemma}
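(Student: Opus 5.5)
We prove \cref{monotonicity_lemma_2} directly from \cref{def:suff-explanation}, using a free-coordinate restriction argument. No NAM structure is needed: the statement holds for any model $f$.

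Fix a sufficient explanation $\explanation$ and a superset $\explanation'\supseteq\explanation$. The goal is to show
\[
\forall \xx\in\pertBall\colon\ f(\x_{\explanation'};\xx_{\Bar{\explanation'}})=f(\x).
\]

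\textbf{Step 1 (build an auxiliary point).} Given an arbitrary $\xx\in\pertBall$, define $\z$ coordinatewise:
\begin{itemize}
\item $\z_t:=\x_t$ for $t\in\explanation'$;
\item $\z_t:=\xx_t$ for $t\in\Bar{\explanation'}$.
\end{itemize}
By construction, $(\x_{\explanation'};\xx_{\Bar{\explanation'}})=(\x_{\explanation};\z_{\Bar{\explanation}})$. This holds because $\explanation\subseteq\explanation'$: every coordinate in $\explanation'\setminus\explanation$ is free in the second assignment, and we have set it to $\x_t$ through $\z$.

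\textbf{Step 2 (check that $\z$ is admissible).} We must show $\z\in\pertBall$, i.e. $\|\x-\z\|_p\le\nnPertRadius$. Compared with $\x-\xx$, the vector $\x-\z$ agrees on $\Bar{\explanation'}$ and is zero on $\explanation'$. Since every $\ell_p$ norm is monotone under zeroing out coordinates (including $p=\infty$), we get $\|\x-\z\|_p\le\|\x-\xx\|_p\le\nnPertRadius$.

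\textbf{Step 3 (conclude).} Sufficiency of $\explanation$, applied at $\z$, gives $f(\x_{\explanation};\z_{\Bar{\explanation}})=f(\x)$. Combining this with the identity from Step 1 yields $f(\x_{\explanation'};\xx_{\Bar{\explanation'}})=f(\x)$. Since $\xx$ was arbitrary, $\explanation'$ is sufficient.

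The only step with real content is Step 2, which depends on how $\pertBall$ interacts with partially fixed assignments. If instead the ball is read as constraining only the free coordinates, the same zeroing argument still applies verbatim. Once the lemma is established, uniqueness of the threshold index in the proof of \cref{second_runtime_appendix} follows: along the prefix chain $[1]\subseteq[2]\subseteq\cdots$, sufficiency is upward closed by this lemma, and non-sufficiency is downward closed by \cref{monotonicity_lemma}. The sufficiency predicate is therefore monotone along the chain, which is what licenses the binary search.
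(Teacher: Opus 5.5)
Your proof is correct. The paper does not argue directly from the definition. It proves this lemma in one line as the contrapositive of \cref{monotonicity_lemma}: if $\explanation'$ were insufficient, then every subset of it, including $\explanation$, would be insufficient.

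That lemma is in turn proved by the same auxiliary-point construction you use. The paper fixes the coordinates of the larger set to $\x$ and copies the counterexample on the remaining coordinates. So your version is essentially the paper's argument with the contrapositive unwound.

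What your direct route adds is that it is self-contained, and your Step 2 explicitly checks that the auxiliary point stays in $\pertBall$. You do this via monotonicity of $\ell_p$ norms under zeroing coordinates. The paper leaves this check implicit, and it is the only step that needs any care.
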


\emph{Proof.} This follows directly from \cref{monotonicity_lemma}. Since $\explanation$ is known to be a sufficient explanation, assume for contradiction that there exists some $\explanation' \subseteq [n]$ with $\explanation \subseteq \explanation'$ such that $\explanation'$ is not a sufficient explanation. By \cref{monotonicity_lemma}, this would imply that no subset $\explanation'' \subseteq \explanation'$ could be a sufficient explanation --- contradicting the fact that $\explanation \subseteq \explanation'$ is sufficient.

\hfill $\qedsymbol{}$

To conclude the proof of the proposition, consider iterating over the features sequentially according to their ordering. We begin with the empty set $\emptyset$ and test whether it is a sufficient explanation with respect to $\langle f,\x,\epsilon_p\rangle$. If it is not, we proceed by adding features one at a time: first $\{1\}$, then $\{1,2\}$, then $\{1,2,3\}$, and so forth. Eventually, we encounter some feature $i\in[n]$ such that $[i]$ is sufficient with respect to $\langle f,\x,\epsilon_p\rangle$. By \cref{monotonicity_lemma_2}, any $\explanation$ satisfying $[i]\subseteq \explanation$ is also sufficient. Hence, the unique transition from insufficiency to sufficiency occurs between $[i-1]$ and $[i]$, and there can be no later index $j>i$ for which $[j]$ reverts to being non-sufficient before becoming sufficient again.

Since we have already established this claim, it follows that the binary search in  \cref{alg:binary_nam_algorithm}, which halts upon identifying the first feature $i$ where $[i]$ is sufficient but $[i-1]$ is not, will return the same subset as the iterative ``naive''  \cref{alg:naive_nam_algorithm}, which incrementally traverses features in a greedy manner and outputs $[i]$. Moreover, \cref{naive_algorithm_cardinality_proof} shows that  \cref{alg:naive_nam_algorithm} always converges to a cardinally-minimal explanation. Consequently,  \cref{alg:binary_nam_algorithm} must also converge to this same cardinally-minimal explanation, but with only $\mathcal{O}(\log(n))$ sufficiency checks rather than $\mathcal{O}(n)$. This completes the proof.

\hfill $\qedsymbol{}$

\subsection{Computational Complexity results on Finding Cardinally-Minimal Sufficient Explanations in NAMs}



In this subsection, we study the nature of the computational complexity of obtaining cardinally minimal sufficient explanations for NAMs. For clarity of presentation, we begin by formally defining and naming the computational problem under consideration.



\textbf{Minimum Sufficient Explanation for NAMs (MSE-NAM)}.\\
\textbf{Input.} A NAM $f\colon\mathbb{R}^n\to\{0,1\}$, an input $\x\in\mathbb{R}^n$, and an $\epsilon\in\mathbb{Q}$ perturbation around $\x$. \\
\textbf{Output.} The smallest $k\in\mathbb{N}$ for which $\explanation\subseteq[n]$ is a sufficient explanation concerning $\langle f,\x,\epsilon\rangle$, such that $|\explanation|=k$.


The proof is divided into a membership component and a hardness component. We begin by proving membership.


\begin{claim}
\label{claim_to_ref}
    The MSE-NAM problem is in ${\text{FP}}^{\text{NP}}[\mathcal{O}(N)]$, where $N$ represents the model size encoding.
\end{claim}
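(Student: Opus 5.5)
Our plan is to give a polynomial-time algorithm that makes $\mathcal{O}(N)$ NP-oracle queries and outputs the minimum size $k$. The structure of Alg.~\ref{alg:parallel_sorting_part_1} followed by Alg.~\ref{alg:binary_nam_algorithm} is mirrored with exact arithmetic. The basic primitive is the decision problem ``$\explanation$ is \emph{not} sufficient for $\langle f,\x,\epsilon\rangle$''. This problem is in NP, because a witness is a point $\xx\in\pertBall$ with $f(\x_{\explanation};\xx_{\explanationNot})\neq f(\x)$. The witness must have polynomial size, and we obtain this from the standard ReLU argument: fix the activation pattern of every univariate component, so that the region becomes a polyhedron described by the rational weights. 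If a counterexample exists, then one exists at a vertex of such a polyhedron, whose bit-length is polynomial in $N$. For $p\in\{1,\infty\}$ the ball is polyhedral. For $p=2$ we only need to perturb one coordinate per component, and a per-coordinate ball is again an interval, so the same argument applies. By the same reasoning, a query of the form ``$\min_{\xx_i}f_i(\xx_i)\geq m$'' on a single univariate component is in coNP, so its complement is in NP.

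Second, we show that the ordering step needs only polynomially many bits of precision. Each extremum $f_i(\x_i^*)$ is attained at a vertex of a one-dimensional piecewise-linear function: either an endpoint of the interval or a breakpoint of $f_i$. It is therefore a rational number whose numerator and denominator have bit-length polynomial in the size $N_i$ of $f_i$. Two distinct such rationals differ by at least $2^{-\mathrm{poly}(N_i+N_j)}$. Consequently, bisecting $[\alpha_i,\beta_i]$ (itself of bit-length polynomial in $N_i$) needs only $\mathcal{O}(N_i)$ oracle queries, up to the polynomial factor of the encoding, to isolate $\Delta_i:=f_i(\x_i)-f_i(\x_i^*)$ exactly. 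The bisection is run with threshold queries as in Alg.~\ref{alg:parallel_sorting_part_1}. After that, the exact value is recovered by continued-fraction rounding (the standard Kwek--Mehlhorn / simultaneous approximation step), which is done in polynomial time without further queries. Summing over components gives $\sum_i\mathcal{O}(N_i)=\mathcal{O}(N)$ queries.

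With the exact values $\Delta_i$ in hand, ties pose no difficulty: sufficiency of $\explanation$ is equivalent to $f(\x)-\sum_{t\in\explanationNot}\Delta_t\geq 0$, or the symmetric strict inequality for a negative prediction. This is exactly the identity used in the proof of Proposition~\ref{the_replacement_lemma}. Hence the minimum $k$ can be computed \emph{without any further oracle calls}. We sort the $\Delta_i$ in decreasing order, perturb the largest possible number of features with the smallest $\Delta$ while the sum condition still holds, and output the size of the complement. Optimality follows from the exchange argument of Lemma~\ref{first_helpful_lemma} together with Lemma~\ref{monotonicity_lemma}, applied directly to exact values. Alternatively, one can append the $\mathcal{O}(\log n)$ binary-search queries of Alg.~\ref{alg:binary_nam_algorithm}, which does not change the bound since $n\leq N$.

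We expect the main obstacle to be the second step: rigorously bounding the bit-length of $f_i(\x_i^*)$ by $\mathrm{poly}(N_i)$ while still keeping the number of queries \emph{linear} rather than polynomial in $N$. If the required precision were $\mathrm{poly}(N_i)$ bits, bisection would cost $\mathrm{poly}(N_i)$ queries. To handle this, we will argue via Cramer's rule that the breakpoints and values of a univariate ReLU network are rationals of bit-length $\mathcal{O}(N_i)$. This works because each pre-activation along a fixed linear piece is a composition whose coefficients' sizes add up layer by layer, and composition is bounded by total parameter length. If only a polynomial bound can be established, we will instead state the claim with $N$ understood as the polynomially bounded precision parameter, or else pad the encoding.
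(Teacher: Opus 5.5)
Your proposal is correct and follows essentially the paper's route. The paper likewise computes the exact extremum of each univariate component, whose bit-length is linear in $N$ by the fixed-activation-pattern (LP) structure, and packages this as an $\text{OptP}[\mathcal{O}(N)]$ computation; its conversion to $\text{FP}^{\text{NP}}[\mathcal{O}(N)]$ via Krentel's inclusion is exactly your explicit bisection, after which $k$ is obtained by polynomial-time post-processing over the induced total order.

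Your layer-by-layer composition bound does give a genuinely linear bit-length, since denominators multiply across layers and magnitudes grow by at most $n_{k-1}\cdot\max|W_k|$ per layer, plus the bias. So the fallback of redefining $N$ or padding is unnecessary. Do, however, drop the $p\in\{1,2\}$ remarks: those balls couple the perturbed coordinates, so your third-step identity $f(\x)-\sum_{t\in\explanationNot}\Delta_t\geq 0$ holds only for the box $B_{\epsilon}(\x)$ that MSE-NAM actually uses.
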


\emph{Proof.} We establish membership in ${\text{FP}}^{\text{NP}}[\mathcal{O}(N)]$ by demonstrating that the problem can be solved via an $\text{OptP}[\mathcal{O}(N)]$ computation followed by a polynomial-time post-processing step, which together place the problem in ${\text{FP}}^{\text{NP}}[\mathcal{O}(N)]$~\citep{krentel1986complexity}. We begin with the first part of the proof:


\begin{lemma}\label{to_cite_finish}Consider a ReLU neural network $f\colon\mathbb{R}^n\to\mathbb{R}^n$, an input $\x\in\mathbb{R}^n$, and a radius $\epsilon\in\mathbb{Q}$. Define the $\epsilon$-ball around $\x$ as $B_{\epsilon}(\x) := \{ \z \in \mathbb{R}^n\colon \x_{i} - \epsilon \le \z_{i} \le \x_{i} + \epsilon\}$  for all $i \in [n]$. Then, the problem of computing $\max_{\z\in B_{\epsilon}(\x)} ( f(\z)_{1} + \cdots + f(\z)_{n})$
lies in $\text{OptP}[\mathcal{O}(N)]$.
\end{lemma}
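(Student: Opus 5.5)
The plan is to exhibit the optimization as the maximum of a polynomial-time computable value function over polynomial-length witnesses, where the witness encodes a ReLU activation pattern together with the optimal input point. The structural fact we rely on is that a ReLU network is piecewise linear. Once an activation pattern (active/inactive for every hidden neuron) is fixed, the set of inputs in $B_{\epsilon}(\x)$ consistent with that pattern is a polyhedron $P_\sigma$, described by linear inequalities with rational coefficients of polynomial bit-length in $N$. On $P_\sigma$ the objective $f(\z)_1+\cdots+f(\z)_n$ coincides with an affine function $c_\sigma^\top \z + d_\sigma$. Hence the global maximum equals $\max_\sigma \max_{\z\in P_\sigma} (c_\sigma^\top\z+d_\sigma)$. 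Each inner problem is a linear program over a bounded, nonempty (when feasible) polytope, so its optimum is attained at a vertex.

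Concretely, the steps are as follows.

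\begin{inparaenum}[(i)]
\item Define the witness $Y$ as a pair $(\sigma,\z)$. Here $\sigma$ is an activation pattern, using one bit per neuron, and $\z$ is a vertex of $P_\sigma$. The vertex $\z$ is given as the solution of a subsystem of $n$ tight constraints, or directly as a rational vector.
\item Bound the bit-length of such vertices. By Cramer's rule and Hadamard's inequality, any vertex of a polytope defined by constraints of total encoding size polynomial in $N$ has entries of bit-length polynomial in $N$. Therefore $|Y|\le p(N)$ for some polynomial $p$.
\item Define the value function $G(X,Y)$ as follows. It checks in polynomial time that $\z\in B_\epsilon(\x)$ and that $\z$ induces exactly pattern $\sigma$; this amounts to evaluating the network on the rational point $\z$. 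If the checks pass, it outputs the (scaled) objective value; otherwise it outputs a minimal sentinel value. Then $F(X)=\max_Y G(X,Y)$ recovers the true maximum, since the maximizer is a vertex of some $P_\sigma$. The consistency check can even be dropped, because any feasible $\z$ yields a valid value $f(\z)$.
\item Make the output integral. Multiply by a common denominator $D$ of all achievable vertex values. $D$ can be bounded by $2^{\mathrm{poly}(N)}$ and computed from the encoding. The integer output then has $\mathcal{O}(N)$-type bit-length, which matches the $\text{OptP}[\mathcal{O}(N)]$ bound measured as in~\citep{krentel1986complexity}, i.e., the number of output bits.
\end{inparaenum}

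The main obstacle is step (iv), together with the precise accounting that makes the bound $\mathcal{O}(N)$ rather than merely polynomial in $N$. A vertex of a general LP can have denominators whose bit-length is polynomial, not linear, in the input size. To keep this linear, I would first try to exploit the NAM structure that the lemma is applied to. Each coordinate interacts only with its own univariate component $f_i$. Hence $P_\sigma$ is a product of intervals, and each vertex coordinate is a breakpoint of a univariate piecewise-linear map. Its value is determined by a composition along a single path of layers, so its bit-length grows additively in the sizes of the weights of that component. Summing over components then gives a total output bit-length of $\mathcal{O}(N)$. If this additive bound turns out to be too delicate, the fallback is to argue the output size is $\mathcal{O}(N)$ up to the polynomial scaling permitted by the smoothness condition of~\citep{krentel1986complexity}.
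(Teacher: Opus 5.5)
\textbf{Verdict: genuine gap.} Your set-up agrees with the paper's. The paper's witness is the activation pattern $\sigma$ alone, and $G$ solves the Salzer-style LP for that pattern. You also guess the optimal vertex and just evaluate $f$; both variants are fine for OptP membership.

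\textbf{Where the proposal fails.} The gap is in the step that gives the lemma its content, the $\mathcal{O}(N)$ bound on the output length. The lemma is about an \emph{arbitrary} ReLU network $f\colon\mathbb{R}^n\to\mathbb{R}^n$.
\begin{itemize}
\item Your main route (separability, $P_\sigma$ a product of intervals) proves only the NAM special case. That is enough for Claim~\ref{claim_to_ref}, but it is not the lemma as stated.
\item Your fallback does not work. Krentel's smoothness condition is a regularity requirement on the bound function $z(\cdot)$; it does not absorb a polynomial blow-up. With $\text{OptP}[\mathrm{poly}(N)]$ you would get only polynomially many NP queries in Claim~\ref{claim_to_ref}, which is exactly what the lemma is meant to avoid.
\item Step (iv) is unsound as stated. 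The candidate values range over exponentially many (pattern, basis) pairs whose denominators are different determinants. Nothing bounds a common denominator of all of them by $2^{\mathrm{poly}(N)}$.
\end{itemize}

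\textbf{The missing idea.} Bound the scalar optimum, not the vertex vector, and do it in the lifted LP rather than in input space.
\begin{itemize}
\item The lifted LP has one variable per input coordinate and per pre- and post-activation. Its constraints are $h^k=W_k a^{k-1}+b^k$ with the raw weights, the equality and sign constraints fixed by $\sigma$, and the box. Its nonzero coefficients have total bit-length $\mathcal{O}(N)$. Projecting to input space, as you do, replaces the weights by products of weight matrices and inflates every row.
\item For fixed $\sigma$ the lifted feasible set is a polytope. At an optimal basic solution $v^*=A_B^{-1}b_B$, the value $c^\top v^*$ equals $-\det M/\det A_B$, where $M$ is $A_B$ bordered by the column $b_B$ and the row $(c^\top,0)$.
\item Apply Hadamard's inequality row by row, after clearing denominators; zero entries contribute nothing. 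Both determinants are then rationals whose numerators and denominators have $\mathcal{O}(N)$ bits. This holds for every network, with no NAM structure needed.
\item Your worry about polynomially long vertex denominators conflates the length of the whole vector with that of one coordinate or of the objective value. The common denominator $\det A_B$ has linear size.
\end{itemize}

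\textbf{Fixing integrality.} Take $G:=\lfloor 2^K\cdot\text{value}\rfloor$ with $K=\mathcal{O}(N)$ chosen so that $2^{-K}<Q^{-2}$, where $Q\le 2^{\mathcal{O}(N)}$ bounds the denominators.
\begin{itemize}
\item Flooring is monotone, so $F=\lfloor 2^K\max\rfloor$, and $F$ has $\mathcal{O}(N)$ bits.
\item The exact maximum is the unique fraction with denominator at most $Q$ in $[F2^{-K},(F+1)2^{-K})$. It can be recovered in polynomial time by continued fractions.
\item This is the ``OptP plus polynomial post-processing'' form that Claim~\ref{claim_to_ref} actually uses.
\end{itemize}

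\textbf{Comparison with the paper.} The paper only bounds the magnitude of the LP optimum by $d^m$. It then asserts that $\mathcal{O}(m\log d)$ bits is linear in $N$, and it never addresses denominators or integer outputs. So you were right that this step needs care, but your proposal does not yet discharge it.
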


\emph{Proof.} We prove this lemma by expressing our problem as a special case of the \emph{neural network reachability} problem introduced by \citet{salzer2021reachability}. In their formulation, a broad class of input specifications $\psi_{\text{in}}$ and output specifications $\psi_{\text{out}}$ is supported. For our purposes, we restrict to a particularly simple instance of their framework: each input feature $\x_i$ is constrained by rational lower and upper bounds $l_i, u_i \in \mathbb{Q}$ within $\psi_{\text{in}}$, and each output feature $\y_i$ is constrained by the same type of bounds within $\psi_{\text{out}}$. That is, we only consider specifications of the form $l_i\leq \x_i \leq u_i$ and $l_i\leq \y_i \leq u_i$, which are directly supported in the formulation of \citet{salzer2021reachability}.
We now proceed to formally define the neural network reachability problem.


\textbf{Neural Network Reachability (NNReach)}~\citep{salzer2021reachability}.\\
\textbf{Input.} A neural network model $f\colon\mathbb{R}^n\to\mathbb{R}^d$ with ReLU activations, an input specification $\psi_{\text{in}}(\x_1,\ldots, \x_n)$ and an output specification $\psi_{\text{out}}(\y_1,\ldots, \y_{d})$. \\
\textbf{Output.} \emph{Yes} if there is an $\x\in\mathbb{R}^n$ such that $\psi_\text{in}(\x)$ and $\psi_\text{out}(f(\x))$ are true.

\cite{salzer2021reachability} showed that the NNReach problem lies in NP. Since input assignments cannot be guessed directly as witnesses in a continuous domain, their approach instead guesses the \emph{activation pattern} (active/inactive) of every neuron. Once this activation-status is fixed, the network becomes a purely linear system, and the reachability question reduces to a linear program (LP) over all input variables $\x_1,\ldots,\x_n$, output variables $\y_1,\ldots,\y_d$, and additional auxiliary variables corresponding to neuron operations. This LP can then be solved in polynomial time, establishing NP membership.


We now will describe the NNReach problem in our specific context. We now have a model $f\colon \mathbb{R}^n\to\mathbb{R}^n$ (meaning the output vector $\y$ is of size $n$). We take the input specification $\psi_{\text{in}}$ to be set to be $\x_i-\epsilon\leq \x_i\leq \x_i+\epsilon$ for each $\x_i$ and taking no output specification over $\y$. We know via the result of~\citep{salzer2021reachability} that this problem is in NP. We now can define the mapping function $G(X,Y)$ (within our definition of the complexity class OptP in Appendix~\ref{app:formalizations}) such that $Y\in\{0,1\}^{p(\lvert X \rvert)}$ is the witness assignment used by~\citep{salzer2021reachability} to show membership in NP for the NNReach problem (the activation status of each ReLU), $\{0,1\}^{p(\lvert X \rvert)}$ denotes the set of all $0$-$1$ assignments to active/in-active neurons within the linear program, and the function $G$ defines the same linear program as in~\citep{salzer2021reachability} while solving the task of \emph{maximizing the value of $\y_1+\y_2+\ldots \y_n$} (the sum of all output values). The result for a single witness will give us this maximal value in the $\epsilon$-ball region around $\x$, for \emph{one specific} linear region in the domain determined by the witness guess of activation statues $Y$. However, by taking
\begin{equation} F(X)=\max_{Y\in\{0,1\}^{p(\lvert X\rvert})}G(X,Y), \end{equation} 
$F(X)$ will give us the maximum value out of \emph{all} of the aforementioned maximal values in the $\epsilon$-ball region around $\x$ that are determined by guessing activation statues of neurons. This, together, implies that the $F(X)=\max_{\z\in B_{\epsilon}(\x)}(f(\z)_1+f(\z)_2+\ldots +f(\z)_n)$, which shows that this problem is in OptP. Now, since this optimal is a solution to some linear program defined over the paramters of the model --- we know that it is finite and bounded by the precision of the program’s parameters. Suppose each weight and bias in every univariate model $f_i$ is encoded as a rational number $\frac{p_j}{q_j}$ with $q_j,p_j\in\mathbb{N}$, given in binary. Let $p'$ and $q'$ denote the maximum numerator and denominator of weights/biases appearing anywhere in the overall model $f$, and define $d:=\max(|p'|,|q'|)$. A naïve upper bound on the magnitude of the optimal LP solution is $\mathcal{O}(d^m)$, where $m$ is the number of weights and biases in $f$, which in turn can be encoded using $\mathcal{O}(m \cdot \log(d))$ bits, and is particularly linear in $N$. This shows that this problem is particularly in $\text{OptP}[\mathcal{O}(N)]$.

\hfill $\qedsymbol{}$

After establishing that the optimization step lies in $\text{OptP}[\mathcal{O}(N)]$, we now complete the membership proof for ${\text{FP}}^{\text{NP}}[\mathcal{O}(N)]$ for our original problem by describing the required polynomial post-processing. The first observation is that the result proven for any ReLU-activated neural network $f\colon\mathbb{R}^n\to\mathbb{R}^n$ (Lemma~\ref{to_cite_finish}) also applies to networks of the form $\beta_o+f_1(\x_1)+\ldots+f_n(\x_n)$, i.e., the inherent structure of NAM models where each $f_i$ is itself a univariate ReLU network. This follows because such a sum is again a neural network under the same formalization. In this structure, maximizing or minimizing the full expression $\beta_o+f_1(\x_1)+\ldots+f_n(\x_n)$ reduces to maximizing or minimizing each univariate component $f_i(\x_i)$ \emph{independently}. After carrying out this procedure to maximize each univariate component, this yields a total ordering over the features --- equivalently, all bounds in \cref{alg:parallel_sorting_part_1} ``collapse'' to discrete singular points --- allowing us to apply either \cref{alg:naive_nam_algorithm} or \cref{alg:binary_nam_algorithm} to compute a cardinally-minimal sufficient explanation (as proven in \cref{naive_algorithm_cardinality_proof}) and return its size $k$. Since the entire procedure consists of an $\text{OptP}[\mathcal{O}(N)]$ computation followed by a polynomial-time post-processing step, the overall problem belongs to ${\text{FP}}^{\text{NP}}[\mathcal{O}(N)]$. This concludes the membership proof.


\hfill $\qedsymbol{}$

\begin{lemma}
\label{lemm_to_ref}
    The MSE-NAM problem is $\text{OptP}[\mathcal{O}(\log(N))]$-Hard under metric reductions.
\end{lemma}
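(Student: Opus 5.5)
The plan is to reduce from an arbitrary function $F\in\text{OptP}[\mathcal{O}(\log N)]$, using Krentel's framework from Appendix~\ref{app:formalizations}. Write $F(X)=\max_{Y}G(X,Y)$, where the values lie in $\{0,\ldots,q(|X|)\}$ for some polynomial $q$. For every threshold $v\in[q(|X|)]$, the predicate ``$F(X)\geq v$'' is an NP predicate. By Cook--Levin it is equivalent to the satisfiability of a CNF formula $\varphi_v$ computable in polynomial time from $(X,v)$. These predicates are monotone in $v$, so the number of satisfiable $\varphi_v$ equals exactly $F(X)$. The goal is a polynomial-time map $T_1$ from $X$ to an MSE-NAM instance $\langle f,\x,\epsilon\rangle$ whose minimum explanation size is $F(X)$. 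Then $T_2(X,k):=k$, and this is a metric reduction.

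\textbf{Key gadget: a univariate ReLU network that decides satisfiability by its maximum.} For a CNF $\varphi$ over $r$ variables, I would build a univariate ReLU network $g_\varphi\colon\R\to\R$ with the following properties:
\begin{compactenum}[(i)]
\item $g_\varphi(t)\in[0,1]$ for all $t$;
\item $\max_{t\in[0,1]}g_\varphi(t)=1$ if $\varphi$ is satisfiable and $g_\varphi\equiv 0$ on $[0,1]$ otherwise;
\item $g_\varphi(1/2)=0$.
\end{compactenum}
The construction proceeds in three steps.
\begin{compactenum}[(a)]
\item Extract ``bits'' $b_1(t),\ldots,b_r(t)$ by iterating the tent/sawtooth map, which needs $\mathcal{O}(r)$ layers of constant width. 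Each $b_s$ is a steep clipped ReLU ramp of the $s$-th sawtooth, so it equals exactly $0$ or $1$ except on thin transition intervals. Every Boolean assignment is realised by some $t\in[0,1]$.
\item Evaluate each clause as $\min(1,\sum\text{literals})$ using ReLUs, and the conjunction as $\mathrm{ReLU}(\sum_c C_c-(m_\varphi-1))$.
\item Guard against transition intervals by multiplying, via a ReLU min-gadget, by an indicator-like term that vanishes unless every $b_s$ is saturated. Also force (iii), either by choosing the encoding so that $t=1/2$ falls in a guarded transition zone, or by subtracting a ReLU bump centred at $1/2$.
\end{compactenum}

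The main obstacle is showing that on transition zones the output never exceeds what some genuine assignment achieves. The guard handles this: a non-saturated bit forces the output to $0$. One also has to check that weights stay rational with polynomial bit-length, which they do since ramp slopes of $2^{\mathcal{O}(r)}$ have $\mathcal{O}(r)$ bits.

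\textbf{The NAM.} Set $n:=q(|X|)$, take components $f_v(t):=-g_{\varphi_v}(t)$, intercept $\beta_0:=1/2$, input $\x:=(1/2,\ldots,1/2)$, and $\epsilon:=1/2$ in $\ell_\infty$ (for other $\ell_p$ the argument is identical, since we only need each single coordinate to range over $[0,1]$ while the others stay fixed). Then $f(\x)=\beta_0=1/2\geq0$, so the prediction is $1$. For a set $\explanation$, the minimum of $\beta_0+\sum_{v\in\explanationNot}f_v(\xx_v)$ over the ball equals $1/2-|\{v\in\explanationNot:\varphi_v\text{ satisfiable}\}|$. This holds because each component is minimised independently, by additivity, exactly as in the proof of Proposition~\ref{the_replacement_lemma}. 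Hence $\explanation$ is sufficient iff it contains every index $v$ with $\varphi_v$ satisfiable. The minimum sufficient size is therefore exactly $|\{v:\varphi_v\text{ sat}\}|=F(X)$.

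The encoding size $N$ of $f$ is polynomial in $|X|$, and $F(X)\le q(|X|)$, so the reduction respects the $\mathcal{O}(\log N)$-bit output bound. This establishes $\text{OptP}[\mathcal{O}(\log N)]$-hardness under metric reductions. Combined with Claim~\ref{claim_to_ref}, it yields the stated lower bound of logarithmically many NP-oracle queries.
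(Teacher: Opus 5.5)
Your proposal is correct, and it takes a genuinely different route from the paper.

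\textbf{The paper's route.} It reduces from MaxSAT, using Krentel's completeness result, and needs a \emph{quantitative} univariate gadget built from three pieces:
\begin{itemize}
\item a clause-counting ReLU network;
\item a binary-value regularizer weighted by $C=|\phi|+1$, so that maximizers over $[0,1]^n$ are Boolean;
\item a binary-decomposition front end that makes the network univariate.
\end{itemize}
Together these make the component's maximum equal the MaxSAT optimum up to a shift. The paper then takes $m^2$ identical mirrored copies with intercept $m^2$. It shows that the minimum explanation size $k$ pins this maximum into an interval of width less than $1$, from which $T_2$ recovers the integer optimum.

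\textbf{Your route.} You reduce an arbitrary $F\in\text{OptP}[\mathcal{O}(\log N)]$ directly, slicing it into polynomially many NP threshold predicates. Each component then only has to be a \emph{qualitative} satisfiability detector with maximum $1$ or $0$. What this buys:
\begin{itemize}
\item The NAM step becomes a one-line counting argument with $T_2=\mathrm{id}$, with no amplification by copies and no interval arithmetic.
\item It shows why this technique captures exactly polynomially bounded values, i.e., $\mathcal{O}(\log N)$ output bits.
\item Your guard plays the role of the paper's regularizer, preventing non-Boolean points from creating spurious maxima. But it only has to protect a $0/1$ value, which is easier to get right.
\end{itemize}

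\textbf{What the paper's route buys instead.} It reduces from a single concrete complete problem. As a by-product it gives hardness of univariate maximum-reachability. It also shows hardness already for NAMs whose components are all copies of one network, whereas your instance uses $q(|X|)$ distinct components.

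\textbf{One point to make explicit when you write the gadget out.} Build the guard from ramps that switch on only where the corresponding bit is already saturated. Then it vanishes on the closure of every transition zone yet still equals $1$ at each cell centre; this is where your $2^{\mathcal{O}(r)}$ slopes are needed. With the tent map, $t=1/2$ is then automatically a zero of $g_\varphi$. The bits form a Gray code, which is harmless since it is still a bijection onto $\{0,1\}^r$.
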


\emph{Proof.} To prove hardness, our main reduction will be from the Maximum satisfiability (MaxSAT) problem, which is known to be $\text{OptP}[\mathcal{O}(\log(N))]$-complete~\citep{krentel1986complexity}. Before introducing this problem formally, we first recall the relevant background notion of CNF-Satisfiability (SAT):



Consider a set of Boolean variables ${\mathbf{X}_1,\ldots,\mathbf{X}_n}\in\{0,1\}$. Let $m\in\mathbb{N}$ denote the number of clauses. For each $1\leq i\leq m$, we define the clause $C_i$ as follows:
%
\begin{equation}
    C_i:=\mathbf{X}_{i,1}\vee \mathbf{X}_{i,2}\vee \ldots \mathbf{X}_{i,k_i},
\end{equation}
where each literal $i,j$ is an element of $\mathbf{X}_{i,j}\in \{\mathbf{X}_1,\ldots,\mathbf{X}_n,\neg\mathbf{X}_1,\ldots,\neg\mathbf{X}_n\}$ and $k_i\in\mathbb{N}$ denotes the number of literals appearing in clause $C_i$.
The full CNF formula $\phi$ is then given by:
~
\begin{equation}
    \phi:=C_1 \wedge C_2 \wedge \ldots \wedge C_m. 
\end{equation}

For instance, take the following CNF formula $\phi$:
~
\begin{equation}
\begin{aligned}
    \phi:=(\textbf{X}_{1,1}\vee \ldots \vee \textbf{X}_{1,k_1}) \wedge (\textbf{X}_{2,1}\vee \ldots \vee \textbf{X}_{2,k_2}) \wedge \ldots \wedge (\textbf{X}_{m,k_1}\vee \ldots \vee \textbf{X}_{m,k_m})=\\
    (\textbf{X}_{7}\vee \neg\textbf{X}_{2} \vee \textbf{X}_{3}\vee \textbf{X}_5) \wedge (\textbf{X}_{2}) \wedge (\textbf{X}_{3}\vee \neg\textbf{X}_{4}).\quad\quad\quad\quad\quad\quad\quad
\end{aligned}
\end{equation}

We say that a clause $C_i$, or the entire formula $\phi$, is \emph{satisfiable} if there exists a Boolean assignment $\alpha \in \{0,1\}^n$ to the variables $\{\mathbf{X}_1,\ldots,\mathbf{X}_n\}$ that makes it evaluate to true. For instance, in the example above, assigning $\mathbf{X}_5 = 1$, $\mathbf{X}_2 = 1$, and $\mathbf{X}_4 = 0$ satisfies all three clauses $C_1$, $C_2$, and $C_3$, and therefore satisfies $\phi$ as well. This leads us to the classical definition of the \emph{SAT} problem:

\textbf{CNF Satisfiability (SAT)}.\\
\textbf{Input.} A Boolean formula $\phi$ in CNF. \\
\textbf{Output.} \emph{Yes}, if $\phi$ is satisfiable, and \emph{No}, otherwise.

In our setting, we also rely on the optimization versions of these problems, specifically the \emph{MaxSAT} problem~\citep{li2009maxsat}, which asks for an assignment that maximizes the number of satisfied clauses, or equivalently:


\textbf{Maximum SAT (MaxSAT)}.\\
\textbf{Input.} A Boolean formula $\phi$ in CNF. \\
\textbf{Output.} The maximum number of simultaneously satisfied clauses, i.e.,
~
\begin{equation}
\max_{\alpha\in\{0,1\}^n}\sum^{m}_{i=1} \mathbf{1}_{\{C_i \text{ is satisfiable by } \alpha\}}.
\end{equation}







Before presenting the proof, we first formalize several auxiliary problems that will be needed. We begin by recalling the \emph{Neural Network Reachability} (NNReach) problem of \citet{salzer2021reachability}, which we previously used for establishing membership. That work shows that NNReach is NP-Hard. Moreover, it proves that hardness persists even for networks with a \emph{single} output and under very simple input specifications, namely, when each input feature $\x_i$ is constrained only by the bounds $0 \le \x_i \le 1$ (Theorem 3 in their paper). Importantly, all reductions in \citep{salzer2021reachability} proceed by first showing hardness over a binary input domain $\{0,1\}^n$, and subsequently extending the construction to the continuous domain $[0,1]^n$. As a result, NP-hardness holds in both settings. Building on this, we frame the restricted version of the neural network reachability problem considered in their work as \emph{Restricted Neural Network Reachability (R-NNReach)}, which was established to be NP-Hard in \citet{salzer2021reachability}.


\textbf{Restricted Neural Network Reachability (R-NNReach)}.\\
\textbf{Input.} A neural network $f\colon\{0,1\}^n\to \mathbb{R}$, an input specification $\psi_\text{in}(\x_1,\ldots, \x_{n})$ set to: $\psi_\text{in}:=\bigwedge^{n}_{i=1}\x_i\geq 0 \wedge \x_i\leq 1$ and an output specification $\psi_\text{out}(\y)$, set to: $\psi_\text{out}:=\y=r$ for some $r \in \mathbb{Q}$. \\
\textbf{Output.} \emph{Yes} if there is an $\x\in\mathbb{R}^n$ such that $\psi_\text{in}(\x)$ and $\psi_\text{out}(f(\x))$ are true.

We will also introduce an additional variant of this problem that will support our maximization-related proofs:

\textbf{Maximum Restricted Neural Network Reachability (MR-NNReach)}.\\
\textbf{Input.} A neural network $f\colon\{0,1\}^n\to \mathbb{R}$, an input specification $\psi_\text{in}(\x_1,\ldots, \x_{n})$ set to: $\psi_\text{in}:=\bigwedge^{n}_{i=1}\x_i\geq 0 \wedge \x_i\leq 1$. \\
\textbf{Output.} The maximal value of $f(\x)$ for which $\psi_\text{in}(\x)$ is true.

We will break down the rest of the proof into four central parts:

\begin{enumerate}
    \item \textbf{Lemma \ref{weightedmaxsat_first_lemma}} demonstrates how the NNReach problem, specifically its restricted variant, R-NNReach, which was originally obtained via a reduction from CNF-SAT \citep{salzer2021reachability, katz2017reluplex}, can be extended to support a reduction from the \emph{ Maximum} SAT problem. In particular, the lemma establishes that MaxSAT admits a polynomial-time reduction to MR-NNReach.
    \item \textbf{Lemma~\ref{weighted_maxsat_second_lemma}} demonstrates that by incorporating an additional technical gadget, similar to the construction in~\citep{salzer2021reachability} but requiring a non-trivial technical adaptation, we can extend the hardness result to the \emph{continuous} domain. In particular, this shows that the reduction continues to hold when considering a function $f\colon[0,1]^n\to\mathbb{R}$.
    \item \textbf{Lemma~\ref{binary_decomposition_lemma}} shows that a ReLU network can implement a binary decomposition procedure, and that this construction can be layered on top of our model. This, in turn, demonstrates that the reduction from MaxSAT to MR-NNReach continues to hold even for $f_i$ that is a \emph{univariate} network of the form $f_i\colon\{0,1,\ldots,2^n-1\}\to\mathbb{R}$. \textbf{Lemma~\ref{weighted_maxsat_third_lemma}} then further extends this reduction to the fully continuous setting, i.e., to functions of the form $f_i\colon\mathbb{R}\to\mathbb{R}$.
    \item \textbf{Lemma \ref{maxsatfinallemma}} provides a reduction from MaxSAT to MSE-NAM by constructing a NAM containing $m^2$ univariate copies of the formula $\phi$ that are constructed via Lemma~\ref{weighted_maxsat_third_lemma} (where $m$ is the number of clauses in the original CNF). This construction allows us to recover the exact MaxSAT optimum, thereby establishing that the problem is $\text{OptP}[\mathcal{O}(\log N)]$-hard.
    
\end{enumerate}


\begin{lemma}
\label{weightedmaxsat_first_lemma}
    The MaxSAT problem can be reduced in polynomial time to MR-NNReach.
\end{lemma}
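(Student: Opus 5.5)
The plan is to give a direct gadget construction, in the spirit of the SAT-to-reachability reductions of \citet{katz2017reluplex,salzer2021reachability}. Given a CNF formula $\phi=C_1\wedge\cdots\wedge C_m$ over $\mathbf{X}_1,\ldots,\mathbf{X}_n$, we build in polynomial time a ReLU network $f\colon\{0,1\}^n\to\mathbb{R}$ whose output, on every Boolean input $\alpha$, equals the number of clauses of $\phi$ satisfied by $\alpha$. Since the input domain of MR-NNReach is $\{0,1\}^n$ with the box constraint $0\le \x_i\le 1$, the maximum of $f$ is then exactly the MaxSAT optimum. Thus the metric reduction is $T_1(\phi)=f$ and $T_2(\phi,v)=v$.

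The construction has three layers, built in this order.
\begin{enumerate}
\item \textbf{Literals.} Each literal is computed affinely from the input: a positive literal $\mathbf{X}_j$ is the identity $\x_j$, and a negated literal $\neg\mathbf{X}_j$ is $1-\x_j$. This can be realized as $\text{ReLU}(1-\x_j)$, which is exact on $[0,1]$.
\item \textbf{Clauses.} For clause $C_i$ with literals $\ell_{i,1},\ldots,\ell_{i,k_i}$, form the sum $s_i=\sum_{t}\ell_{i,t}$. This is a nonnegative integer on Boolean inputs. Then define the clause neuron
\[
c_i=\text{ReLU}(s_i)-\text{ReLU}(s_i-1)=\min(1,\max(0,s_i)).
\]
It takes value $1$ if at least one literal is true and $0$ otherwise.
\item \textbf{Output.} The single output neuron is the linear sum $\y=\sum_{i=1}^m c_i$.
\end{enumerate}
All weights lie in $\{-1,0,1\}$ and biases in $\{0,\pm1\}$. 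The network therefore has $\mathcal{O}(n+\sum_i k_i)$ neurons and its encoding is polynomial in $|\phi|$.

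For correctness, we verify by case analysis on each clause that $f(\alpha)=\sum_i \1_{\{C_i \text{ satisfied by }\alpha\}}$ for every $\alpha\in\{0,1\}^n$. Taking the maximum over $\alpha$ then gives $\max_\alpha f(\alpha)=\text{MaxSAT}(\phi)$. Because $f$ is defined on $\{0,1\}^n$, the constraint $\psi_{\text{in}}$ adds nothing beyond the Boolean cube.

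I expect the main subtlety to be exactness of the clause gadget rather than any calculation. We must ensure the clipped sum never overcounts a clause with several true literals, and never gives fractional credit. The $\min(1,\max(0,\cdot))$ gadget handles both cases on integer inputs.

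The continuous domain $[0,1]^n$ is deliberately left to Lemma~\ref{weighted_maxsat_second_lemma}. There, fractional inputs could make $f$ exceed the MaxSAT value, and an extra penalty gadget (e.g.\ subtracting a large multiple of $\min(\x_j,1-\x_j)$) would be required. The present lemma needs only the Boolean-domain statement.
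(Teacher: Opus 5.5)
Your proposal is correct and follows essentially the same route as the paper: build a ReLU network whose output on every Boolean assignment equals the number of satisfied clauses, so its maximum over $\{0,1\}^n$ is the MaxSAT value. The only difference is cosmetic. You count satisfied clauses directly with $\text{ReLU}(s_i)-\text{ReLU}(s_i-1)$, whereas the paper writes $m-\sum_i \max(0,1-s_i)$, and the two coincide whenever $s_i\ge 0$.
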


\emph{Proof.} We begin by revisiting the NP-hardness proof from~\citep{salzer2021reachability}, which provides a technical correction to the original argument in~\citep{katz2017reluplex}. This correction addresses subtleties that arise when extending the reduction from the Boolean domain to the continuous setting --- a point we will return to later. For the moment, ignoring this distinction, the two reductions are effectively identical. We present this construction first. The core idea is to reduce a CNF formula $\phi$ with $m$ clauses to a ReLU-activated neural network $f\colon\{0,1\}^n \to \mathbb{R}$, which can be formalized as follows:
~
\begin{equation}
    y:=m-\sum^{m}_{i=1} \max(0,1-\sum^{k_i}_{j=0}f_i(\x_{ij}))
\end{equation}

where $f_j(\x_{j,k})=\mathbf{X}_{jk}$ if $\mathbf{X}_{j,k}$ appears positively in $C_j$, and $f_j(\x_{j,k})=1-\mathbf{X}_{j,k}$ if it appears negatively. The corresponding neural network can then be constructed in a direct manner, since both linear transformations and $\max(0,\cdot)$ operations are readily expressible using ReLU units. This reduction was explicitly established in both \citep{katz2017reluplex} and \citep{salzer2021reachability}. As shown therein, it follows immediately that for any Boolean assignment $\alpha\in\{0,1\}^n$ to $\phi$, we have:
~
\begin{equation}
\label{max_term_equation}
    \max(0,1-\sum^{k_i}_{j=0}f_i(\x_{ij}))=0 \iff C_i \text{ is satisfied by }\alpha.
\end{equation}

From this point, it is straightforward to show that imposing an output constraint of $\psi_{\text{out}}:=y = m$ holds if and only if every clause in $\phi$ is satisfied. From here, it is straightforward to prove that when setting an output constraint $\psi_\text{out}$ of $y=m$, it holds that the formula is satisfied if and only if all clauses in $\phi$ are satisfied. Moreover, observe that in this construction the output value $y$ already encodes the number of satisfied clauses. Consequently, the same reduction immediately yields a polynomial-time reduction from \emph{MaxSAT} to the \emph{MR-NNReach} problem. 

~

\hfill $\qedsymbol{}$


\begin{lemma}
\label{weighted_maxsat_second_lemma}
    The MaxSAT problem can be reduced in polynomial time to MR-NNReach when defined over a neural network of the form $f\colon[0,1]^n\to\mathbb{R}$.
\end{lemma}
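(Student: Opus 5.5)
Take the network from Lemma~\ref{weightedmaxsat_first_lemma},
\[
y(\x)=m-\sum_{i=1}^m \max\Bigl(0,\,1-\sum_{j} \ell_{ij}(\x)\Bigr),
\]
where $\ell_{ij}(\x)$ is $\x_j$ or $1-\x_j$ according to the sign of the literal. Then add a ReLU gadget that penalizes non-integral inputs, so that the maximum over $[0,1]^n$ equals the maximum over $\{0,1\}^n$, which is the MaxSAT optimum.

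\textbf{The gadget.} Following \citet{salzer2021reachability}, for each coordinate use the tent-shaped penalty
\[
g(\x_j):=\max(0,\,\min(\x_j,\,1-\x_j)) = \tfrac12 - \bigl|\x_j-\tfrac12\bigr|.
\]
This is expressible with ReLUs, since $\min(a,b)=a-\max(0,a-b)$. It vanishes exactly on $\{0,1\}$ and is at most $1/2$ on $[0,1]$. Define
\[
y'(\x):=y(\x)-C\sum_{j=1}^n g(\x_j),
\]
with $C$ a constant to be fixed (say $C=2m$). The construction is polynomial in $|\phi|$, and on Boolean points $y'=y$ counts satisfied clauses. So $\max_{[0,1]^n} y' \ge \mathrm{MaxSAT}(\phi)$.

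\textbf{The reverse inequality.} I must show $y'(\x)\le \mathrm{MaxSAT}(\phi)$ for every $\x\in[0,1]^n$. This is the main obstacle, and the reason the ``non-trivial adaptation'' is needed: in the satisfiability version one only has to exclude $y=m$ at fractional points, while here every fractional point must be dominated by some Boolean point.

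I would round $\x$ to the nearest vertex $\bar\x$, with $\bar\x_j=\mathbb{1}[\x_j\ge 1/2]$, and compare clause by clause. Each term $\max(0,1-\sum_j\ell_{ij})$ is $1$-Lipschitz in each coordinate it uses, and $|\x_j-\bar\x_j|=g(\x_j)$. Hence
\[
|y(\x)-y(\bar\x)| \le \sum_i \sum_{j\in C_i} g(\x_j) \le m\sum_j g(\x_j).
\]
Taking $C\ge m$ gives $y'(\x)\le y(\bar\x)\le \mathrm{MaxSAT}(\phi)$.

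I would also double-check that the Lipschitz estimate survives literal repetitions within a clause. If it does not, I would enlarge $C$ to the total number of literal occurrences, $\sum_i k_i$, which is still polynomial. With that, $\max_{[0,1]^n} y'=\mathrm{MaxSAT}(\phi)$, the output is returned unchanged (an identity metric reduction), and the lemma follows.
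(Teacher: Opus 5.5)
Your proposal is correct and follows essentially the paper's proof. On $[0,1]^n$ your penalty $-C\sum_j g(\x_j)$ is exactly the paper's amplified Salzer regularizer $C\sum_j\bigl(\max(0,\tfrac12-\x_j)+\max(0,\x_j-\tfrac12)\bigr)-\tfrac{Cn}{2}$. The paper also trades a penalty gain of $C\epsilon$ against a clause-term loss of at most $\lvert\phi\rvert\epsilon$, using $C=\lvert\phi\rvert+1$, which, like your fallback $C=\sum_i k_i$, absorbs repeated literals. The difference is only presentational: the paper rounds one fractional coordinate of a putative maximizer and derives a contradiction, while you round all coordinates at once to get the pointwise bound $y'(\x)\le y(\bar{\x})\le \mathrm{MaxSAT}(\phi)$ directly.
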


\emph{Proof.} We now shift the reduction from the binary input domain $\{0,1\}^n$ to the continuous region $[0,1]^n$. To do so, we employ a technique analogous to the Bool* gadgets introduced in~\citep{salzer2021reachability}. We begin by briefly recalling the role of Bool* gadgets in the standard CNF-SAT reduction, specifically, how they allow one to migrate a hardness proof from a discrete Boolean domain to a continuous one. We then describe how this construction can be adapted to our setting.


In the standard reduction from SAT to the R-NN-Reach problem, the authors of \citep{salzer2021reachability} propose the following construction. 
~
\begin{equation}
\label{salzer_gadget_construction}
    y:=\sum_{i=1}^{n}(\max(0,\frac{1}{2}-\x_i)+\max(0,\x_i-\frac{1}{2}))-\sum^{m}_{i=1}  \max(0,1-\sum^{k_i}_{j=0}f_i(\x_{ij}))+(m-\frac{n}{2})
\end{equation}

The term $+m - \frac{n}{2}$ is omitted in their presentation, but including it here does not affect the correctness of the reduction and will later help streamline our exposition.
As demonstrated in~\citep{salzer2021reachability}, the expression $\z_i=\max(0,\frac{1}{2}-\x_i)+\max(0,\x_i-\frac{1}{2})$ evaluates to:
~
\begin{equation}
\label{salzer_function_def}
    \z_i=\begin{cases}
  \frac{1}{2}-\x_i & \text{if } \x_i < \frac{1}{2},\\
  \x_i-\frac{1}{2}  & \text{otherwise}
\end{cases}
\end{equation}

Thus, we have that $\max(0,\frac{1}{2}-\x_i)+\max(0,\x_i-\frac{1}{2})=0$ if and only if $\x_i\in\{0,1\}$. This gadget therefore forces each binary feature to take a value of either $0$ or $1$. In practice, \citep{salzer2021reachability} implement this in their reduction by imposing an output constraint $\psi_\text{out}$ requiring $\y = m$. This is because the expression:
$$
    -\sum^{m}_{i=1}  \max(0,1-\sum^{k_i}_{j=0}f_i(\x_{ij})),
$$
which we refer to as the \emph{``maximal-clauses optimization term''}, is always non-positive. Consequently, in order to enforce $y=m$, it must hold that the following term:~
$$\sum_{i=1}^{n}(\max(0,\frac{1}{2}-\x_i)+\max(0,\x_i-\frac{1}{2})),$$
which we refer to as the ``\emph{binary-value regularizer term''}, always evaluates to exactly $\frac{n}{2}$. As explained earlier, this occurs \emph{if and only if} every $\x_i$ takes a strictly binary value in $\{0,1\}$. 

We now aim to apply an analogous transition, from binary inputs to continuous domains, in order to reduce \emph{MaxSAT} 
to our problem. In the MaxSAT case, note that there are \emph{no output constraints}; the objective is simply to maximize the number of satisfied clauses. Intuitively, we want to maximize the maximal-clause optimizer term while simultaneously enforcing that the binary-value regularizer term remains exactly $\frac{n}{2}$, which, as before, holds if and only if all variables $\x_i$ take values strictly in $\{0,1\}$.


However, when using the construction of~\citep{salzer2021reachability} described in Equation~\ref{salzer_gadget_construction}, jointly maximizing all components may, in some cases, prioritize increasing the maximal-clause optimizer term at the expense of the binary regularizer. This can cause the input variables to drift away from strictly binary values. To prevent this, we need to amplify the ``weight'' of the binary regularizer so that maximizing it is always strictly preferable. One way to achieve this is by maximizing a term of the following form:


\begin{equation}
\label{salzer_gadget_construction}
    y:=C\cdot \sum_{i=1}^{n}(\max(0,\frac{1}{2}-\x_i)+\max(0,\x_i-\frac{1}{2}))-\sum^{m}_{i=1}  \max(0,1-\sum^{k_i}_{j=0}f_i(\x_{ij}))+(m-\frac{C\cdot n}{2})
\end{equation}

We can now set $C := \lvert\phi\rvert + 1$, noting that the term below can still be implemented by a neural network simply by adjusting the appropriate weight. With this choice of $C$, we obtain the following property for $\mathbf{y}$:

\begin{claim}
\label{maximal_claim}
    The maximal value of $\y$ when $\x$ is taken from $[0,1]^{n}$ is equal to the maximal value of $\y$ when $\x$ is taken from $\{0,1\}^{n}$.
\end{claim}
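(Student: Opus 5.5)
The plan is a rounding argument. I will show that every point of the continuous cube $[0,1]^n$ is weakly dominated, in the value of $\y$, by the Boolean point obtained by rounding it coordinatewise. Since $\{0,1\}^n\subseteq[0,1]^n$, the inequality $\max_{[0,1]^n}\y\geq\max_{\{0,1\}^n}\y$ is immediate, so only the reverse inequality needs work. The maximum over $[0,1]^n$ exists because $\y$ is a continuous piecewise-linear function on a compact set.

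\textbf{Step 1 (the regularizer).} Fix $\x\in[0,1]^n$ and let $\mathbf{b}\in\{0,1\}^n$ round each $\x_i$ to the nearest of $0$ and $1$, breaking ties arbitrarily. Set $\delta:=\sum_{i=1}^n|\x_i-\mathbf{b}_i|$. From the case analysis in Equation~\ref{salzer_function_def}, each gadget value is $\z_i=|\x_i-\tfrac12|=\tfrac12-|\x_i-\mathbf{b}_i|$. Hence the regularizer part of $\y$ is
\[
C\sum_{i=1}^n \z_i-\frac{C n}{2}=-C\delta ,
\]
and it vanishes at $\mathbf{b}$.

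\textbf{Step 2 (the clause term).} I will bound how much the maximal-clauses optimization term $-\sum_{i=1}^m\max(0,1-\sum_j f_i(\x_{ij}))$ can gain when we move from $\mathbf{b}$ to $\x$.
\begin{itemize}
\item Each literal map is either $\x\mapsto\x$ or $\x\mapsto 1-\x$, so it is $1$-Lipschitz.
\item $\max(0,\cdot)$ is also $1$-Lipschitz, so each clause summand changes by at most $\sum_{j}|\x_{ij}-\mathbf{b}_{ij}|$.
\item Summing over clauses, the total change is at most the number of literal occurrences in $\phi$ times $\max_i|\x_i-\mathbf{b}_i|$.
\item Interpreting $|\phi|$ as the number of literal occurrences (or any larger size measure), and using $\max_i|\x_i-\mathbf{b}_i|\leq\delta$, this change is at most $|\phi|\,\delta$.
\end{itemize}

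\textbf{Step 3 (combine).} Putting the two steps together gives
\[
\y(\x)\leq \y(\mathbf{b})+|\phi|\delta-C\delta=\y(\mathbf{b})-\delta\leq\y(\mathbf{b}),
\]
using $C=|\phi|+1$. Therefore $\max_{[0,1]^n}\y\leq\max_{\{0,1\}^n}\y$, which proves the claim. As a by-product, the inequality is strict whenever $\x$ is non-binary, so every maximizer is Boolean. At Boolean points $\y$ equals the number of satisfied clauses by Equation~\ref{max_term_equation}, which is what the MaxSAT reduction needs.

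\textbf{Main obstacle.} The only delicate step is the Lipschitz bookkeeping in Step 2. A variable may occur in many clauses, so the bound must be taken with respect to the number of literal occurrences, not the number of variables. I must fix a meaning of $|\phi|$ that dominates that count. If $|\phi|$ were meant as the clause count $m$, I would enlarge $C$ to the total number of literal occurrences plus one; this is still polynomial and still implementable by scaling a weight.
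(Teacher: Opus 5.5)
Your argument is correct. It rests on the same mechanism as the paper's proof: the weight $C$ on the binary-value regularizer exceeds the rate at which the maximal-clauses term can change. You organize it differently, though. The paper argues by contradiction at a maximizer. It moves a \emph{single} fractional coordinate $\x_i$ to its nearer endpoint, gaining exactly $C\epsilon$ in the regularizer, while each of the $m<|\phi|+1$ clause summands loses at most $\epsilon$. You instead round all coordinates at once and prove the pointwise bound $\y(\x)\le\y(\mathbf{b})-\delta$.

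Your version is direct, does not need a maximizer to exist, and quantifies the margin. As a result, the fact that every maximizer is Boolean, which Lemma~\ref{weighted_maxsat_third_lemma} relies on, comes for free. The paper's one-coordinate move, in turn, only ever counts the clauses containing one variable. That is why it never has to pin down what $|\phi|$ means.

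The concern in your ``main obstacle'' is an artifact of the loose estimate in Step 2. Group the change by variable rather than by occurrence. With no repeated literals in a clause, each variable enters each clause sum with coefficient of absolute value at most $1$. The clause term then changes by at most $\sum_v c_v|\x_v-\mathbf{b}_v|\le m\,\delta$, where $c_v\le m$ is the number of clauses containing variable $v$. Hence $C=|\phi|+1$ already works under any reading of $|\phi|$ with $|\phi|\ge m$, and no enlargement of $C$ (which would change the construction) is needed.
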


\emph{Proof.} Assume, for the sake of contradiction, that the maximum attainable value of $y$ is achieved at some input $\x$ for which at least one coordinate satisfies $\x_i \in (0,1)$. Suppose in particular that $\x_i \ge \frac{1}{2}$. Then, by Equation~\ref{salzer_function_def}, we have:~
\begin{equation}  
\max(0,\frac{1}{2}-\x_i)+\max(0,\x_i-\frac{1}{2})=\x_i-\frac{1}{2}
\end{equation}

We now show that raising the value of $\x_i$ to exactly $1$ strictly increases the value of $\y$. Suppose the current value is $\x_i = 1 - \epsilon$. Increasing it to $1$ increases the binary value regularizer term by $\epsilon\cdot C=\epsilon \cdot (\lvert \phi \rvert+1)$. Importantly, regardless of how this change affects the maximal clause optimizer term, even if it leads to a decrease in $y$, that decrease cannot exceed $(\lvert\phi\rvert + 1)\cdot \epsilon$, since this term contains strictly fewer than $\lvert\phi\rvert + 1$ summands. Consequently, the net effect is that $y$ increases, implying that the current value of $y$ was not maximal.

Now suppose that $\x_i< \frac{1}{2}$. In this case, Equation~\ref{salzer_function_def} implies that:~
\begin{equation}  
\max(0,\frac{1}{2}-\x_i)+\max(0,\x_i-\frac{1}{2})=\frac{1}{2}-\x_i
\end{equation}

We now show that decreasing the value of $\x_i$ to exactly $0$ increases the value of $y$. Let the current value be $\x_i := \epsilon$. Reducing it to $0$ increases the binary value regularizer term by $\epsilon \cdot C = \epsilon \cdot (\lvert \phi \rvert + 1)$. As argued earlier, although this change may affect the maximal clause optimizer term --- and may even decrease $y$ --- the decrease is bounded above by $(\lvert \phi \rvert + 1)\cdot \epsilon$, since this term contains strictly fewer than $\lvert \phi \rvert + 1$ components. Consequently, the net value of $y$ must increase, contradicting maximality.

\hfill $\qedsymbol{}$

\begin{lemma}
\label{binary_decomposition_lemma}
Let there be some input region: $D_n:=[0,2^{n}-1] \cap \mathbb{Z}$. It is possible to construct in polynomial time a ReLU neural network $f\colon D_n\to \{0,1\}^n$ with polynomial size such that $f(x)= \text{bin}(x)$, where $\text{bin}(x)$ is the binary decomposition vector of $x$.
\end{lemma}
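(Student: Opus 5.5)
The plan is to build the network layer by layer, extracting bits from most significant to least significant, with a ReLU-implemented threshold gadget. The gadget must be exact on integers. The key fact is that on $\mathbb{Z}$ the indicator $\mathbf{1}[z\geq 0]$ is computed exactly by
\[
\text{step}(z)=\max(0,z+1)-\max(0,z).
\]
For integer $z\geq 0$ this gives $(z+1)-z=1$, and for integer $z\leq -1$ it gives $0-0=0$. Because the domain $D_n$ consists only of integers, and every intermediate quantity we form stays an integer, this gadget is exact throughout.

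The construction then proceeds iteratively. Set $r_n:=x$. For $k=n-1,n-2,\ldots,0$:
\begin{enumerate}
\item compute the bit $b_k:=\text{step}(r_{k+1}-2^k)$;
\item update the remainder $r_k:=r_{k+1}-2^k b_k$.
\end{enumerate}
By induction, $r_{k+1}\in[0,2^{k+1}-1]\cap\mathbb{Z}$ before step $k$. Hence $b_k=1$ exactly when $r_{k+1}\geq 2^k$, that is, when the $k$-th binary digit of $x$ is $1$, and the updated $r_k$ lies in $[0,2^k-1]\cap\mathbb{Z}$. The output vector $(b_{n-1},\ldots,b_0)$ is therefore exactly $\text{bin}(x)$.

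To realize this as a feedforward ReLU network, each iteration uses one hidden layer with two ReLU units for the step gadget. The quantities that must be carried forward, namely $r_{k+1}$ and the bits already computed, are all nonnegative. They can therefore be passed through ReLU units unchanged, since $\max(0,v)=v$ for $v\geq 0$; alternatively one can use the $\max(0,v)-\max(0,-v)$ identity. The update $r_k=r_{k+1}-2^k b_k$ is linear in the previous layer's outputs, so it folds into the next layer's affine map.

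The resulting network has $\mathcal{O}(n)$ layers of width $\mathcal{O}(n)$. All of its weights are integers bounded by $2^n$, so each has bit-length $\mathcal{O}(n)$, and the total encoding size is polynomial in $n$. The network can clearly be written down in polynomial time.

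The main obstacle is exactness. The step gadget is only correct on integers, so the proof must maintain the invariant that each remainder $r_k$ is an integer in the range stated above. This is exactly what the inductive argument secures. Behaviour at non-integer inputs is irrelevant for this lemma, since the domain is $D_n$. It will matter later, when Lemma~\ref{weighted_maxsat_third_lemma} passes to the continuous setting, but that is handled separately there.
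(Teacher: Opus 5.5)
Your proposal is correct and follows essentially the same route as the paper. Both extract bits most-significant first, produce each bit with a difference-of-two-ReLUs threshold gadget that is exact on integer inputs, fold the remainder update into the next affine layer, and argue correctness by induction on the remainder staying an integer in the right range. One point in your favour: your gadget $\max(0,z+1)-\max(0,z)$ with $z=r_{k+1}-2^k$ fires exactly when $r_{k+1}\geq 2^k$, whereas the paper's $q^i=\max(0,\frac{r^i}{2^{n-1-i}}-1)$ vanishes when $r^i=2^{n-1-i}$ and so outputs the wrong bit in that boundary case (e.g.\ $n=1$, $x=1$).
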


\emph{Proof.} We begin by mapping a single discrete value $x \in \{0,1,\ldots,2^{n}-1\}$ to its corresponding binary representation using a polynomial-size ReLU network. This step performs the classic binary decomposition procedure, which is outlined in Algorithm~\ref{alg:binary_decomposition}.


\begin{algorithm}
	\renewcommand{\algorithmicfor}{\textbf{for each}}
	\textbf{Input:} integer $x$, number of bits $n$ with $0 \le x < 2^n$
	\caption{Binary Decomposition}
 	\label{alg:binary_decomposition}
\begin{algorithmic}[1]
  \State $r \gets x$
  \For{$i = 0$ to $n-1$}
    \State $q \gets r / 2^{n-1-i}$
    \If{$q \ge 1$}
      \State $\mathbf{b}_i \gets 1$
      \State $r \gets r - 2^{n-1-i}$
    \Else
      \State $\mathbf{b}_{i} \gets 0$
    \EndIf
  \EndFor
  \State \Return $(\mathbf{b}_{0},\dots,\mathbf{b}_{n-1})$
\end{algorithmic}
\end{algorithm}











We construct the neural network to directly implement the mathematical operations of the algorithm. Starting from the input integer $x$ provided to $f$, the network proceeds through $n$ iterations, where each iteration consists of three sequential layers. At iteration $i$, the values $r^i$, $q^i$, and $b^i$ encode the corresponding intermediate values maintained by the algorithm. The value $b^i$ at each iteration is passed through a residual connection to the $i$-th component $\mathbf{o}_i$ of the output layer vector $\mathbf{o}$. Formally, the construction is defined as follows:
~
\begin{equation}
\label{eq:vars-qbro}
\begin{aligned}
    q^i=\max(0,\frac{r^i}{2^{n-1-i}}-1), \\
    b^i=\max(0,2^n\cdot q^i)-\max(0,(2^{n}\cdot q^i)-1), \\
    r^{i+1    }=r^i-2^{n-1-i}\cdot \max(0,b^i), \\
    \mathbf{o}_{i}=b^i,
    \end{aligned}
\end{equation}
where $r^0=x$ is the input provided to $f$.
We illustrate this construction in \cref{fig:prooffig}.
Although this construction is not optimized, it still has polynomial size and can be implemented using ReLU activations. In particular, all required operations --- exact ReLU constraints, addition, multiplication, and subtraction --- can be realized through standard neuron and weight configurations.


\begin{figure}
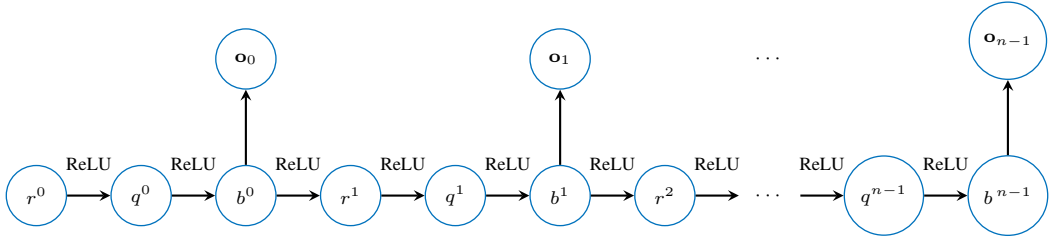

    \centering
    \includetikz[noexport]{./figures/prooffig}
    \caption{The computation of the binary decomposition defined in Equation~\ref{eq:vars-qbro}, with input $\x$ and the output vector $\mathbf{o}_0,\ldots,\mathbf{o}_{n-1}$ representing the binary decomposition.}
    \label{fig:prooffig}
\end{figure}

By induction, the neural network precisely simulates the algorithm. At iteration $i=0$, the model computes $\frac{\mathbf{r}^0}{2^{n-1}}=\frac{x}{2^{n-1}}$.
If $x \ge 2^{n-1}$, then $q^i$ receives a positive value in the range $[\frac{1}{2^n}, 1]$ (the lower bound follows from the assumption that $x$ is a \emph{discrete} value in $\{0,1,\ldots,2^{n-1}\}$). If $x < 2^{n-1}$, then $q^i$ is set to $0$. When $q^i = 0$, the corresponding bit $b^i$ is also set to $0$ and mapped directly to the output $\mathbf{o}_0$. When $q^i$ is positive --- necessarily within $[\frac{1}{2^n},1]$ due to discreteness --- $b^i$ is set to $1$. The value of $r^1$ is then updated exactly as prescribed by the algorithm, and the same process continues recursively for each iteration. All steps mirror the standard binary decomposition algorithm, and their correctness follows immediately from that correspondence. The only modification concerns the computation of $b^i$, which remains valid as long as the $q^i$ values stay above $\frac{1}{2^{n}}$. This condition always holds because the input is assumed to be discrete in the range $\{0,\ldots,2^{n-1}\}$, ensuring that all values $b^i$ and $r^i$ remain discrete throughout the computation. Consequently, the final outputs $\mathbf{o}_0,\ldots,\mathbf{o}_{n-1}$ correctly represent the binary decomposition of $x$.

\hfill $\qedsymbol{}$

\begin{claim}
\label{claim_helper_a}
It is possible to construct in polynomial time a ReLU neural network $f\colon[0,1]\to \mathbb{R}^n$ with polynomial size such that $\{0,1\}^n\subseteq f([0,1])\subseteq [0,1]^n$, where we denote $f([0,1]):=\{f(x) \ | \ x\in[0,1])$.
\end{claim}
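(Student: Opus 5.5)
We construct a univariate ReLU network $f\colon[0,1]\to\R^n$ as a composition $f:=g\circ h$. Here $h\colon[0,1]\to[0,2^n-1]$ is the affine rescaling $h(x):=(2^n-1)\,x$, which is trivially a one-layer ReLU-free (or identity-ReLU) map. The second map $g$ is a ``robustified'' version of the binary-decomposition network from \cref{binary_decomposition_lemma}, with every coordinate clipped into $[0,1]$.

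The inclusion $\{0,1\}^n\subseteq f([0,1])$ follows directly from \cref{binary_decomposition_lemma}. Every $\mathbf{b}\in\{0,1\}^n$ is $\text{bin}(z)$ for a unique $z\in\{0,\ldots,2^n-1\}$. The point $x:=z/(2^n-1)\in[0,1]$ satisfies $h(x)=z$, so on these integer inputs $g$ reproduces exactly the computation of Equation~\ref{eq:vars-qbro} and outputs $\text{bin}(z)=\mathbf{b}$. For this to hold, the clipping we add must be the identity on $\{0,1\}$-valued outputs, which it is.

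The inclusion $f([0,1])\subseteq[0,1]^n$ is where the construction of \cref{binary_decomposition_lemma} must be checked or modified. On non-integer inputs $r^i$ may become negative or non-integral, and $q^i$ may lie strictly between $0$ and $2^{-n}$. However, each bit is computed as $b^i=\max(0,2^nq^i)-\max(0,2^nq^i-1)$, which is exactly the clipping $\min(1,\max(0,2^nq^i))$. Hence $b^i\in[0,1]$ holds for every real input, regardless of the values of $r^i$ and $q^i$. The same remains true if we add, as a safety measure, an explicit output clip $\mathbf{o}_i:=\max(0,b^i)-\max(0,b^i-1)$.

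So the main (though modest) obstacle is making sure that none of the intermediate steps is undefined or blows up. This is immediate, because every operation is a finite composition of affine maps and ReLUs, so the network is a continuous piecewise-linear function on all of $[0,1]$. The remaining work is a size check. The network has $\mathcal{O}(n)$ layers, each with $\mathcal{O}(1)$ neurons, and weights $2^{\pm(n-1-i)}$, $2^n$, and $2^n-1$ whose binary encodings have $\mathcal{O}(n)$ bits. Its total encoding size is therefore polynomial in $n$, and it can be written down in polynomial time.
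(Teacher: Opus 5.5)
Your proposal is correct relative to \cref{binary_decomposition_lemma} and follows the paper's proof essentially step for step. You rescale by $2^n-1$, use the lemma at the integer points $z/(2^n-1)$ to hit all of $\{0,1\}^n$, and note that each $b^i=\max(0,2^nq^i)-\max(0,2^nq^i-1)$ is a clip into $[0,1]$, which gives $f([0,1])\subseteq[0,1]^n$. One caveat comes from the paper, not from you: as written, $q^i=\max(0,r^i/2^{n-1-i}-1)$ vanishes at $r^i=2^{n-1-i}$, so the network of Equation~\ref{eq:vars-qbro} outputs $\text{bin}(\max(z-1,0))$ on integers and never reaches $(1,\ldots,1)$ anywhere on $[0,2^n-1]$. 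Shifting the bias, e.g.\ to $q^i=\max(0,r^i/2^{n-1-i}-1+2^{-n})$, repairs the lemma, and your optional final clip keeps the upper inclusion valid regardless of such internal changes.
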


\emph{Proof.} We construct the neural network $f$ as follows. First, we apply a linear transformation to the input layer $x$ that multiplies it by $2^{n}-1$. This scales the input so that, in the first hidden layer, we obtain a single neuron $\mathbf{y}$ whose range is precisely $[0, 2^{n}-1]$. Next, we feed this neuron $y$ into the construction from Lemma~\ref{binary_decomposition_lemma}. That lemma establishes that every \emph{discrete} value in the domain $\{0,1,\ldots,2^{n}-1\}$ is mapped to its corresponding $n$-bit binary representation. Since these binary representations cover the entire space $\{0,1\}^n$, the image of our network $f$, when restricted to inputs where $y\in\{0,1,\ldots,2^{n}-1\}$, already contains all binary vectors. Moreover, because the actual domain of $y$ is the full interval $[0,2^{n}-1]$ --- which strictly contains all discrete values $\{0,1,\ldots,2^{n}-1\}$ --- it follows that $\{0,1\}^n$ remains within the image of the overall model $f$.


For the second part of the proof, namely, establishing that $f([0,1]) \subseteq [0,1]^n$, we observe that in the construction of Lemma~\ref{binary_decomposition_lemma}, each output neuron $\mathbf{o}_i$ is fed the value of $b^i=\max(0,2^n\cdot q^i)-\max(0,(2^{n}\cdot q^i)-1)$.
By construction, this expression always evaluates to a value in the interval $[0,1]$ for any real-valued $q^i$. Consequently, every output coordinate lies in $[0,1]$, completing the proof.


\hfill $\qedsymbol{}$

\begin{lemma}
\label{weighted_maxsat_third_lemma}
    The MaxSAT problem can be reduced in polynomial time to MR-NNReach when defined over a neural network of the form $f\colon\mathbb{R}\to\mathbb{R}$.
\end{lemma}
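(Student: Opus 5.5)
The plan is to obtain the univariate network by composition: first a univariate ``encoder'' that sweeps out the Boolean cube, then the multivariate MaxSAT network from Lemma~\ref{weighted_maxsat_second_lemma}. Given a CNF formula $\phi$ over $n$ variables with $m$ clauses, I will build two networks.
\begin{itemize}
\item Let $h\colon[0,1]^n\to\mathbb{R}$ be the ReLU network of Lemma~\ref{weighted_maxsat_second_lemma}, with the amplified binary-value regularizer and $C:=\lvert\phi\rvert+1$. By Claim~\ref{maximal_claim}, $\max_{\z\in[0,1]^n}h(\z)=\max_{\z\in\{0,1\}^n}h(\z)$. By Lemma~\ref{weightedmaxsat_first_lemma} (the regularizer vanishes to its constant on binary points), this common value is exactly the MaxSAT optimum of $\phi$.
\item Let $g\colon[0,1]\to\mathbb{R}^n$ be the network of Claim~\ref{claim_helper_a}, which satisfies $\{0,1\}^n\subseteq g([0,1])\subseteq[0,1]^n$.
\end{itemize}
The candidate univariate network is $F:=h\circ g$. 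Composing two ReLU networks by feeding the output layer of $g$ into the input layer of $h$ is again a ReLU network. Its size is the sum of two polynomial sizes, and both pieces are constructible in polynomial time, so the reduction is polynomial.

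The core of the argument is a two-sided sandwich on $\max_{x\in[0,1]}F(x)$.
\begin{itemize}
\item \emph{Lower bound.} Since $\{0,1\}^n\subseteq g([0,1])$, every Boolean assignment $\alpha$ equals $g(x_\alpha)$ for some $x_\alpha\in[0,1]$. Hence $\max_{x}F(x)\ge\max_{\alpha\in\{0,1\}^n}h(\alpha)$.
\item \emph{Upper bound.} Since $g([0,1])\subseteq[0,1]^n$, we get $\max_x F(x)\le\max_{\z\in[0,1]^n}h(\z)$, which by Claim~\ref{maximal_claim} equals the Boolean maximum.
\end{itemize}
Therefore $\max_{x\in[0,1]}F(x)$ equals the MaxSAT value of $\phi$. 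The maximum is attained at some $x_\alpha$, so no supremum subtleties arise. It is also essential to use the \emph{upper} containment $g([0,1])\subseteq[0,1]^n$ here, rather than Lemma~\ref{binary_decomposition_lemma} alone. The reason is that $g$ is evaluated at non-integer points too, and there it may output fractional vectors. These are harmless only because $h$ was made robust on all of $[0,1]^n$.

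To match the stated domain $\mathbb{R}\to\mathbb{R}$, I would either keep the MR-NNReach input specification $0\le x\le 1$, or precompose with the clamp $x\mapsto\max(0,x)-\max(0,x-1)$, which is expressible with two ReLUs. In the second option, $\max_{x\in\mathbb{R}}F(\mathrm{clamp}(x))$ equals the same value. The metric reduction then returns this maximum directly as the MaxSAT answer.

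I expect the main obstacle to be the upper bound, specifically verifying that the amplification constant $C$ in Lemma~\ref{weighted_maxsat_second_lemma} really dominates for arbitrary fractional points in $[0,1]^n$ produced by $g$. These are not only points with a single fractional coordinate. I would handle this by applying the rounding argument of Claim~\ref{maximal_claim} coordinate by coordinate. Each rounding step strictly increases $h$, because the clause term is $m$-Lipschitz in each coordinate with $m<C$. Iterating over all coordinates reaches a Boolean point with at least the same value. A secondary check is that the $b^i$ gadget in Claim~\ref{claim_helper_a} keeps its outputs in $[0,1]$ for all real inputs, which that claim already asserts.
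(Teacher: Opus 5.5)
Your proposal is correct and is essentially the paper's proof: the paper also composes the encoder $[0,1]\to\mathbb{R}^n$ of Claim~\ref{claim_helper_a} with the network of Lemma~\ref{weighted_maxsat_second_lemma}, and argues that the composed univariate network has the same maximum as the Lemma~\ref{weighted_maxsat_second_lemma} network on $[0,1]^n$, using $\{0,1\}^n\subseteq g([0,1])\subseteq[0,1]^n$ and the fact that this maximum is attained on Boolean points (Claim~\ref{maximal_claim}); your explicit sandwich and coordinatewise rounding simply spell out that one-line step. As an aside, the containment that actually needs care is the lower one, not the upper one: with the input scaled by $2^n-1$, the concrete gadget behind Claim~\ref{claim_helper_a} can never output the all-ones vector (for $n=1$ its output is identically $0$), so if you instantiate it yourself, scale by $2^n$ instead, and then $x=\bigl(\sum_i b_i 2^{n-1-i}+\tfrac12\bigr)/2^n$ realizes every $b\in\{0,1\}^n$.
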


\emph{Proof.} In this proof, we show that the reduction from MaxSAT to MR-NNReach established for neural networks of the form $f\colon[0,1]^n \to \mathbb{R}$
(Lemma~\ref{weighted_maxsat_second_lemma}) remains valid even when the target model has the form $f\colon\mathbb{R} \to \mathbb{R}$. Since the MR-NNReach problem includes input constraints $\psi_\text{in}$ requiring each input variable $\x_i$ to satisfy $0 \le \x_i \le 1$, it suffices to consider models with domain $[0,1]$. Using Claim~\ref{claim_helper_a}, we construct a neural network $f\colon[0,1] \to \mathbb{R}^n$ whose image satisfies $\{0,1\}^n \subseteq f([0,1]) \subseteq [0,1]^n$. Next, we apply the construction from Lemma~\ref{weighted_maxsat_second_lemma} to obtain a model $f' : [0,1]^n \to \mathbb{R}$ whose maximum value over $[0,1]^n$ corresponds exactly to the optimal value of the MaxSAT instance. Importantly, that reduction explicitly enforces that any maximizer of $f'(\x)$ must satisfy $\x \in \{0,1\}^n$, even though the domain is $[0,1]^n$. We now define $g := f' \circ f$, and for clarity denote the components by $g_f := f$ and $g_{f'} := f'$, so that $g = g_{f'} \circ g_f$. The constraints introduced in Lemma \ref{weighted_maxsat_second_lemma} still apply to the inner component $g_{f'}$, ensuring that its maximal value is attained only on Boolean inputs. Moreover, Claim \ref{claim_helper_a} guarantees that the range of $g_f$ includes all Boolean vectors $\{0,1\}^n$ and excludes any point outside $[0,1]^n$. Therefore, the maximal value attainable by $g_{f'}$ when composed with $g_f$ is exactly the same as the maximal value attainable by $f'$ over $[0,1]^n$. Consequently, the overall reduction continues to hold for neural networks of the form $f\colon\mathbb{R}\to\mathbb{R}$, completing the proof.

\hfill $\qedsymbol{}$

~
~

\begin{lemma}
\label{maxsatfinallemma}
    There exists a metric reduction from MaxSAT to MSE-NAM.
\end{lemma}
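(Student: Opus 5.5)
The plan is a \emph{copying} reduction. Take the univariate ReLU network $g\colon\mathbb{R}\to\mathbb{R}$ from Lemma~\ref{weighted_maxsat_third_lemma}, whose maximum over $[0,1]$ equals the MaxSAT optimum $\mathrm{OPT}(\phi)\in\{0,\ldots,m\}$. Build a NAM from polynomially many identical copies of (a clipped version of) $g$, each acting on its own input feature. Then pick the threshold $\beta_0$ so that the size of a cardinally-minimal sufficient explanation encodes $\mathrm{OPT}(\phi)$.

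\textbf{Construction.} Let $h(z):=g\big(\min(1,\max(0,z))\big)$, where the clipping is realized with two extra ReLU layers; this changes the size of $g$ only polynomially. Set the input point $\x:=\mathbf{0}\in\mathbb{R}^K$ and take $\epsilon:=1$ in the $\ell_\infty$ ball, so each coordinate ranges over $[-1,1]$ and its clipped value covers exactly $[0,1]$.
\begin{itemize}
\item Compute $c:=h(0)=g(0)$ in polynomial time by a forward pass.
\item Set $d:=\mathrm{OPT}(\phi)-c$. Because $0\in[0,1]$, we have $d\ge 0$, and $d\in\{0,\ldots,m\}$.
\item Put $T:=m(m+1)+1$ and $K:=T+1$.
\item Define the NAM $F(\z):=\text{step}\big(\beta_0-\sum_{i=1}^{K}h(\z_i)\big)$ with $\beta_0:=Kc+T$.
\end{itemize}
At $\x$ the pre-activation equals $T\ge0$, so $F(\x)=1$.

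\textbf{Characterizing sufficiency.} Let $\explanation$ be a subset whose complement has size $r$. By additivity and independence of the coordinates, the minimum of the pre-activation over the ball is
\[
\beta_0-(K-r)c-r\max_{[0,1]}g \;=\; T-rd .
\]
So $\explanation$ is sufficient iff $rd\le T$. This depends only on $|\explanation|$, and it agrees with Proposition~\ref{the_replacement_lemma} and Lemma~\ref{monotonicity_lemma_2}. The minimal explanation size is therefore $k^*=K-r_{\max}$, where
\[
r_{\max}=\min\big(K,\lfloor T/d\rfloor\big),
\]
with the convention $r_{\max}=K$ when $d=0$.

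\textbf{Decoding (main obstacle).} The delicate step is showing that $d\mapsto r_{\max}$ is injective on $\{0,\ldots,m\}$, so that a polynomial-time map $T_2$ can invert it.
\begin{itemize}
\item For $d=0$ we get $r_{\max}=K=T+1$.
\item For $d\ge1$ we get $r_{\max}=\lfloor T/d\rfloor\le T<K$, which already separates $d=0$ from every $d\ge1$.
\item For $1\le d<m$,
\[
\lfloor T/d\rfloor-\lfloor T/(d+1)\rfloor \;\ge\; \frac{T}{d(d+1)}-1 \;>\;0,
\]
since $T>m(m+1)\ge d(d+1)$. This is exactly why quadratically many copies are needed.
\end{itemize}
Hence $T_2$, given $k^*$ together with $\phi$ (and therefore $K,T,c$), tries each of the $m+1$ candidate values $d$, finds the unique one with $K-r_{\max}(d)=k^*$, and outputs $\mathrm{OPT}(\phi)=c+d$.

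The remaining checks are routine:
\begin{itemize}
\item $T_1$ runs in polynomial time, since it outputs $K=O(m^2)$ copies of a polynomial-size network.
\item The edge case $F(\x)=1$ with the non-strict $\ge0$ in $\text{step}$ is handled by the condition $rd\le T$ above.
\end{itemize}
Together these give a metric reduction from MaxSAT to MSE-NAM.
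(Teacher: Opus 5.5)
Your proof is correct and is essentially the paper's reduction. Both build quadratically many copies of the negated univariate MaxSAT gadget from Lemma~\ref{weighted_maxsat_third_lemma}, choose the intercept so that sufficiency depends only on $|\explanation|$, and then recover the optimum from the minimal size; you invert $d\mapsto\min(K,\lfloor T/d\rfloor)$ directly, where the paper traps the optimum in an interval of width less than $1$. Your extra copy $K=T+1$ is a real improvement: in the paper's setup ($m^2$ copies, $\beta_0=m^2$), both $d=0$ and $d=1$ make the empty set sufficient, a case its interval argument skips because it needs an explanation of size $k-1$. The one step you should still justify is why $c=g(0)$ is an integer in $\{0,\ldots,m\}$: the gadget maps input $0$ to the all-false assignment, so $c$ is the number of clauses that assignment satisfies.
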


\emph{Proof.} Given the CNF formula $\phi$, we first apply Lemma~\ref{weighted_maxsat_third_lemma} to reduce it to a univariate neural network $f_i\colon\mathbb{R}\to\mathbb{R}$. We pick an arbitrary value $\x_i \in [0,1]$. Let us assume for simplicity that $f_i(\x_i)=r$. We can ``normalize'' this value by appending an additional final layer with weight $1$ and bias $-r$, yielding a modified model $f'$ for which $f'_i(\x_i)=0$ (though, for ease of notation, we continue to write $f$). Note that under this transformation, the \emph{optimal} value of $f$ is also shifted by $r$. Furthermore, we note that by appending an additional layer with weight $-1$, we can ``mirror'' the univariate component, yielding $-f_i$. We then construct a NAM $f$ consisting of $m^2$ univariate components $-f_i$. We now form a vector $\x \in \{0,1\}^{m^2}$ by repeating the $\x_i$ value $m^2$ times. We set $\epsilon = [0,1]$ and define the intercept of the NAM as $\beta_0:=m^2$. This completes the construction of the corresponding MSE-NAM instance. We now observe that it holds that:
~
\begin{equation}
-\sum_{i=1}^{m^2}f_i(\x_i)+\beta_0= 0+m^2>0
\end{equation}

In other words, this means that $f(\x)$ receives the positive label, i.e., $f(\x) = 1$. Let us assume that some explanation $\explanation$ is a sufficient explanation of size $k$. For $\explanation$ to indeed qualify as sufficient, it must satisfy the following:
~ 
\begin{equation}
\begin{aligned}
-\sum_{i\in\explanation}f_i(\x_i)-\sum_{i\in\Bar{\explanation}}\max_{\z_i\in[0,1]}f_i(\z_i)+\beta_0 \geq 0 \iff 
-\sum_{i\in\Bar{\explanation}}\max_{\z_i\in[0,1]}f_i(\z_i)+\beta_0 \geq 0 \iff \\
-|\Bar{\explanation}|\cdot \max_{\z_i\in[0,1]}f_i(\z_i)+m^2\geq 0 \iff \max_{\z_i\in[0,1]}f_i(\z_i) \leq \frac{m^2}{|\Bar{\explanation}|}=\frac{m^2}{m^2-k}
\end{aligned}
\end{equation}

This implies that a sufficient explanation of size $k$ exists if and only if:
~
\begin{equation}
\label{explanation_equation_to_cite}
(m^2-k)\cdot \max_{\z_i\in[0,1]}f_i(\z_i) \leq m^2\end{equation}

This also implies that a sufficient explanation of size $k'$ does \emph{not} exist if and only if:
~
\begin{equation}
\label{equation_to_cite}
(m^2-k')\cdot \max_{\z_i\in[0,1]}f_i(\z_i) >m^2\iff m^2-k'> \frac{m^2}{\max_{\z_i\in[0,1]}f_i(\z_i)}\end{equation}

We additionally note that $\max_{\x_i\in[0,1]}f_i(\x_i)\leq m-r \leq m$, since at most $m$ clauses can be satisfied. Therefore, it follows that:
~
\begin{equation}
    \frac{1}{\max_{\z_i\in[0,1]}f_i(\z_i)}\geq \frac{1}{m}
\end{equation}

Moreover, by applying Equation~\ref{equation_to_cite} to any subset that is not a sufficient explanation of size $k'$, we obtain that:~
\begin{equation}
 m^2-k'> \frac{m^2}{\max_{\z_i\in[0,1]}f_i(\z_i)}\iff m^2-k'>m \iff m^2-k'\geq m+1
\end{equation}

Putting everything together, we have established that there is \emph{no} sufficient explanation of size $k'$ if and only if $m^2-k'\geq m+1$. Now, assume that an explanation $\explanation$ of size $k$ is \emph{cardinally-minimal}. This implies that $\explanation$ satisfies Equation~\ref{explanation_equation_to_cite}, and moreover, every explanation of size $k-1$ fails to be sufficient. Hence, Equation~\ref{equation_to_cite} holds when we set $k' := k-1$, yielding $m^2-(k-1)\geq m+1$ which is equivalent to $m^2-k\geq m$. We therefore obtain the following conclusion:
~
\begin{equation}
\begin{aligned}
    \frac{m^2}{m^2-(k-1)}<\max_{\z_i\in[0,1]}f_i(\z_i)\leq \frac{m^2}{m^2-k}
\end{aligned}
\end{equation}

Moreover, the following relation follows directly from simple algebraic manipulation:~
\begin{equation}
\begin{aligned}
        \frac{m^2}{m^2-k}
-\frac{m^2}{m^2-(k-1)}=m^2\cdot \frac{m^2-k+1-(m^2-k)}{(m^2-k)\cdot (m^2-k+1)}=\frac{m^2}{(m^2-k)\cdot (m^2-k+1)}
\end{aligned}
\end{equation}

Hence, we obtain that:~
\begin{equation}
\begin{aligned}
    \frac{m^2}{m^2-(k-1)}<\max_{\z_i\in[0,1]}f_i(\z_i)\leq \frac{m^2}{m^2-(k-1)}+\frac{m^2}{(m^2-k)\cdot (m^2-k+1)}
\end{aligned}
\end{equation}

Since we know that $m^2-k\geq m$, it immediately follows that $m^2-k+1> m$. Altogether, we obtain:
~
\begin{equation}
\begin{aligned}
    \frac{m^2}{m^2-(k-1)}<\max_{\z_i\in[0,1]}f_i(\z_i)<\frac{m^2}{m^2-(k-1)}+1
\end{aligned}
\end{equation}

Overall, under our construction we obtain that $\max_{\z_i\in[0,1]}f_i(\z_i)$ is always confined to an interval whose width is strictly less than $1$. From Lemma~\ref{weighted_maxsat_third_lemma}, we already know that $\max_{\z_i\in[0,1]}f_i(\z_i)$ equals the MaxSAT value of $\phi$ minus $r$ (the amount subtracted during the normalization step). Since the MaxSAT optimum is an integer, it follows directly that $\max_{\z_i\in[0,1]}f_i(\z_i)+r$ is an integer as well. Moreover, the interval $[\frac{m^2}{m^2-(k-1)}+r,\frac{m^2}{m^2-(k-1)}+r+1]$ must contain exactly one integer because it contains $\max_{\z_i\in[0,1]}f_i(\z_i)$ and its length is strictly less than $1$. Consequently, given the encoding of $\phi$ into the NAM instance and into the MSE-NAM problem, we can construct a simple polynomial-time function that, from the value of $k$, computes the unique integer in this interval --- recovering the optimal number of satisfiable clauses of $\phi$.


\hfill $\qedsymbol{}$

\section{On Theory and Practice of Importance Sorting}
\label{app:importance-sorting}

\subsection{The Theoretical Worst Case Behavior of the Parameter $\xi_i$}
\label{app:importance-sorting-limitations}

We recall the previous Proposition~\ref{runtime_proof_appendix}, where we have shown that our algorithm performs a total number of steps equal to:
~
\begin{equation}
\begin{aligned}
    \frac{\beta_i-\alpha_i}{2^{k_i}}\leq \xi_i \iff
    k_i\leq \mathcal{O}\big(\log(\frac{\beta_i-\alpha_i}{\xi_i})\big).
\end{aligned}
\end{equation}

A natural question is what happens when $\xi_i$ approaches $0$, or when the interval $\beta_i-\alpha_i$ becomes very large. In the following, we show that in such cases, our algorithm is still guaranteed to terminate after a linear number of queries in the size of the representation of the model given that each weight $w_{i,j}$ and bias $b_{i,j}$ in every univariate model is encoded as a rational number $\frac{p_{i,j}}{q_{i,j}}$ with $q_{i,j},p_{i,j}\in\mathbb{N}$, given in binary. 

\begin{proposition}\label{prop:parallel_sorting_terminates}
    Let $f$ be a NAM with ReLU activations. Then, the process of running \cref{alg:parallel_sorting_part_1}, followed by \cref{alg:naive_nam_algorithm} or \cref{alg:binary_nam_algorithm} terminates after at most a linear number of queries to a neural network verifier, with respect to the size of the model $f$'s encoding.
\end{proposition}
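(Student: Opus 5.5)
Fix a NAM $f$ with ReLU activations and rational weights of total encoding size $N$. The plan is to show that the binary search in \cref{alg:parallel_sorting_part_1} needs only $\mathcal{O}(N)$ bisection steps per component before every relevant interval has shrunk below a separation threshold $\delta$ with $\log(1/\delta)=\mathcal{O}(N)$. The second phase then adds $\mathcal{O}(n)$ or $\mathcal{O}(\log n)\subseteq\mathcal{O}(N)$ further queries, by \cref{naive_algorithm_cardinality_proof} and \cref{second_runtime_appendix}.

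\emph{Step 1: the minima are rationals of bounded bit-length.} Consider a component $f_i$ and the quantity $f_i(\x_i^*)=\min_{\tilde{\x}_i\in[\x_i-\epsilon_p,\x_i+\epsilon_p]}f_i(\tilde{\x}_i)$. I argue as in the proof of \cref{to_cite_finish}. Once the activation pattern is fixed, $f_i$ is affine on a polyhedral piece, so the minimum is attained at a vertex of an LP whose coefficients are the weights and biases of $f_i$, together with $\x_i$ and $\epsilon_p$. Standard LP bounds (Cramer's rule / Hadamard) then show that this optimum is a rational $p/q$ with $\log|p|,\log q=\mathcal{O}(N_i)$, where $N_i$ is the size of $f_i$ plus the bits of $\x_i$ and $\epsilon_p$. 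The same argument shows that the initial bounds $\alpha_i,\beta_i$, obtained from interval propagation, have magnitude at most $2^{\mathcal{O}(N_i)}$. As a result, $\beta_i-\alpha_i\le 2^{\mathcal{O}(N)}$, and all values $\Delta_i:=f_i(\x_i)-f_i(\x_i^*)$ are rationals with denominators at most $2^{\mathcal{O}(N)}$.

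\emph{Step 2: separation.} Take two distinct values $\Delta_i\neq\Delta_j$. Their difference is a nonzero rational whose denominator is at most $q_iq_j\le 2^{\mathcal{O}(N)}$, so $|\Delta_i-\Delta_j|\ge 2^{-\mathcal{O}(N)}=:\delta$. Each bisection halves $[l_i,u_i]$, so after $k$ steps the width is $(\beta_i-\alpha_i)2^{-k}$. Once $k=\mathcal{O}(N)$, this width is below $\delta/2$ for every $i$, and from then on the intervals of components with distinct $\Delta$'s are disjoint. Plugging $\xi_i\ge\delta/2$ and $\beta_i-\alpha_i\le2^{\mathcal{O}(N)}$ into \cref{runtime_proof_appendix} gives $\mathcal{O}(\log(2^{\mathcal{O}(N)}))=\mathcal{O}(N)$ queries per thread.

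\emph{Main obstacle: ties.} If $\Delta_i=\Delta_j$ exactly, the break condition of \cref{alg:parallel_sorting_part_1} may never hold as literally written. My first approach is to note that the bit-length bound also lets us round. After $\mathcal{O}(N)$ bisections, any interval of width below $\delta/2$ contains at most one rational with denominator at most $2^{\mathcal{O}(N)}$. That rational can be recovered exactly, for instance by continued fractions, and certified with one extra verifier query. The intervals then collapse to points, and equal points are ordered arbitrarily. The resulting order is still valid for \cref{the_replacement_lemma}, because its argument only needs $\hat{\Delta l}_j-\hat{\Delta l}_i\ge0$ and $\hat{\Delta u}_j-\hat{\Delta u}_i\ge0$, and both hold with equality for tied features. \cref{first_helpful_lemma} and the cardinality proofs therefore go through unchanged. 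Summing over phases gives $\mathcal{O}(N)$ parallel queries per component in Stage 1, plus at most $\mathcal{O}(n)\le\mathcal{O}(N)$ queries to $f$ in Stage 2, which is linear in the encoding size.
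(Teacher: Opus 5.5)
Your proof is correct, and its first half matches the paper's. The paper also bounds the bit-length of each $\min f_i$ through the activation-pattern LP (\cref{lemm:finite_solution_precision}) and the size of $\alpha_i,\beta_i$ through interval bound propagation (\cref{lemm:init_bounds_precision}).

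The routes split at the termination step. The paper argues that each halving ``decreases by 1 the bit length of the approximated minimal values.'' From this it concludes that, after linearly many halvings, the bounds of any two components must touch, ``including the case where the true minimal values are equal.'' You instead derive an a priori gap $|\Delta_i-\Delta_j|\ge 1/(q_iq_j)=2^{-\mathcal{O}(N)}$ between distinct true values. This turns the paper's heuristic into a precise count, and it avoids the paper's somewhat circular use of $\xi_i$, which is defined through the final-iteration bounds.

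On ties your route is not merely different but necessary. Every update $u_i\gets m_i$ follows a failed query, so $u_i>\min f_i$ strictly and hence $\Delta u_i<\Delta_i^*$. If $\Delta_i^*=\Delta_j^*$, neither $\Delta u_i\ge\Delta l_j$ nor $\Delta u_j\ge\Delta l_i$ can ever hold, barring the degenerate case of a constant component with a tight $\beta_i$. So the loop of \cref{alg:parallel_sorting_part_1} as written need not terminate, and the paper's ``matching bounds'' claim fails.

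Your rational-reconstruction step fixes this. Together with your remark that the replacement argument only needs weak inequalities, it keeps the cardinality results intact under arbitrary tie-breaking. The extra certifying query is harmless but unnecessary, since the uniqueness of the low-denominator rational in the interval already identifies the minimum.

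The cost is that you are proving the proposition for an augmented \cref{alg:parallel_sorting_part_1}, with a rounding step after $\mathcal{O}(N)$ halvings. You should say this explicitly rather than present it as one option among several, because no argument can give termination for the literal loop on exact ties.
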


\emph{Proof.} We show this proposition in three stages:
\begin{inparaenum}[(i)]
\item Firstly, we demonstrate that for each $i\in [n]$, the bit-length of the minimal value of the $i$'th univariate function, namely $min_{z\in B_{\epsilon}(\x_i)}\big(f_i(\z_i)\big)$ where $B_{\epsilon}(\x_i)$ is the $\epsilon$-ball around $\x_i$, is finite and is bounded by the encoding size of $f_i$. \item Secondly, we show that the bit lengths of the verification bound parameters $\alpha_i,\beta_i$ are also bounded by the encoding size of $f_i$. \item 
Finally, we use this claim to show that \cref{alg:parallel_sorting_part_1} terminates after at most a linear number of queries to a neural network verifier with respect to the model size encoding, from which it is straightforward to establish that running this algorithm, followed by either \cref{alg:naive_nam_algorithm} or \cref{alg:binary_nam_algorithm}, also must terminate after at most a linear number of queries\end{inparaenum}. We start off with the first part of the claim:


\begin{lemma}\label{lemm:finite_solution_precision}Consider a univariate ReLU neural network component $f_i\colon\mathbb{R}\to\mathbb{R}$, an input $\x_i\in\mathbb{Q}$, and a radius $\epsilon\in\mathbb{Q}$. Define the $\epsilon$-ball around $\x_i$ as $B_{\epsilon}(\x_i) := \{ \z_i \in \mathbb{R}\colon \x_i - \epsilon \le \z_i \le \x_i + \epsilon\}$ for all $i \in [n]$. Then, $\min_{\z_i\in B_{\epsilon}(\x)} ( f_i(\z_i))$ is finite, with a bit length that is bounded by the encoding size of $f_i$.
\end{lemma}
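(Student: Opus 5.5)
We will argue through the piecewise-linear structure of $f_i$, in the same spirit as the activation-pattern argument in the proof of Lemma~\ref{to_cite_finish}.

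\textbf{Finiteness.} Because $f_i$ is a composition of affine maps with rational parameters and ReLUs, it is continuous. The ball $B_{\epsilon}(\x_i)=[\x_i-\epsilon,\x_i+\epsilon]$ is compact, so the minimum exists and is finite.

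\textbf{Where the minimum is attained.} We fix an activation pattern $Y$, that is, an active/inactive status for every neuron, as the NP witness of \citet{salzer2021reachability}. Under $Y$, every hidden value becomes an affine function of the scalar input $z$. The set of inputs consistent with $Y$ is then described by linear inequalities $\mathbf{a}^\top(\cdot)\ge 0$ or $\le 0$, so it is an interval, possibly empty. Intersecting it with $B_\epsilon(\x_i)$ gives an LP in the variable $z$ together with the auxiliary neuron variables. The minimum of $f_i$ over the ball equals the minimum over all feasible patterns $Y$ of these LP optima. Each LP optimum is attained at a vertex of its feasible polyhedron, because the objective is bounded and the region is bounded. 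Equivalently, in the univariate picture, the minimizer $z^*$ is either an endpoint $\x_i\pm\epsilon$ or a breakpoint of $f_i$. A breakpoint is a point where some neuron's pre-activation, an affine function of $z$ with rational coefficients, vanishes.

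\textbf{Bit-length bound.} A vertex is the unique solution of a square subsystem of the LP constraints. Every coefficient in that subsystem is a weight, a bias, $\x_i$, or $\epsilon$. By Cramer's rule and Hadamard's inequality, the vertex, and hence $z^*$ and $f_i(z^*)$, is a rational whose numerator and denominator are bounded by $d^{\mathcal{O}(m_i)}$. Here $d$ is the largest numerator or denominator in the encoding and $m_i$ is the number of parameters of $f_i$. This gives $\mathcal{O}(m_i\log d)$ bits, which is linear in the encoding size of $f_i$, plus the bits of $\x_i$ and $\epsilon$.

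\textbf{Main obstacle.} The hard part is making the Cramer-type bound rigorous without the size blowing up through depth. Naively composing $\kappa$ layers multiplies denominators layer by layer. Multiplying $\kappa$ weight matrices gives entries whose bit-length grows additively in the layer sizes, not multiplicatively. Even so, it is cleaner to avoid composition altogether. The plan is to apply the standard bound on basic solutions of a polynomial-size LP, namely that the bit-length is polynomial and in fact linear in the total bit-size of the constraint matrix. Since every variable is introduced by a single constraint row, the determinant expansion is bounded by the product of the row entries. That product gives the $\mathcal{O}(m_i\log d)$ bound directly.
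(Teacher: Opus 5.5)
Your proposal is correct and follows essentially the same route as the paper: fix a ReLU activation pattern, reduce the minimization over $B_\epsilon(\x_i)$ to a linear program, bound the bit-length of that LP's optimum, and note that the true minimum is attained by one of these pattern-induced LPs. The differences are that you prove the LP precision bound yourself (a vertex is the solution of a square subsystem, then Cramer's rule and Hadamard's inequality) where the paper simply cites it, and that you correctly add the bit-lengths of $\x_i$ and $\epsilon$ to the bound, which the paper's statement omits.
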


\begin{proof}

We prove this claim for some general univariate model $f_i$. However for ease of presentation, we will define a copy ``g'' of $f_i$, i.e., $g\colon\mathbb{R}\to\mathbb{R}$ such that $g:=f_i$ to avoid repetitive usage of the subscript $i$ . We will particularly prove that  $\min_{z^0\in B_{\epsilon}(x)} ( g(z^0))$ is finite with bit length that is bounded by the encoding size of $g$. We use the $0$ subscript in $z^0$ to denote the fact that it's the ``0'' input layer in the ReLU model (i.e., the input layer).

As our NAM has ReLU activations, the minimal value, i.e., the value for which $\min_{z^0\in B_{\epsilon}(x)} ( g(z^0))$ is obtained, is the solution to some linear program whose constraints are directly determined by the model’s weights and biases. 
This observation is a standard cornerstone of neural-network verification. For completness, we provide a more explicit formalization here. As shown by~\citep{salzer2021reachability}, once the activation pattern (active/inactive state) of every ReLU constraint is ``guessed'', the verification query reduces to a linear program with the model parameters defining its constraints. The minimal value $\min_{z^0\in B_{\epsilon}(x)} ( g(z^0))$ is therefore the minimal value of an LP problem induced by a ``guess'' of the activation patterns.

More particularly, given that we ``guess'' all activations $\alpha_{j,k}\in\{0,1\}$ for each ReLU constraint that is the $j$'th variable of layer $k$, then one can define the following LP: \emph{minimizing} $g(z^0)$, subject to the input specifications: $x - \varepsilon \;\le\; z^0 \;\le\; x + \varepsilon$, the linear specifications that are specified by $\mathbf{h}^{k} = W^k\mathbf{z}^{k-1}+\mathbf{b}^k$, for all $k\in\kappa$ where $\kappa\in\mathbb{N}$ denotes the number of layers in $g$, and the ReLU specifications that are given by setting $\z^{k}_j=\mathbf{h}_{kj}$ and $\mathbf{h}_{kj}\geq 0$ if $\alpha_{j,k}=1$, or $\z^{k}_j=0$ and $\mathbf{h}_{kj}\leq 0$ if $\alpha_{j,k}=0$, where $\z^k$ denotes the output variables in layer $k$. The output $g(z^0)$ that is minimized within the linear program is simply the value of the variable $\z^{\kappa}$ (the value of the last output layer).


It is well known that any solution of a linear program admits finite precision~\citep{vera1998complexity, dadush2020revisiting}, where a naïve upper bound on the percision of the solution is $\mathcal{O}(n\cdot L)$ where $n$ are the number of variables and $L$ is the maximum bit-length of any input coefficient. Hence, similarly, in the LP that we have described, if $m_i$ denotes the overall number of weights and biases in $f_i$ and let $d$ denote the maximal bit length of any parameter in the model (i.e., weights/biases), meaning $2^d$ itself denotes the magnitude of this encoding, then the maximal value in turn can be encoded using $\mathcal{O}(m_i \cdot d)$ bits, which is linear in the encoding size of $f_i$.


It follows immediately that the optimal value of this linear program --- namely, the minimum attained within a fixed linear region --- is finite and bounded by the numeric precision of the program’s parameters. Furthermore, since $\min_{z^0\in B_{\epsilon}(x)} ( g(z^0))$ is realized as the minimum over one of these linear regions determined by at least one activation pattern $\alpha_{j,k}$, the same bound on bit-length applies to $\min_{z^0\in B_{\epsilon}(x)} ( g(z^0))$ as well.
%
\end{proof}

By \cref{lemm:finite_solution_precision}, it follows that $\xi_i$ (as defined in \eqref{prop:part1-complexity}), which represents the smallest gap between two minimal univariate values over a continuous domain, also admits finite precision. In particular, its bit-length is bounded by $\max_{i\in\{1,\dots,n\}}\big(m_i\cdot \log(d)\big)$. We can now proceed to the remaining part of the proof:


\begin{lemma}\label{lemm:init_bounds_precision}
    Given that the weights, the biases, and the input interval of an univariate model $f_i$ have maximal bit length of at most $d_i$, the output interval's bounds $\alpha_i$ and $\beta_i$ are finite and their bit length is bounded by $m_i\cdot d_i$, where $m_i$ is the number of parameters in $f_i$.
\end{lemma}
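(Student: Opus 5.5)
We propose to prove \cref{lemm:init_bounds_precision} by tracking bit-lengths through the interval bound propagation (IBP) that produces $\alpha_i,\beta_i$. The same idea carries over to any linear-relaxation verifier whose bounds are formed from the same rational arithmetic. Concretely, we take the initial bounds in Line~2 of \cref{alg:parallel_sorting_part_1} to be computed layer by layer. The input interval is $[\x_i-\epsilon_p,\x_i+\epsilon_p]$. For an affine layer $\h^k=W_k\z^{k-1}+\mathbf{b}^k$ we split $W_k=W_k^+ + W_k^-$ into its positive and negative parts and set
\[
\underline{\h}^k = W_k^+\underline{\z}^{k-1}+W_k^-\overline{\z}^{k-1}+\mathbf{b}^k,\qquad
\overline{\h}^k = W_k^+\overline{\z}^{k-1}+W_k^-\underline{\z}^{k-1}+\mathbf{b}^k .
\]
A ReLU maps the interval $[\underline{\h},\overline{\h}]$ to $[\max(0,\underline{\h}),\max(0,\overline{\h})]$. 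The final output interval is $[\alpha_i,\beta_i]$. Finiteness is immediate: every step is a finite composition of rational additions, multiplications and maxima applied to finite rationals, so $\alpha_i,\beta_i\in\mathbb{Q}$ and are finite.

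For the bit-length bound we run an induction over the computation graph. The invariant is that every bound value produced by the computation is a rational whose numerator and denominator are built from the parameters it depends on, with bit-length at most $d_i$ times the number of parameters it depends on, up to an additive constant. The ReLU step never increases bit-length, since $\max(0,q)$ is either $0$ or $q$. A product of a weight of bit-length $\le d_i$ with a bound of bit-length $b$ has bit-length $\le b+d_i$. A sum of rationals $p_1/q_1+p_2/q_2$ has denominator $q_1q_2$, so its bit-length is at most the sum of the two summands' bit-lengths plus one. Each weight and bias enters the computation of a given output bound along a path, so charging each multiplication and each bias addition to its parameter gives the total bound $\mathcal{O}(m_i\cdot d_i)$. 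This is linear in the encoding size of $f_i$, matching the bound obtained via \cref{lemm:finite_solution_precision}.

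The main obstacle is the depth compounding. A neuron in layer $k$ depends on all neurons of layer $k-1$, and each of those values is reused by every neuron of layer $k$. A naive recursion therefore multiplies bit-lengths by the width at every layer, which gives a bound exponential in depth rather than $m_i d_i$. To handle this we will not bound numerators and denominators separately through the recursion. Instead we will use a common denominator: let $D$ be the product of the denominators of all parameters and of the input interval endpoints, which has bit-length $\le m_i d_i$. Every intermediate bound is then of the form $N/D_k$, where $D_k$ divides the product of the denominators of the parameters used up to layer $k$.

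The numerators are then controlled by magnitude rather than by bit-length. Each $|\overline{\h}^k|$ is at most $\prod_{j\le k}(n_j 2^{d_i})$ times the input magnitude. Its logarithm is $\sum_{j}(d_i+\log n_j)$, and since $n_j$ is at most the number of parameters, this is $\mathcal{O}(m_i d_i)$. Combining the denominator bound with this magnitude bound yields the claim.

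If the verifier uses linear relaxations (CROWN-style) rather than plain IBP, the relaxation slopes $\overline{\h}/(\overline{\h}-\underline{\h})$ introduce further rational divisions. We would then either argue that the same common-denominator/magnitude argument still applies, or simply note that $\alpha_i,\beta_i$ may be taken as IBP bounds without loss of correctness.
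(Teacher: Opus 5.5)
Your argument is correct. It starts from the same object as the paper: interval bound propagation, with ReLU never increasing bit-length. The bookkeeping, however, is genuinely different.

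\textbf{The paper's route.} The paper counts growth per operation: a weight multiplication adds at most $d_i$ bits and an addition at most one. Hence layer $k$ adds at most $d_i+\ell_k$ bits, and the total is $d_iL+\sum_k\ell_k\le m_i(d_i+1)$. That is short and gives an explicit constant. However, ``an addition adds at most one bit'' holds only for integers or for rationals over a shared denominator. In general $p_1/q_1+p_2/q_2$ has denominator $q_1q_2$ (e.g.\ $1/3+1/5=8/15$), which is exactly the compounding you flag. Even over a shared denominator, layer $k$ can add up to $\ell_kd_i$ denominator bits rather than $d_i$ (this still sums to $m_id_i$).

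\textbf{Your route.} You use a global denominator $D$, the product of all parameter denominators and the two endpoint denominators, so at most $(m_i+2)d_i$ bits. Combined with a magnitude bound on the numerators, this is what makes the accounting rigorous for arbitrary rational parameters. The price is an $\mathcal{O}(m_id_i)$ bound rather than the literal $m_id_i$. That is all the paper's proof actually delivers, and all that Proposition~\ref{prop:parallel_sorting_terminates} uses.

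\textbf{For the write-up.} Drop the first-paragraph invariant that charges each parameter along a path; you yourself show it fails under reuse. Also include the biases and endpoints explicitly. With $M_k$ the largest bound magnitude in layer $k$, you have $\max(M_k,1)\le (n_{k-1}+1)\,2^{d_i}\max(M_{k-1},1)$ and $\max(M_0,1)\le 2^{d_i}$. Then $\log_2\max(M_L,1)\le (L+1)d_i+\sum_k\log_2(n_{k-1}+1)=\mathcal{O}(m_id_i)$. Every numerator therefore has at most $\log_2\max(M_L,1)+\log_2 D+1$ bits.
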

\begin{proof}

The parameters $\beta_i$ and $\alpha_i$ can be obtained using a wide range of bound-propagation techniques developed in neural network verification. Even under a ``naïve'' approach such as Interval Bound Propagation (IBP)~\citep{zhang2020towards, gowal2018effectiveness} --- which propagates bounds layer by layer by computing an outer approximation at each step and exploiting the monotonicity of ReLU activations --- the resulting bounds still have a (very loose) worst-case linear bit-length in the model size. 

Formally, for each $k \in [\kappa]$, where $\kappa$ is the number of layers of the univariate model, the values of the propagated interval in the $k$'th layer, denoted $\alpha^k, \beta^k$, can be computed from the interval of the $(k-1)$'th layer, $\alpha^{k-1}, \beta^{k-1}$, using the weights and biases of the $k$'th layer via interval arithmetic:
\[
\alpha^k = \min_{\alpha^{k-1} \le z \le \beta^{k-1}} \bigl(\mathrm{ReLU}(W^k z + b^k)\bigr), 
\qquad
\beta^k = \max_{\alpha^{k-1} \le z \le \beta^{k-1}} \bigl(\mathrm{ReLU}(W^k z + b^k)\bigr),
\]
where $\alpha^0 = x_i - \epsilon$ and $\beta^0 = x_i + \epsilon$. Since $\mathrm{ReLU}(x) = \max(0,x)$, the activation does not increase the bit-length. A multiplication can increase the bit-length by at most $d_i$, and an addition can increase it by at most $1$. Thus, calculating $W^k z + b^k$ in layer $k$ --- which consists of element-wise multiplications of $\alpha^{k-1},\beta^{k-1}$ with numbers of maximal bit-length $d_i$, followed by summation and adding the bias --- increases the bit-length by at most $d_i$ (from multiplications) and by at most $m_i$ in total (from additions across all layers). As a result, the bit-length of $\alpha^k, \beta^k$ can be at most $d_i + \ell_k$ larger than that of $\alpha^{k-1}, \beta^{k-1}$, where $\ell_k$ is the number of parameters in the $k$'th layer.

Let $L$ denote the number of layers in $f_i$. We can then bound the bit-length of the output interval values:
\[
d_i \cdot L + \sum_{k \in [\kappa]} \ell_k
\;\le\;
d_i \cdot m_i + \sum_{k \in [\kappa]} \ell_k
\;\le\;
d_i \cdot m_i + m_i
= 
m_i (d_i + 1).
\]

Overall, the propagated input bounds have bit-length at most $\mathcal{O}(m_i \cdot d_i)$.
\end{proof}

Now, using \cref{lemm:finite_solution_precision} and \cref{lemm:init_bounds_precision} we can conclude the proof for \cref{prop:parallel_sorting_terminates}. Particularly, in every iteration of the loop in \cref{alg:parallel_sorting_part_1}, Line~5 halves the bounds of the verification problem and thereby decrease by 1 the bit length of the approximated minimal values.  
        Line~12 then checks, for every pair of univariate networks $f_i, f_j$ ($i,j \in [n]$), whether either the lower bound of $f_i$ is greater than or equal to the upper bound of $f_j$, or vice versa. By \cref{lemm:finite_solution_precision}, the minimal values of $f_i$ and $f_j$ have bit length bounds of $m_i\cdot d, m_j\cdot d$ bits, respectively. Moreover, from \cref{lemm:init_bounds_precision}, the bit lengths of $\alpha_i,\beta_i$ are bounded from above by $m_i\cdot (d+1)$.
        
    
Consequently, the terms $
\beta_i-\alpha_i$ and $\xi_i
$ can both be represented using at most $\mathcal{O}\left(m\cdot d\right)$ bits.
thus, a binary search over this quantity will require no more than $\mathcal{O}\left(m\cdot d\right)$ steps.
Specifically, after a linear number of steps in $m\cdot d$, the condition in Line~12 must be satisfied: for every pair of $i,j\in[n]$, either the lower bound of $f_i$ will match the upper bound of $f_j$, or the lower bound of $f_j$ will match the upper bound of $f_i$ — including the case where the true minimal values are equal. 

Put differently, we execute $n$ parallel threads, and after at most a linear number of iterations in the encoding size of the largest univariate function (itself linear in the total function size), \cref{prop:parallel_sorting_terminates} ensures termination. Furthermore, since \cref{alg:naive_nam_algorithm} and \cref{alg:binary_nam_algorithm} respectively run in linear or logarithmic time in $n$ (which is likewise linear in the model encoding size), it follows that the overall procedure terminates after at most a linear number of queries in the model encoding size.


\hfill $\qedsymbol{}$

We note that since this bound is linear in the model encoding size, it represents a substantial improvement over the exponential worst-case complexity observed in general neural networks~\citep{barcelo2020model}.


\textbf{Relation to verifier and machine precision.} In practice, it is highly unlikely to reach this worst-case bound on precision (see \cref{app:Near-Identical-Features} for empirical evidence). Hypotheticallly, even the verifier or machine precision will dominate this parameter. Moreover, our experiments show that the values of $\xi_i$ in trained NAMs are in fact rarely exactly on the decision boundary. 
Consequently, in our empirical evaluations, the effective precision parameter governing the runtime is $\xi_i$. As demonstrated in our experiments --- where we efficiently compute cardinally minimal explanations for commonly used NAMs --- and in the ablation study of this parameter in \cref{app:Near-Identical-Features}, this yields a substantial improvement in overall runtime.

\subsection{Practical Optimizations for Importance Sorting}
\label{app:importance-sorting-optimization}

While analyzing the behavior of \cref{alg:parallel_sorting_part_1} in practice, we noticed that the algorithm makes unnecessarily many verifier calls in certain edge cases. We briefly mention these here, along with our (sound) optimizations based on heuristics.
As in \cref{sec:importance-sorting}, we consider the case where $f(\x)=1$, which requires us to find bounds $[l_i,u_i]$ for the exact minimum value $f_i(\x_{i}^*)$ for each feature $i\in[n]$. The case $f(\x)=1$ follows symmetrically.
The two edge cases we consider deal with cases where the exact minimum value is (close to) either of the bounds:

\textbf{Lower bound close to exact minimum.}
Please note that as we iteratively refine the bounds using neural network verification calls, the verifier sometimes gives a counterexample $\x'$ along with the verification result.
We can use $\min(f_i(\x_{i}'), m)$ as our new upper bound for the minimum $u_i$ (\cref{alg:parallel_sorting_part_1}, line 9).
We observed that, in practice, this speeds up the sorting process, particularly as the verifier often returns a counterexample $x'$ that is close to or even the exact minimum.
This works well in practice for the case $[0,u_i]$ on NAMs with ReLU activations, as $\x'_{i}$ just has to hit the same piece-wise linear region containing $\x_{i}^*$, which gets mapped to $0$ through $f_i$.

\textbf{Upper bound close to exact minimum.}
It also sometimes occurs that the upper bound $u_i$ is already (close to) the exact upper bound.
Thus, instead of iteratively running verification queries until $l_i$ converges to $u_i$, it helps to directly test if $u_i-\delta$ can still be reached for some small $\delta\in\R_+$.
A good heuristic to switch to this test is if the verifier concludes that $f(\x_{i}^*)\in[m_i,u_i]$ but is unable to return a counterexample demonstrating this (as the verifier has to hit $u_i$ more or less exactly given it's close proximity to $f(\x_{i}^*)$).

These two optimizations often reduce the number of verifier calls described in \cref{prop:part1-complexity} to very few verifier calls.
In practice, we have seen that often only $3$ verifier calls per feature are required to determine the total ordering.

\section{Extensions to additional settings}
\label{app:additional-settings}

\subsection{Extension to Multi-Class Classification}

For multi-class classification, let us assume the winner class $t=f(\x)=\argmax_{j\in[c]} f_{j}(\x)$ (\cref{sec:nams}).
Then, we can distinguish between two cases: (i) \emph{winner-vs.-all} explanations, and (ii) all pair-wise \emph{winner-vs.-one} explanations,
where each case can be reduced to a binary classification task, where the new binary network in (i) is given by $\hat{f}(\x) = f_t(\x) - \max_{j\neq t} f_j(\x)$, 
and for (ii) by $\hat{f}_j(\x) = f_t(\x) - f_j(\x),\,j\in[n]$, with the class $1$ corresponding to the original winner class $t$.
Then, we can apply \cref{alg:parallel_sorting_part_1} and \cref{alg:binary_nam_algorithm} to generate the respective explanations.

\subsection{Extension to regression}

Let us assume we want to find the subset $\explanation$ that is sufficient to determine that the prediction will always be \emph{larger} than some deviation $\delta\in\R_+$ to the original output (the same result for finding the same guarantee for a prediction which is \emph{smaller} will be symmetricly opposite). The first part of the algorithm will be identical to the case of binary classification with a \emph{positive} outcome (the other use-case will align with a \emph{negative} outcome).

\begin{algorithm}
	\renewcommand{\algorithmicfor}{\textbf{for each}}
	\textbf{Input:} NAM $\nn$, input $\x\in\R^n$, perturbation radius $\nnPertRadius\in\R_+$
	\caption{Regression: Parallel Interval Importance Sorting}
 	\label{alg:parallel_sorting_part_1_appendix}
	\begin{algorithmic}[1]
		\For {feature $i \in [n]$ \emph{in parallel}} 
\State Extract initial bounds $\alpha_i,\beta_i$ for $f_i(\tilde{\x}_i)$ such that $\tilde{\x}_{i} \in  \mathcal{B}_p^{\epsilon_p}(\x_{i})$
\State $l_i\gets \alpha_i$, $u_i\gets \beta_i$
\While{True}
\State $m_i \gets \frac{l_i+u_i}{2}$
\If{verify( $\forall \tilde{\x}_{i} \in  \mathcal{B}_p^{\epsilon_p}(\x_{i}), \ f_i(\tilde{\x}_i)\geq m_i$ )}
\State $l_i \gets m_i$
\Else
\State $u_i \gets m_i$
\EndIf
\State $\Delta l_i\gets f_i(\x_{i})-l_i$ \ ; \ $\Delta u_i\gets f_i(\x_{i})-u_i$
\If{for all $i\neq j$ it holds that: $\Delta u_i\geq \Delta l_j$ or $\Delta u_j\geq \Delta l_i$}
\State \textbf{Break} 
\EndIf
\EndWhile		\renewcommand{\algorithmicfor}{\textbf{for}}
		\EndFor \\
\Return $\arg\text{sort}([({\Delta l}_1,{\Delta u}_1),\, \ldots,\, ({\Delta l}_n,{\Delta u}_n)])$ in ascending order
\end{algorithmic}
    \label{alg:greedy-min-explanation}
\end{algorithm}

Now we move on to the second part of the algorithm for the regression case:

\begin{algorithm}
	\renewcommand{\algorithmicfor}{\textbf{for each}}
	\textbf{Input:} NAM $\nn$, input $\x\in\R^n$, perturbation radius $\nnPertRadius\in\R_+$, output deviation $\delta\in\R_+$
	\caption{Regression: Greedy Cardinally-Minimal Linear Explanation Search}
 	\label{alg:naive_nam_algorithm_appendix}
	\begin{algorithmic}[1]
		\State $\explanation \gets [n]$
		\For{feature $i \in [n]$, ordered by \cref{alg:parallel_sorting_part_1_appendix}}
            \Comment{$\mySuff{\nn}{\explanation,\,\delta}$ holds}
			\If{$\mySuff{\nn}{\explanation\setminus\{i\},\,\delta}$} 
				\State $\explanation \gets \explanation\setminus \{i\}$
			\EndIf
		\renewcommand{\algorithmicfor}{\textbf{for}}
		\EndFor
		\State \Return $\explanation$ \Comment{$\explanation$ is a \emph{cardinally-minimal} explanation concerning $\langle f,\, \x,\,\delta,\, \epsilon_p\rangle$}
	\end{algorithmic}
    \label{alg:greedy-min-explanation}
\end{algorithm}

which is the same algorithm we used for binary classification (\cref{alg:naive_nam_algorithm}), but where now the evaluation of \emph{suff} evaluates to checking whether fixing the feature subset $\explanation$ ensures that the output remains larger than the original input by less than $\delta$. Given this result, this can be extended to our logarithmic version (\cref{alg:binary_nam_algorithm}) as well:

\begin{algorithm}
	\renewcommand{\algorithmicfor}{\textbf{for each}}
	\textbf{Input:} NAM $\nn$, input $\x\in\R^n$, perturbation radius $\nnPertRadius\in\R_+$,  output deviation $\delta\in\R_+$
	\caption{Regression: Greedy Cardinally-Minimal Logarithmic Explanation Search}
 	\label{alg:binary_nam_algorithm_appendix}
	\begin{algorithmic}[1]
		\State $F \gets$ total order of features (\cref{alg:parallel_sorting_part_1_appendix})
        \State {$l\gets 1$ \ ; \ $u \gets n$}
        \While{$l \neq u$}
        \State $m \gets \floor{\frac{l+u}{2}}$         
			\If{$\mySuff{\nn}{\{F[1],\ldots, F[m]\},\,\delta}$}
                    \State $l\gets m$
                \Else
                    \State $u \gets m-1$
			\EndIf
		\EndWhile
		\State  $\explanation \gets \{F[1],\ldots F[m]\}$ 
        \State \Return $\explanation$ \Comment{$\explanation$ is a \emph{cardinally-minimal} explanation concerning $\langle f,\, \x,\,\delta,\,\epsilon_p\rangle$}
	\end{algorithmic}
\end{algorithm}

\section{Experimental Details and Ablation Studies}
\label{app:additional-experiments}

\subsection{Dataset and Experimental Details}

\textbf{Datasets.} 
We evaluate our approach on four widely used benchmark datasets~\citep{agarwal2021neural,radenovic2022neural}.
 covering both classification and regression tasks. 
The Breast-Cancer dataset contains $569$ samples with $30$ numeric features per sample. 
to classify tumors as malignant or benign. 
The CREDIT dataset includes $1,000$ samples with $20$ attributes each, 
for assessing loan repayment probability. 
The FICO HELOC dataset comprises $10,459$ samples with $23$ financial and demographic features, 
for predicting creditworthiness. 
These datasets collectively allow us to evaluate the performance and robustness of our method across different problem types, input dimensions, and domain characteristics.
\textbf{Models.}
We trained 3 binary classification NAMs on the first three datasets, and a regression NAM on the last model. We follow the standard architectures in ~\citep{agarwal2021neural,radenovic2022neural}, and train for each feature a network with hidden layers of size ($64,64,32$). The accuracy of the models for Breast Cancer, CREDIT, and FICO HELOC are $97.37\%$, $94.92\%$, and $69.02\%$, respectively. 

\textbf{Evaluation.}
All presented results are averaged over $50$ samples with a time out of $600$s, perturbations are w.r.t to the $\ell_\infty$-norm on the normalized input and a perturbation radius $\epsilon=0.5$ is used if not stated otherwise.
We filtered trivial samples where, e.g., all features are returned as an explanation.
For the Breast Cancer dataset, we use $\epsilon=0.01$ for an interesting comparison to the local strategies.
Our experiments are running on a Ubuntu 24.04 machine with 64GB RAM and 13th Gen Intel(R) Core(TM) i7-1365U. 
The CREDIT experiments are run on a Ubuntu 24.04 machine with a Intel(R) Xeon(R) Platinum 8380 CPU @ 2.30GHz and two NVIDIA A100-PCIE with 40GB.
If not otherwise specified, all experiments were limited to 32 CPU threads.
In figures, we show the median along with a shaded region depicting the $25/75\%$ quantiles.

\subsection{Ablating the Perturbation Radius}

In formal XAI, the generated explanations heavily depend on the chosen perturbation radius $\epsilon$.
In \cref{tab:vary-epsilons}, we show how the size and generation time of the explanation change with varying $\epsilon$. Generally, the explanation size and the generation time increase with $\epsilon$, which is expected as previously obtained minimal explanations indeed become insufficient and the verification queries become harder to solve as $\epsilon$ is increased.
Please note that for $\epsilon=0.01$, insufficiently many samples for proper averages were found where the explanation is non-trivial on CREDIT and FICO.

\begin{table}
  \centering
  \caption{Varying perturbation radius $\epsilon$ on all datasets.}
  \label{tab:vary-epsilons}
 \resizebox{\linewidth}{!}{
	\begin{tabular}{C R@{$\pm$}L R@{$\pm$}L R@{$\pm$}L R@{$\pm$}L R@{$\pm$}L R@{$\pm$}L}
		\\
		\toprule
		       &
            \multicolumn{4}{c}{Breast Cancer} & \multicolumn{4}{c}{CREDIT} &  \multicolumn{4}{c}{FICO HELOC} \\
           \cmidrule(lr){2-5}\cmidrule(lr){6-9}\cmidrule(lr){10-13}   
            \epsilon & \multicolumn{2}{c}{Size ($\downarrow$)} & \multicolumn{2}{c}{Time [s] ($\downarrow$)} & 
            \multicolumn{2}{c}{Size ($\downarrow$)} & \multicolumn{2}{c}{Time [s] ($\downarrow$)} & \multicolumn{2}{c}{Size ($\downarrow$)} & \multicolumn{2}{c}{Time [s] ($\downarrow$)} \\
		\midrule
		  0.01  &           4.00 &   4.24         &            35.60 &   1.34      &    \multicolumn{2}{c}{---}         &           \multicolumn{2}{c}{---}     & \multicolumn{2}{c}{---}         &           \multicolumn{2}{c}{---}     \\   
		  0.1  &   4.45 &   3.98         &           115.08 &  73.48      &  1.79 &   1.07         &            76.76 &  88.21     & 2.77 &   1.62         &           131.31 & 141.90  \\   
		  0.2  &   6.29 &   3.50         &           143.81 & 121.27    &    2.80 &   2.07         &            97.16 &  34.45    &  3.88 &   1.58         &           136.82 &  77.97  \\   
		  0.5  &    9.76 &   2.75         &           146.01 &  64.76     &     3.76 &   2.62         &           132.67 &  36.76         &     5.59 &   1.80         &           317.92 & 222.07   \\   
		\bottomrule
	\end{tabular}
}
\end{table}

\subsection{Number of Processed Features over Time}

In this experiment, we demonstrate how our approach --- after the initial sorting phase --- processes the features much quicker to obtain a cardinally-minimal explanation, as it only requires a logarithmic number of verification queries to do so. This even outperforms approaches that obtain subset-minimal approaches, including the time needed to sort the features (\cref{fig:nam_proc_feats_over_time}).

\begin{figure}
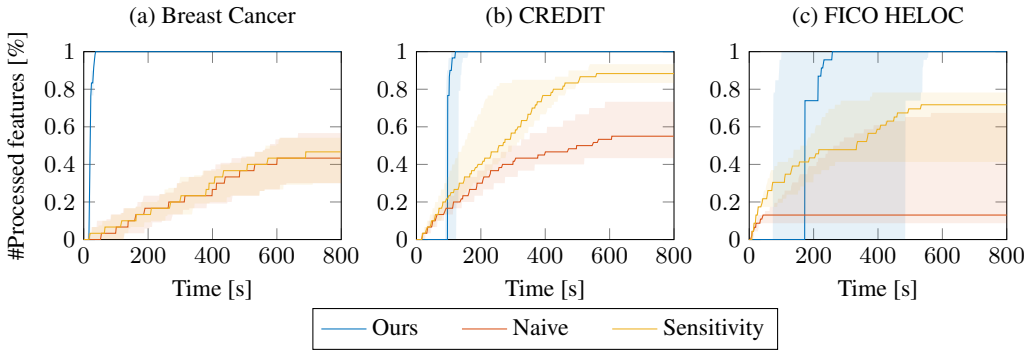

  \centering
    \includetikz[noexport]{./figures/quantitative/nam_proc_feats_over_time}
\caption{Number of processed features over time.}
\label{fig:nam_proc_feats_over_time}
\end{figure}

\subsection{Understanding the Parallelization: Ablating the Number of Processors}

\begin{figure}
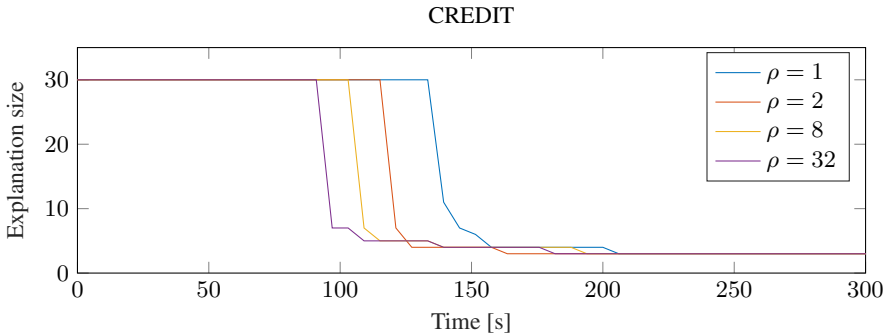

    \centering
    \includetikz[noexport]{./figures/quantitative/nam_diff_cpus_credit}
  \caption{A comparison of explanation generation with different number of processors $\numProcessors$.}
\label{fig:paralleization}
\end{figure}

A key factor influencing the runtime of our approach --- particularly the sorting of univariate component importances in  \cref{alg:parallel_sorting_part_1} is the \emph{number of processors} allocated for parallelizing the logarithmic binary search. To assess this effect, we conducted an ablation study with varying processor counts.

\cref{fig:paralleization} illustrates the impact of parallelization on both explanation size and computation time. In particular, the time to sort the features according to their importance (\cref{alg:parallel_sorting_part_1}) can be reduced as the number of processors $\numProcessors$ increases, as the bound refinement can be parallelized. In contrast, the subsequent explanation generation (\cref{alg:binary_nam_algorithm}) is barely impacted by the number of processors $\numProcessors$.
Importantly, even with only a single processor (i.e., no parallelization), our algorithm still computes \emph{cardinal}-minimal explanations --- a harder task than subset-minimal ones --- thus improving explanation size and computation time by design (compare to \cref{fig:nam_explain_over_time}). 

\subsection{Ablation Study on Near Identical Features}\label{app:Near-Identical-Features}

Our approach relies on an efficient importance sorting of the features (\cref{sec:importance-sorting}).
This subsection provides further insights into the required precision to effectively sort them (\cref{prop:part1-complexity}).

\newcommand{\xilabel}[0]{$\log\big(\min_{i \in [n]}(\xi_i)\big)$}
\begin{figure}
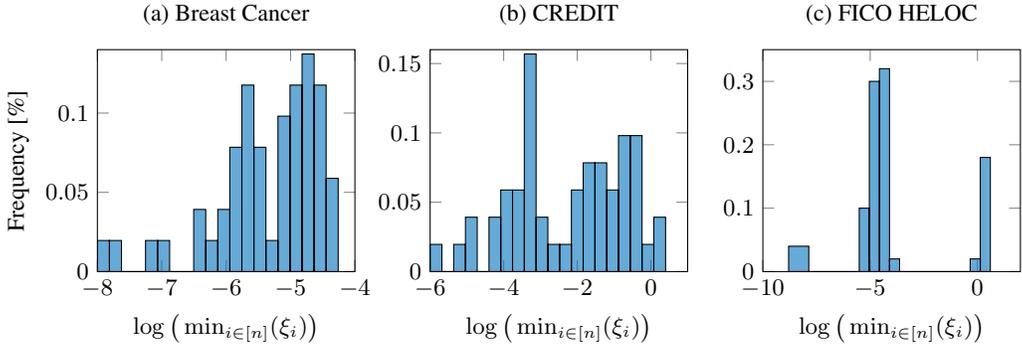

    \centering
    \includetikz[noexport]{./figures/xi/nam-xi-hist}
    \caption{Frequencies of the required precision to sort the features according to their importance.}
    \label{fig:nam-xi-hist}
\end{figure}

\begin{figure}
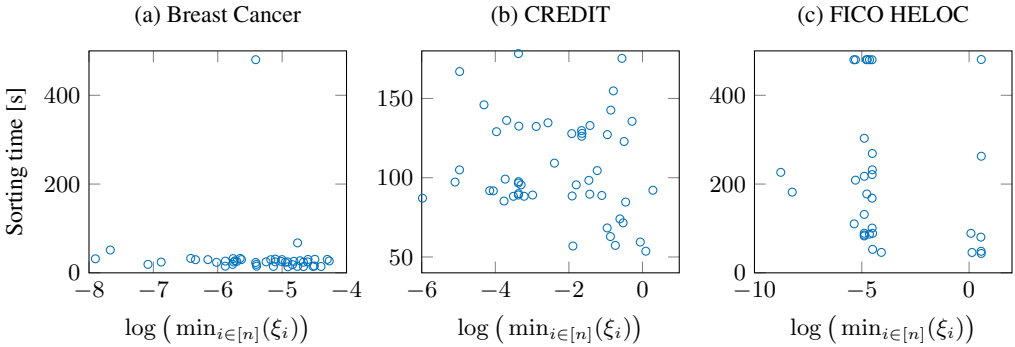

    \centering
    \includetikz[noexport]{./figures/xi/nam-xi-vs-sorting-time}
    \caption{Impact of the required precision on the sorting time.}
    \label{fig:nam-xi-vs-sorting-time}
\end{figure}

We first show the frequency of each required precision in our datasets in \cref{fig:nam-xi-hist}. 
The impact of these precisions on the sorting time is visualized in \cref{fig:nam-xi-vs-sorting-time}.
These results show that the required precision is not arbitrarily small (smallest around $10^{-8}$), 
and the sorting time is also influenced by other factors, such as the verifier, in practice. This implies
that, in practice, sorting based on the parameter $\xi_i$ is highly effective, which aligns with, and helps explain, the strong empirical performance we observe when computing cardinally minimal explanations for NAMs.

To isolate the effect, we created an ablation study using a synthetic neural network with near identical feature networks $f_i$, $i\in[2]$,
where only the bias in the last layer of $f_2$ is shifted slightly.
Thus, a large bias shift corresponds to no overlap between the computed bounds, and a small bias shift results in nearly identical bounds.
We use the same network architecture as the ones trained on the real data sets, and do not apply any optimizations discussed in \cref{app:importance-sorting-optimization} in this study.
We record the sorting time for different bias values starting from $1$ (no overlap even for initial bounds) down to $10^{-7}$ (almost entirely overlapping) in \cref{tab:ablation-near-identical-features}.
We observe that the sorting time only increases linearly with the order of magnitude of the bias shift (corresponding to $\xi_i$) due to the binary search, confirming our result in \cref{prop:part1-complexity}. 

\begin{table}
  \centering
  \caption{Influence of near identical features on the sorting time on synthetic example.}
  \label{tab:ablation-near-identical-features}
  \begin{tabular}{R R}
    \toprule                                             
    \text{Bias Shift} & \text{Sorting Time [s]} \\
    \midrule                                             
                    1 &                    0.08 \\
                  0.1 &                    7.58 \\
                 0.01 &                   12.71 \\
                0.001 &                   16.24 \\
               0.0001 &                   20.95 \\
              0.00001 &                   28.37 \\
             0.000001 &                   31.54 \\
            0.0000001 &                   36.57 \\
    \bottomrule                                             
  \end{tabular}
\end{table}

\section{Disclosure: Use of Large Language Models (LLMs)}
\label{app:llm-disclosure}
A large language model (LLM) was engaged solely as a writing aid to polish language and enhance expression. It played no role in generating research ideas, designing the study, conducting the analysis, or interpreting results. These aspects were performed entirely by the authors.

\end{document}